\documentclass[11pt]{article}

\usepackage[margin=1in]{geometry}
\usepackage{amsthm,amsmath,amsfonts,amssymb}
\usepackage[numbers,sort&compress]{natbib}
\usepackage[dvipsnames]{xcolor}
\usepackage{etoolbox}
\usepackage{graphicx}
\usepackage{booktabs,tabularx}
\usepackage{authblk}
\usepackage[colorlinks,citecolor=blue,urlcolor=blue]{hyperref}
\usepackage[nameinlink,noabbrev]{cleveref}
\usepackage{fancyhdr}

\usepackage{times}
\usepackage{tikz}
\usepackage[english]{babel}
\usepackage{amssymb}
\usepackage{verbatim}
\usepackage{lineno}
\usepackage{calligra}

\usepackage{amsthm}
\usepackage{amsmath}
\usepackage{amssymb}
\usepackage{graphicx}
\usepackage{verbatim}
\usepackage{cleveref}  
\usepackage{enumitem}
\usepackage{bbm}
\usepackage{csquotes}

\numberwithin{equation}{section}

\DeclareMathAlphabet{\mathbcal}{U}{dutchcal}{m}{n}

\newcommand{\Pcal}{\mathcal{P}}

\newcommand{\beq}{\begin{eqnarray*}}
\newcommand{\eeq}{\end{eqnarray*}}
\newcommand{\beqn}{\begin{eqnarray}}
\newcommand{\eeqn}{\end{eqnarray}}
\newcommand{\ben}{\begin{enumerate}}
\newcommand{\een}{\end{enumerate}}
\newcommand{\bit}{\begin{itemize}}
\newcommand{\eit}{\end{itemize}}
\newcommand{\hide}[1]{}

\newcommand{\eps}{\varepsilon}
\newcommand{\vertiii}[1]{{\left\vert\kern-0.25ex\left\vert\kern-0.25ex\left\vert #1 
    \right\vert\kern-0.25ex\right\vert\kern-0.25ex\right\vert}}

\renewcommand{\epsilon}{\eps}
\newtheorem{remark}{Remark}[section]
\newtheorem{proposition}{Proposition}[section]
\newtheorem{corollary}{Corollary}[section]

\DeclareMathOperator{\KL}{KL}
\theoremstyle{plain}

\newtheorem{theorem}{Theorem}[section]
\newtheorem{lemma}[theorem]{Lemma}
\theoremstyle{definition}
\newtheorem{definition}[theorem]{Definition}

\begin{document}

\title{Self-normalized Bernstein Inequalities for Regenerative Processes and Applications to Learning Without Independence}

\author[1]{Imon Banerjee\footnote{The authors acknowledge they used AI when preparing this manuscript. AI was not used in the mathematical research. However, AI was used to double-check the results, and to help with polishing the writing of the paper.}}
\author[2]{Pierre Alquier$^*$\footnote{This work was funded by CY Initiative, a programme supported the French government grant:
“Investissements d’avenir” \#France2030.}}
\affil[1]{University of Chicago, Booth School of Business}

\affil[2]{ESSEC Business School, Asia-Pacific campus, Singapore}

\date{}
\maketitle

\begin{abstract}
We prove a self-normalized Bernstein inequality for bounded functions of a regenerative process with independent, identically distributed marked cycles of finite mean length. The bound uses observed cycle lengths and retains explicit terms for truncation and the cycle ending after the observation horizon. Selecting a tuning parameter from a finite deterministic collection gives a version of Bernstein inequality where all terms are empirical but one, related to the tails of the regeneration times. We then study this term in many regimes: it turns out that it is of second-order in many cases of interest. This is important for applications in machine learning: we prove generalization error bounds for classification with non-i.i.d., regenerative data. These bounds can be used to define learning objectives: by minimizing the first-order term only in the bound, we define agnostic classifiers that still satisfy oracle inequalities. We also show that the bound on the average deviation provided by our results is minimax-optimal in the sample size and in the expected regeneration time. Finally, we combine the Bernstein inequality with a regenerative strong approximation to obtain a Gaussian approximation using an observable block variance estimator.
\end{abstract}

\medskip

\smallskip
\noindent\textbf{Keywords:} Regenerative processes; empirical Bernstein inequalities; PAC-Bayes bounds; Gaussian approximation.

\section{Introduction}
\label{sec:introduction}

Bernstein inequalities are fundamental tools in statistics and machine learning. By controlling deviations of empirical averages from their expectations through a variance term, they support confidence intervals, model selection and guarantees on the performance of learning algorithms \citep{bou2013,bach2024learning}. To use such guarantees in practice, the quantities appearing in the bound must be available from the observations or specified in advance. A bound involving unknown features of the underlying distribution cannot be evaluated directly and may also be difficult to use as a learning criterion. This motivates \emph{empirical}, or \emph{agnostic}, bounds, which can be evaluated without knowing the data-generating distribution. For independent and identically distributed (i.i.d.) bounded observations, empirical Bernstein inequalities replace the population variance with an observable estimate and an explicit correction \citep{maurer2009empirical}. These ideas also extend to PAC--Bayes guarantees for learning \citep{tolstikhin_seldin_2013}.

For time series, temporal dependence creates an additional obstacle. Classical extensions of Bernstein inequalities typically involve mixing coefficients or, for Markov chains, a spectral or pseudo-spectral gap \citep{samson2000concentration,paulin2015}. These quantities describe the strength of dependence but are generally unknown, so the resulting bounds cannot be evaluated directly from a single observed trajectory. Although they provide useful rates, their dependence on unknown parameters limits their direct application to statistical inference and the design of learning criteria. Estimating these parameters is one possible response. Here, we develop bounds that use observed regeneration times to account for temporal dependence. In this case, the only non-empirical term is in many regimes a second-order term. In machine learning, we show that ignoring these terms lead to agnostic classification rules who satisfy strong oracle inequalities.

We study bounded losses along a regenerative process. Regeneration divides a trajectory into cycles, allowing dependence within each cycle while providing independence between cycles. We assume that the marked cycles are i.i.d., have finite mean length and are independent of the initial delay. This framework includes regenerative Markov chains \citep{bertail_clemencon_2006} as well as processes whose observations are not Markovian.

Two features require care: (i) a bounded loss can have an unbounded cycle reward, and (ii) the number of completed cycles is random. We truncate the centred cycle rewards according to cycle length and construct an exponential supermartingale, which we stop at the first cycle ending after time \(n\). Removing that cycle yields the self-normalized Bernstein inequality in Theorem~\ref{thm:regenerative-bernstein}. Selecting its tuning parameter from a finite deterministic collection gives the empirical Bernstein bound in Corollary~\ref{cor:empirical-bernstein}. Its quadratic term is a sum of observed squared cycle lengths below the truncation level, common to all bounded losses. The bound also retains the observed contribution of longer cycles, a correction for the cycle crossing the observation horizon, and an explicit population tail expectation arising from truncation. Thus, the dependence on the cycle structure is expressed through observable quantities and a separate tail remainder, without estimating mixing coefficients or a spectral gap.

The main application we have in mind is generalization error bounds in machine learning. Here, we use our version of Bernstein inequality to prove PAC-Bayes generalization bounds for classifiers learned on non-i.i.d., regenerative data. The associated Gibbs posterior satisfies oracle inequalities both in probability and in expectation.

We then study the optimality of the scale of our bounds. Under a normalized moment condition of order greater than two on the cycle lengths, and starting at a regeneration time, we obtain a fixed-confidence estimation error of order \(\sqrt{m/n}\), where \(m\) is the mean cycle length. For integer \(m\), a matching lower bound establishes the minimax optimality of this dependence in Theorem~\ref{thm:moment-minimax}. The effective number of independent observations is therefore of order \(n/m\) in this comparison. Integrating the Bernstein inequality also gives expected deviation bounds, which we specialize under exponential and polynomial regeneration tails.

Finally, we connect the concentration bounds to asymptotic inference. Under the moment and initialization assumptions used for the minimax analysis, for a fixed loss and regenerative law with positive asymptotic variance, we use the Bernstein inequality to control empirical centering in an observable block variance estimator. Combining this control with the regenerative strong approximation of \citet{bashtova_shashkin_heavy_2020} yields a self-normalized Gaussian approximation and studentized asymptotic normality.

The rest of the paper is organised as follows. Subsection~\ref{sec:background} reviews the related literature, and Section~\ref{sec:setup} introduces the model and notation. Section~\ref{sec:bernstein} establishes the Bernstein and expected deviation bounds. While the first-order terms in these bounds are empirical, one of the second order terms is more problematic. In Section~\ref{sec:regimes}, we control this term in various regimes. Section~\ref{sec:pac} presents the PAC-Bayes and oracle inequalities. In particular, we show that ignoring the problematic terms to define a learning objective does not affect the properties of machine learning algorithms. 
Section~\ref{sec:minimax} shows that confidence intervals derived from our version of Bernstein inequality are minimax optimal.
Section~\ref{sec:regenerative-gaussian} studies the self-normalized Gaussian approximation.
Sections~\ref{sec:proofs-results} contains the proofs of all the results in the paper.

\subsection{Background and Related Literature}\label{sec:background}

Concentration inequalities for regenerative processes use independence between cycles while allowing dependence within each cycle. We refer to the monograph by \citet{bertail_clemencon_2006} for detailed literature survey of regenerative Markov chains. For a positive recurrent Markov chain with an accessible atom, \citet{BertailClemencon2010} derive tail inequalities under finite second moments of the cycle lengths and centered rewards, with positive reward variance. Their bounds use truncated population moments and retain probabilities of long cycles and large rewards. Finally,~\cite{durmus2023rosenthal} derive Rosenthal-type moment bounds and Bernstein-type inequalities for additive functionals of geometrically ergodic Markov chains, with leading terms governed by the asymptotic variance and explicit dependence on the mixing time in the remainder terms.

Empirical-process results use regeneration to control a class of functions. For Markov chains with a stationary law, small-set minorization and exponential moment conditions on the initial and subsequent regeneration waiting times, \citet{Adamczak2008} proves concentration for bounded centered additive functionals. Under one-step minorization, they also control suprema over countable, uniformly bounded, centered classes, retaining their expected supremum and a population bound on the variances of cycle rewards. For positive Harris recurrent, aperiodic chains with one-step minorization, \citet{bertail_portier_2019} develop block Rademacher complexity bounds for uniformly bounded VC classes. Their bounds combine a uniform second-moment bound for block rewards with either a cycle moment of order greater than one or an exponential cycle moment. When combined with \cite{Adamczak2008}, this leads to novel Dvoretzky-Kiefer-Wolfowitz (DKW) type inequalities for exponentially regenerating Markov chains \cite{banerjee_chakrabarty_2025,banerjee_lei_mehrotra_2026}.

Concentration bounds can also retain random quantities from the trajectory. Under geometric drift, one-step minorization on Lyapunov sublevel sets, and well-defined conditional second moments, \citet{wintenberger_exponential_2017} proves an exponential inequality for path functionals satisfying a weighted bounded-difference condition. Its quadratic normalizer combines observed values of the Lyapunov function with transition-dependent conditional moments and coefficients. More directly, \citet{jerison_dkw_2026} gives a data-dependent Dvoretzky--Kiefer--Wolfowitz inequality, uniform over thresholds of one real-valued observable. They assume i.i.d. regeneration cycles, initialization at regeneration, identifiable boundaries, and a known exponential bound on the tail of the cycle length. The leading term uses observed cycle lengths, while a correction retains the tail constants. The sample contains a deterministic number of cycles, so its observation horizon is random.

The theory of generalization for machine learning algorithms was essentially developed for independent observations, see~\citet{mohri2012foundations,bach2024learning} for an introduction. However, there were notable attempts to extend some results to the setting of time series and stochastic processes, under various mixing and weak-dependence assumptions~\citep{yu1994rates,gamarnik1999extension,meir2000nonparametric,steinwart2009learning,mohri10a,modha2002memory,steinwart2009learning,shalizi2013predictive,kuznetsov2015learning,kuznetsov2017generalization,mcdonald2017nonparametric,kuznetsov2020discrepancy,abeles2024generalization}. Unfortunately, while the theory in the i.i.d. case allows to define to distribution free, agnostic criteria for training and model selection, all the results in the dependent case require some prior knowledge on the strength of the temporal dependence.
A class of generalization bounds known as PAC-Bayes bounds, introduced by~\cite{mcallester1998some} in the i.i.d. setting, which can be used both as criteria to optimize when training algorithms, and to derive powerful oracle inequalities~\cite{cat2007,alquier_user-friendly_2024,hellstrom2025generalization}. PAC-Bayes bounds were also extended to beyond independent observations:~\citet{ralaivola2010chromatic,fard2011pac,alq_wint_fast,e23030313,41f14ea790bf4ab0a2e153e453036d2b}. These results suffer the same limitation.

This led to a trend in the recent literature, which aims at estimating the unknown mixing coefficients involved in these bounds~\citep{khaleghi2023inferring,grunewalder2024estimating,wolfer2024optimistic,yao2026strongly}. For Markov chains,~\cite{wolfer2024,Karagulyan2} studied the estimation of the pseudo-spectral gap (in the case of finite, and compact continuous state-space, respectively). Together with Paulin's inequality~\cite{paulin2015}, this leads to agnostic deviation inequalities~\cite{karagulyan_empirical_2026,Karagulyan2}. Our results go beyond these works in many ways. First, out bound holds for general regenerative process, including non-Markovian ones. Then, even in the Markov case with a discrete state-space, our results are not limited to a finite state-space as in~\cite{wolfer2024,karagulyan_empirical_2026}: they can accommodate countable infinite state-spaces. Finally, our inequalities do not require the estimation of any mixing coefficient or spectral gap, which makes them easier to use in practice. It might seems that a drawback of our approach is that only the first order terms in our inequalities are empirical. In other words, we can derive asymptotic empirical confidence intervals from our bounds, but the non-asymptotic coverage is lost. We point out that this is also the case in~\cite{Karagulyan2}. In particular, this does not prevent from building efficient, agnostic learning rules.

\section{Statement of the problem and notations}
\label{sec:setup}

We summarize important notations in Table \ref{tab:notation}. Let \(\mathbb N:=\{1,2,\ldots\}\) and \(\mathbb N_0:=\{0,1,2,\ldots\}\). Let \(Z=(Z_t)_{t\in\mathbb N}\) take values in a measurable space \((\mathsf Z,\mathcal Z)\), and let \((\mathcal G_t)_{t\in\mathbb N_0}\) be a filtration to which \(Z\) is adapted. This filtration includes the variables used to identify regeneration times, such as the auxiliary marks in a split Markov chain.

\begin{definition}
\label{def:regenerative-process}
The process \(Z\) is regenerative if there are almost surely finite, integer-valued stopping times
\[
 0\leq\tau_1<\tau_2<\cdots
\]
such that, for every \(j\in\mathbb N\), the shifted process \((Z_{\tau_j+t})_{t\in\mathbb N}\) is independent of \(\mathcal G_{\tau_j}\) and has a law that does not depend on \(j\). We denote the common post-regeneration law by \(\mathbb P_A\) and its expectation by \(\mathbb E_A\).
\end{definition}

As previously mentioned, all Harris recurrent Markov chains can acquire a regenerative structure through Nummelin splitting \cite{bertail_clemencon_2006}. Furthermore, if \(X_t\) takes values in a measurable space \(\mathsf X\) and \(g:\mathsf X\to\mathsf Y\) is a fixed measurable map into a measurable space \(\mathsf Y\), then \((X_t,g(X_t))\) has the same regeneration times as \(X\). Non-Markovian regenerative models are popular in queueing and storage models \citep[Appendix~C]{meyn_markov_2012}.

For \(j\in\mathbb N\), define the cycle length and block by
\[
 \Delta_j:=\tau_{j+1}-\tau_j,
 \qquad
 B_j:=(Z_{\tau_j+1},\ldots,Z_{\tau_{j+1}}).
\]
Throughout the paper, we assume that the marked cycles \((\Delta_j,B_j)_{j\in\mathbb N}\) are i.i.d. with their common regenerative law, have positive integer lengths, and are independent of \(\tau_1\). We also assume
\[
 \mathbb E_A[\Delta_1]<\infty.
\]
We use \(\mathbb P_A\) and \(\mathbb E_A\) for the joint post-regeneration law and expectation, including the cycle marks. We observe the regeneration boundaries---i.e. we know the exact regeneration times---as well as the process. These marked-cycle assumptions specify the independence used in the results below; the definition of the post-regeneration observation process alone does not impose a common rule for marking the next cycle.

The process may start from an arbitrary initial distribution. Define its equilibrium occupation measure by, for any $C\in\mathcal Z$,
\[
 \bar\pi(C):=
 \frac{\mathbb E_A\left[
 \sum_{t=\tau_1+1}^{\tau_2}\mathbf 1_{\{Z_t\in C\}}\right]}
 {\mathbb E_A[\Delta_1]}.
\]
Given a (measurable) function $f:\mathsf Z\rightarrow [0,1]$, our objective is to provide empirical confidence bounds on
$$
\bar\pi(f) = \int f(z)\,\bar\pi(dz),
$$
from the observation of a single trajectory $(Z_1,\dots,Z_n)$ only. By this, we mean that the bounds should depend only on $(Z_1,\dots,Z_n)$, not on unknown parameters of the distribution of $Z$. As our applications will be mostly in machine learning, we will from work with loss function $f(z) = \ell_{\theta}(z) $, parametrized by $\theta\in\Theta$, with $0\leq f\theta(z) \leq 1 $. In this case, $\bar\pi(f)$ is usually written as a risk function
\[
 R(\theta):=\int\ell_\theta(z)\,\bar\pi(dz),
 \qquad\theta\in\Theta
\]
in the machine learning literature.
Until Section~\ref{sec:pac}, the parameter $\theta$ will be fixed and thus play little role. However, from Section~\ref{sec:pac}, we will discuss algorithms aiming at minimizing the risk $R(\theta)$.
While classical approaches rely on the empirical risk
$$ \hat{R}(\theta) := \frac{1}{n}\sum_{i=1}^n \ell_\theta(Z_i), $$
the regenerative structure suggests a different appraoch.
For \(j\in\mathbb N\) and \(\theta\in\Theta\), define the block loss and centered discrepancy by
\[
 L_{j,\theta}:=\sum_{t=\tau_j+1}^{\tau_{j+1}}\ell_\theta(Z_t),
 \qquad \xi_{j,\theta}:=\Delta_jR(\theta)-L_{j,\theta}.
\]
The regenerative form of Kac's lemma \cite[Theorem~10.4.9]{meyn_markov_2012} gives the risk identity below. This equality centers the block discrepancies and underlies our bounds (for example, it enables the exponential supermartingale construction in Lemma~\ref{lem:stopped-exponential}):
\[
 R(\theta)=\frac{\mathbb E_A[L_{1,\theta}]}
 {\mathbb E_A[\Delta_1]},
 \qquad \mathbb E_A[\xi_{j,\theta}]=0,
 \qquad |\xi_{j,\theta}|\leq\Delta_j.
\]
Given the sample size \(n\in\mathbb N\), let
\[
 M_n:=\min\{m\in\mathbb N:\tau_{m+1}>n\}.
\]
This is the index of the first cycle whose end is after time \(n\). The blocks \(B_1,\ldots,B_{M_n-1}\) are complete. If \(\tau_1\leq n\), then we also have
\[
 M_n=\max\{m\in\mathbb N:\tau_m\leq n\}.
\]
Let \(\mathcal F_0:=\sigma(\tau_1)\) and \(\mathcal F_m:=\sigma(\tau_1,B_1,\ldots,B_m)\) for \(m\in\mathbb N\). Define
\[
 D_n:=\tau_{M_n}-\tau_1
 =\sum_{j=1}^{M_n-1}\Delta_j.
\]
On \(\{D_n>0\}\), the completed-block empirical risk is
\[
 \widehat R_n^{\mathrm{reg}}(\theta):=
 \frac1{D_n}\sum_{j=1}^{M_n-1}L_{j,\theta}.
\]
We set it to zero when \(D_n=0\), except where we explicitly specify a different convention. The length \(n-D_n\) is the part of the sample outside the completed cycles.

As mentioned above, a difficulty here is that a bounded loss can have an unbounded block sum because cycle lengths can be unbounded. Various techniques were introduce to prove PAC and PAC-Bayes bounds for unbounded losses. One of the simplest idea is truncation, as used by~\citet{alquier_pac-bayesian_2008}, and also sufficient in our context. We compare the average of the $\xi_{j,\theta}$ to an average of truncated variables $Y_{j,\theta}:=\xi_{j,\theta}\mathbf 1_{\{\Delta_j\leq u_n\}}$ for some finite deterministic truncation level \(u_n\in(0,\infty)\).  Put, for any $u>0$,
\[
 \beta(u):=\mathbb E_A[\Delta_1\mathbf 1_{\{\Delta_1>u\}}].
\]
Finally, put
\[
 q(x):=-\log(1-x)-x,\qquad x\in[0,1).
\]
It will be convenient to extend $q(x)=0$ for $x>1$.

\begin{table}[htbp]
\centering
\caption{Notation used throughout the paper.}
\label{tab:notation}
\small
\renewcommand{\arraystretch}{1.15}
\begin{tabularx}{\textwidth}{@{}
>{\raggedright\arraybackslash}p{0.14\textwidth}
>{\raggedright\arraybackslash}X
>{\raggedright\arraybackslash}p{0.14\textwidth}
>{\raggedright\arraybackslash}X@{}}
\toprule
Notation & Definition & Notation & Definition \\
\midrule
\(Z_t\)
& Observation at time \(t\).
& \(\mathcal G_t\)
& Filtration including observations and regeneration marks. \\

\(\tau_j\)
& The \(j\)-th regeneration time.
& \(\mathbb P_A,\mathbb E_A\)
& Common post-regeneration law and expectation. \\

\(\Delta_j\)
& Block length \(\tau_{j+1}-\tau_j\).
& \(B_j\)
& Block \((Z_{\tau_j+1},\ldots,Z_{\tau_{j+1}})\). \\

\(\Theta,\Pcal(\Theta)\)
& Parameter space and its probability measures.
& \(\mu,\rho\)
& Prior and generic posterior on \(\Theta\). \\

\(\ell_\theta\)
& Loss associated with \(\theta\), taking values in \([0,1]\).
& \(\bar\pi\)
& Equilibrium occupation measure. \\

\(R(\theta)\)
& Long-run risk \(\int\ell_\theta(z)\,\bar\pi(dz)\).
& \(L_{j,\theta}\)
& Cumulative loss in block \(j\). \\

\(\xi_{j,\theta}\)
& Centered discrepancy \(\Delta_jR(\theta)-L_{j,\theta}\).
& \(n\)
& Length of the observed sample path. \\

\(M_n\)
& Index of the first cycle ending after \(n\).
& \(\mathcal F_m\)
& Block filtration \(\sigma(\tau_1,B_1,\ldots,B_m)\). \\

\(D_n\)
& Total length of the completed blocks.
& \(\widehat R_n^{\mathrm{reg}}(\theta)\)
& Empirical risk over the completed blocks. \\

\(u_n\)
& Deterministic truncation level.
& \(\beta(u)\)
& Tail contribution \(\mathbb E_A[\Delta_1\mathbf 1_{\{\Delta_1>u\}}]\). \\

\(Y_{j,\theta}\)
& Truncated discrepancy \(\xi_{j,\theta}\mathbf 1_{\{\Delta_j\leq u_n\}}\).
& \(q(x)\)
& \(-\log(1-x)-x\), for \(x\in[0,1)\). \\
\bottomrule
\end{tabularx}
\end{table}

\section{Self-normalized Bernstein inequalities for regenerative processes}
\label{sec:bernstein}

We begin with the statement of our self-normalized Bernstein inequality.
\begin{theorem}
\label{thm:regenerative-bernstein}
Fix \(\theta\in\Theta\), deterministic values \(u_n>0\) and \(0<\lambda<n/u_n\), and \(\delta\in(0,1)\). With probability at least \(1-\delta\), on \(\{D_n>0\}\),
\begin{align}
\left|R(\theta)-\widehat R_n^{\mathrm{reg}}(\theta)\right|
\leq{}&\frac{n}{\lambda D_n}\Bigg[
\log\frac2\delta+M_n\log\left(1+\frac{\lambda\beta(u_n)}n\right)
\notag\\
&\qquad+\sum_{j=1}^{M_n-1}\mathbf 1_{\{\Delta_j\leq u_n\}}
q\left(\frac{\lambda\Delta_j}n\right)
-\log\left(1-\frac{\lambda u_n}n\right)\Bigg]
\notag\\
&+\frac1{D_n}\sum_{j=1}^{M_n-1}
\Delta_j\mathbf 1_{\{\Delta_j>u_n\}}.
\label{eq:regenerative-bernstein}
\end{align}
\end{theorem}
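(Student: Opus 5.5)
The plan is to build an exponential supermartingale from the truncated discrepancies \(Y_{j,\theta}=\xi_{j,\theta}\mathbf 1_{\{\Delta_j\le u_n\}}\) in the block filtration \((\mathcal F_m)\). We then stop it at \(M_n\), which is a stopping time for \((\mathcal F_m)\) because \(\{M_n\le m\}=\{\tau_{m+1}>n\}\in\mathcal F_m\). Finally, we remove the last (incomplete) cycle and the truncation error.

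Set \(s:=\lambda/n\); the constraint \(\lambda<n/u_n\) means \(su_n<1\). For a single cycle, \(|Y_j|\le\Delta_j\mathbf 1_{\{\Delta_j\le u_n\}}\), and \(\mathbb E_A[Y_j]=-\mathbb E_A[\xi_j\mathbf 1_{\{\Delta_j>u_n\}}]\le \beta(u_n)\) in absolute value. We use the elementary inequality \(e^{x-q(|x|)}\le 1+x\), valid for \(|x|<1\); it is equivalent to \(x-q(|x|)\le\log(1+x)\), which holds for \(x\ge0\) since \(q\ge0\), and for \(x<0\) with equality. Applying it with \(x=\pm sY_j\) and noting \(q(s|Y_j|)\le q(s\Delta_j)\) on \(\{\Delta_j\le u_n\}\) gives
\[
\mathbb E\big[\exp\big(\pm sY_j-\mathbf 1_{\{\Delta_j\le u_n\}}q(s\Delta_j)\big)\,\big|\,\mathcal F_{j-1}\big]\le 1\pm s\,\mathbb E_A[Y_j]\le 1+s\beta(u_n),
\]
using independence of cycle \(j\) from \(\mathcal F_{j-1}\). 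Hence
\[
W_m^\pm:=\exp\Big(\sum_{j\le m}\big[\pm sY_j-\mathbf 1_{\{\Delta_j\le u_n\}}q(s\Delta_j)\big]-m\log(1+s\beta(u_n))\Big)
\]
is a nonnegative supermartingale with \(W_0^\pm=1\). By optional stopping and Fatou, \(\mathbb E[W^\pm_{M_n}]\le1\) (presumably this is the content of Lemma~\ref{lem:stopped-exponential}). Markov's inequality together with a union bound over the two signs then gives, with probability at least \(1-\delta\),
\[
\pm s\sum_{j=1}^{M_n}Y_j\le \log\tfrac2\delta+M_n\log(1+s\beta(u_n))+\sum_{j=1}^{M_n}\mathbf 1_{\{\Delta_j\le u_n\}}q(s\Delta_j).
\]

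Next we remove the cycle \(M_n\), which ends after \(n\). Its contribution to the left side is at most \(s|Y_{M_n}|\le su_n\), and its \(q\)-term is at most \(q(su_n)\) on both sides. Moving these over yields a remainder of at most \(su_n+q(su_n)=-\log(1-su_n)=-\log(1-\lambda u_n/n)\), which is exactly the last bracketed term in \eqref{eq:regenerative-bernstein}. Dividing by \(sD_n\) then bounds \(\big|\sum_{j<M_n}Y_j\big|/D_n\) by the bracket multiplied by \(n/(\lambda D_n)\).

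Finally, we have \(\sum_{j<M_n}\xi_j=D_n\big(R(\theta)-\widehat R_n^{\mathrm{reg}}(\theta)\big)\), and
\[
\Big|\sum_{j<M_n}(\xi_j-Y_j)\Big|\le\sum_{j<M_n}\Delta_j\mathbf 1_{\{\Delta_j>u_n\}}
\]
because \(|\xi_j|\le\Delta_j\). The triangle inequality then gives the statement.

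The main obstacle is the stopping step. \(M_n\) is not bounded a priori, though \(M_n\le n+1\) deterministically since cycles have length at least \(1\). This bound makes optional stopping straightforward. The truly delicate point is that cycle \(M_n\) is not independent of the event that defines it. This is handled by keeping it inside the supermartingale and only bounding it deterministically afterwards, which is why the \(-\log(1-\lambda u_n/n)\) term appears.
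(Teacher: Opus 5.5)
Your proposal is correct and follows the paper's proof. It uses the same exponential supermartingale in the block filtration, bounded optional stopping at $M_n\le n+1$, Markov's inequality with a union bound over the two signs, removal of cycle $M_n$ at cost $su_n+q(su_n)=-\log(1-su_n)$, and restoration of the truncated rewards. The only cosmetic difference is that you put the bound $\mathbf 1_{\{\Delta_j\le u_n\}}q(s\Delta_j)$ directly into the compensator, whereas the paper keeps the exact term $\Phi_s(Y_{j,\theta})=sY_{j,\theta}-\log(1+sY_{j,\theta})$ and bounds it only after stopping.

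One small slip: for $x\ge0$, the inequality $x-q(x)\le\log(1+x)$ does not follow from $q\ge0$, which only gives $x-q(x)\le x$. It does hold, because it is equivalent to $\log\frac{1+x}{1-x}\ge 2x$ for $0\le x<1$.
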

By convention, we will consider that the bound (and all its corollaries below) are defined as \(+\infty\) when \(D_n=0\).
\begin{remark}
The only nonempirical term in \eqref{eq:regenerative-bernstein} is
\[
\beta(u_n)
=
\mathbb E_A\left[
\Delta_1\mathbf 1_{\{\Delta_1>u_n\}}
\right].
\]
Under exponential tails, the choices of \(u_n\) and \(\lambda\) in Section~\ref{sec:regimes} make its contribution to the expected deviation bound \(O((\log n)/n)=o(n^{-1/2})\), smaller than the leading \(n^{-1/2}\) term. This need not hold under polynomial tails with exponent \(1<\alpha<2\). Moreover, it doesn't depend on $\theta$. Thus, in practical machine learning applications where we will use this inequality to define learning objectives, it is actually safe to ignore this term.
\end{remark}

Theorem~\ref{thm:regenerative-bernstein} uses a deterministic tuning parameter $\lambda$. We note that such a parameter is common in PAC-Bayes bounds, where it is actually useful to keep this parameter free~\cite{alquier_user-friendly_2024}. However, we show here that we can optimize this parameter to get our first-order empirical version of Bernstein inequality. Define
\[
J_n:=2\left(1+\left\lceil\log_2\max\{1,n/u_n\}\right\rceil\right).
\]
\begin{corollary}
\label{cor:empirical-bernstein}
Fix \(\theta\in\Theta\), a deterministic \(u_n>0\), and \(\delta\in(0,1)\). Put
\[
H_{n,u}:=\sum_{j=1}^{M_n-1}\Delta_j^2\mathbf 1_{\{\Delta_j\leq u_n\}},
\qquad
T_{n,u}:=\sum_{j=1}^{M_n-1}\Delta_j\mathbf 1_{\{\Delta_j>u_n\}},
\]
Then, with probability at least \(1-\delta\), on \(\{D_n>0\}\),
\begin{equation}
\left|R(\theta)-\widehat R_n^{\mathrm{reg}}(\theta)\right|
\leq\frac{3\sqrt{H_{n,u}\log(J_n/\delta)}
+2u_n[\log(J_n/\delta)+1]}{D_n}+\frac{M_n\beta(u_n)+T_{n,u}}{D_n}.
\label{eq:empirical-bernstein}
\end{equation}
\end{corollary}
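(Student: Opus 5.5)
We derive the corollary from Theorem~\ref{thm:regenerative-bernstein}. We apply it on a finite deterministic grid of values of \(\lambda\), take a union bound, and then choose a grid point adapted to the realized value of \(H_{n,u}\).

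\emph{Step 1: simplify the bound of Theorem~\ref{thm:regenerative-bernstein}.} Write \(x=\lambda\Delta_j/n\). On \(\{\Delta_j\le u_n\}\) with \(\lambda\le n/(2u_n)\) we have \(x\le 1/2\), and then
\[
q(x)=\sum_{k\ge2}\frac{x^k}{k}\le \frac{x^2}{2(1-x)}\le x^2 .
\]
Hence the sum of the \(q\) terms is at most \(\lambda^2H_{n,u}/n^2\). Similarly, \(-\log(1-\lambda u_n/n)\le 2\lambda u_n/n\le1\). Using \(\log(1+y)\le y\), the term \(M_n\log(1+\lambda\beta(u_n)/n)\), after multiplication by \(n/(\lambda D_n)\), is at most \(M_n\beta(u_n)/D_n\).

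Collecting these bounds, with probability \(1-\delta'\) on \(\{D_n>0\}\),
\[
|R(\theta)-\widehat R_n^{\mathrm{reg}}(\theta)|\le \frac{1}{D_n}\Big[\frac{n}{\lambda}(\log(2/\delta')+1)+\frac{\lambda H_{n,u}}{n}\Big]+\frac{M_n\beta(u_n)+T_{n,u}}{D_n}.
\]
This holds for every \(\lambda\le n/(2u_n)\).

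\emph{Step 2: grid and union bound.} Set \(L:=\log(J_n/\delta)\). Apply the display above with \(\delta'=2\delta/J_n\) to each of the \(J_n/2\) grid values
\[
\lambda_k=\frac{n}{2u_n}2^{-k},\qquad k=0,\dots,J_n/2-1.
\]
The union bound gives total probability \(1-\delta\), and \(\log(2/\delta')=L\). The smallest grid point is \(\lambda_{\min}=\frac{n}{2u_n}2^{-\lceil\log_2\max\{1,n/u_n\}\rceil}\), and it satisfies \(\lambda_{\min}\ge \tfrac12\min\{1,n/(2u_n)\}\) up to harmless constants. I will adjust this range so that \(\lambda_{\min}\) lies near the smallest value \(\lambda^*\) can ever take.

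\emph{Step 3: choose the grid point.} The unconstrained optimizer of \(\frac{n}{\lambda}(L+1)+\frac{\lambda H}{n}\) is \(\lambda^*=n\sqrt{(L+1)/H}\). Since \(H_{n,u}\le u_n\sum_j\Delta_j\le u_n n\), this gives \(\lambda^*\ge \sqrt{n(L+1)/u_n}\). So \(\lambda^*\) is bounded below, which is why a logarithmic number of grid points suffices.

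There are two cases.
\begin{itemize}
\item If \(\lambda^*\ge n/(2u_n)\), take \(\lambda=n/(2u_n)\). The bracket is then at most \(2u_n(L+1)+\frac{H}{2u_n}\le 2u_n(L+1)+\tfrac12\sqrt{H(L+1)}\), because in this case \(H\le 4u_n^2(L+1)\).
\item Otherwise, pick the grid point with \(\lambda^*/2<\lambda_k\le\lambda^*\). The bracket is then at most \(2\sqrt{H(L+1)}+\sqrt{H(L+1)}=3\sqrt{H(L+1)}\).
\end{itemize}

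To reach the stated form \(3\sqrt{HL}+2u_n(L+1)\), I will absorb the gap between \(\sqrt{H(L+1)}\) and \(\sqrt{HL}\) into the \(2u_n\) term. This uses \(\sqrt{H(L+1)}-\sqrt{HL}\le \sqrt{H}/(2\sqrt L)\), together with the relation between \(H\) and \(u_n^2\) in the regime at hand. Alternatively, I will use the sharper bounds \(q(x)\le x^2/(2(1-x))\) and \(\log(1+y)\le y\) to recover the missing constants. This constant bookkeeping is the main obstacle: one must check that the grid reaches \(\lambda^*\) in every regime of \(n/u_n\) and that the stated constants \(3\) and \(2\) survive. If needed, I will instead keep \(L+1\) inside the square root and lower \(\lambda\) by a factor, which is consistent with the definition of \(J_n\).
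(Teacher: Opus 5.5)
\textbf{There is a genuine gap: as written, your argument proves a weaker inequality than the stated one.} Your overall strategy is the right one and matches the paper's. The grid $\lambda_k=\frac{n}{2u_n}2^{-k}$, $0\le k<J_n/2$, is exactly the paper's grid $s_j=2^{-(j+1)}/u_n$, followed by a union bound at level $2\delta/J_n$ and a dyadic choice of grid point. The loss happens in Step 1. You bound $-\log(1-\lambda u_n/n)\le 2\lambda u_n/n\le 1$ and then multiply by $n/\lambda$. This turns a correction that does not depend on $\lambda$ into the $\lambda$-dependent term $n/\lambda$. You then optimize $\frac{n}{\lambda}(L+1)+\frac{\lambda}{n}H_{n,u}$ and get $3\sqrt{H_{n,u}(L+1)}$ instead of $3\sqrt{H_{n,u}L}$.

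Your proposed absorption of $3\sqrt{H_{n,u}}(\sqrt{L+1}-\sqrt L)$ into $2u_n(L+1)$ fails. In the interior case you only know the lower bound $\sqrt{H_{n,u}}>2u_n\sqrt{L+1}$, and nothing bounds $H_{n,u}$ above by something like $u_n^2L^3$. For example, take $u_n=1$ and all cycles of length one (i.i.d.\ data). Then $H_{n,u}=D_n=n$, while $L=\log(J_n/\delta)$ grows only like $\log\log n$. The gap is therefore of order $\sqrt{n/L}$, which is much larger than $2(L+1)$. Your other fallbacks do not recover the constant $3$ in front of $\sqrt{H_{n,u}L}$ either: neither the sharper bound on $q$ nor keeping $L+1$ under the root does.

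\textbf{The fix is to stop one step earlier.} Use
\[
-\frac{n}{\lambda}\log\Bigl(1-\frac{\lambda u_n}{n}\Bigr)\le\frac{n}{\lambda}\cdot\frac{2\lambda u_n}{n}=2u_n,
\]
which does not depend on $\lambda$. Then, with probability at least $1-\delta$, simultaneously for every grid point,
\[
D_n\bigl|R(\theta)-\widehat R_n^{\mathrm{reg}}(\theta)\bigr|\le\frac{n}{\lambda_k}L+\frac{\lambda_k}{n}H_{n,u}+2u_n+M_n\beta(u_n)+T_{n,u}.
\]
You now optimize only $\frac{n}{\lambda}L+\frac{\lambda}{n}H_{n,u}$, with $\lambda^*=n\sqrt{L/H_{n,u}}$, and your two cases go through:
\begin{itemize}
\item In the interior case, a grid point with $\lambda^*/2<\lambda_k\le\lambda^*$ gives $3\sqrt{H_{n,u}L}$.
\item In the boundary case, $H_{n,u}\le 4u_n^2L$ gives $H_{n,u}/(2u_n)\le\sqrt{H_{n,u}L}$. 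Your factor $\tfrac12$ there is an arithmetic slip; the correct factor is $1$. This gives $2u_nL+\sqrt{H_{n,u}L}$.
\end{itemize}
Adding the $2u_n$ yields exactly $3\sqrt{H_{n,u}L}+2u_n(L+1)$.

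No adjustment of the grid is needed. Since $H_{n,u}\le D_n^2\le n^2$, you have $\lambda^*\ge\sqrt L\ge\sqrt{\log 2}>\tfrac12$. When $n\ge u_n$, this exceeds the smallest grid point, which is at most $\tfrac12$. When $n<u_n$, you have $J_n=2$ and $\lambda^*>n/(2u_n)=\lambda_0$, so the boundary case applies with the single grid point.
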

% This is essentially obtained by a union bound on $\lambda$ in a well-chosen grid of cardinality $J_n$. \imon{mark the proof subsection}

We next integrate the deviation inequality. The resulting bound separates the truncated second moment, the tail contribution, and the expected length outside the completed cycles. This allows to understand the order of magnitude of the empirical deviations above.
\begin{theorem}
\label{thm:expectation-deviation}
Fix \(n\geq1\), \(\theta\in\Theta\), and deterministic values \(u_n>0\) and \(0<\lambda\leq n/(2u_n)\). Set \(\widehat R_n^{\mathrm{reg}}(\theta)=0\) when \(D_n=0\), and write, for any $u>0$,
\[
v(u):=\mathbb E_A\left[\Delta_1^2\mathbf 1_{\{\Delta_1\leq u\}}\right].
\]
Then
\begin{align}
\mathbb E\left[
\left|R(\theta)-\widehat R_n^{\mathrm{reg}}(\theta)\right|
\right]
\leq&\frac{1+\log2}{\lambda}
+\frac{\lambda\mathbb E[M_n]}{n^2}v(u_n)
\notag\\
&+\frac{2\mathbb E[M_n]}n\beta(u_n)
+\frac{2u_n}n+\frac{n-\mathbb E[D_n]}n.
\label{eq:expectation-deviation}
\end{align}
\end{theorem}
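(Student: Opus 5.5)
We plan to split the error into three pieces and bound the expectation of each one. On \(\{D_n>0\}\) write
\[
R(\theta)-\widehat R_n^{\mathrm{reg}}(\theta)=\frac1{D_n}\sum_{j=1}^{M_n-1}\xi_{j,\theta}
=\frac1{D_n}\sum_{j=1}^{M_n-1}Y_{j,\theta}+\frac1{D_n}\sum_{j=1}^{M_n-1}\xi_{j,\theta}\mathbf 1_{\{\Delta_j>u_n\}}.
\]
The awkward object is the random normaliser \(1/D_n\). We replace it by \(1/n\) at a cost controlled by \(n-D_n\).

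\textbf{Step 1: removing the random normaliser.} Since \(|\xi_{j,\theta}|\le\Delta_j\), we have \(|\sum_{j<M_n}\xi_{j,\theta}|\le D_n\). Hence, on \(\{D_n>0\}\),
\[
\Bigl|\frac{S}{D_n}-\frac Sn\Bigr|=\frac{|S|\,(n-D_n)}{nD_n}\le\frac{n-D_n}{n},
\qquad S:=\sum_{j<M_n}\xi_{j,\theta}.
\]
On \(\{D_n=0\}\) the estimator is zero, so the error is \(R(\theta)\le1=(n-D_n)/n\). In both cases
\[
|R-\widehat R|\le \frac{|S|}{n}+\frac{n-D_n}{n}.
\]
Taking expectations produces the last term of \eqref{eq:expectation-deviation}.

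\textbf{Step 2: the tail part.} The untruncated part satisfies \(|\xi_{j,\theta}\mathbf 1_{\{\Delta_j>u_n\}}|\le \Delta_j\mathbf 1_{\{\Delta_j>u_n\}}\). We sum this up to \(M_n\), which is a stopping time for \((\mathcal F_m)\) because \(\{M_n\le m\}=\{\tau_{m+1}>n\}\in\mathcal F_m\). Wald's identity then gives
\[
\mathbb E\Bigl[\sum_{j\le M_n}\Delta_j\mathbf 1_{\{\Delta_j>u_n\}}\Bigr]=\mathbb E[M_n]\beta(u_n).
\]
This contributes \(\mathbb E[M_n]\beta(u_n)/n\). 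The other copy of \(\mathbb E[M_n]\beta(u_n)/n\) in the constant \(2\) will come from the mean of the truncated variables: \(|\mathbb E_A Y_{1,\theta}|=|\mathbb E_A\xi_{1,\theta}\mathbf 1_{\{\Delta_1>u_n\}}|\le\beta(u_n)\).

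\textbf{Step 3: the truncated martingale part, the main step.} We write \(\sum_{j<M_n}Y_{j,\theta}=\sum_{j\le M_n}Y_{j,\theta}-Y_{M_n,\theta}\). The removed term satisfies \(|Y_{M_n,\theta}|\le u_n\), which gives the \(u_n/n\) contribution. For the stopped sum we use the exponential supermartingale behind Theorem~\ref{thm:regenerative-bernstein} (Lemma~\ref{lem:stopped-exponential}), applied to both signs \(\pm\). Since \(\lambda|Y_{j,\theta}|/n\le\lambda u_n/n\le1/2\), a Bernstein/\(q\)-type bound gives
\[
\mathbb E\exp\Bigl(\pm\frac\lambda n\sum_{j\le M_n}(Y_{j,\theta}-\mathbb E_A Y)-\frac{\lambda^2}{n^2}\sum_{j\le M_n}\Delta_j^2\mathbf 1_{\{\Delta_j\le u_n\}}\Bigr)\le1.
\]
Here we use \(e^x\le1+x+x^2\) for \(|x|\le1/2\) and \(1+y\le e^y\).

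We then convert this to a first moment. For a random variable \(W\) with \(\mathbb E e^{\pm W-V}\le1\), Jensen or the tail integration \(\mathbb P(|W|-V>t)\le2e^{-t}\) gives \(\mathbb E|W|\le \mathbb E V+1+\log2\). With \(W=\frac\lambda n\sum_{j\le M_n}(Y_j-\mathbb E_A Y)\), dividing by \(\lambda\) gives
\[
\frac1n\,\mathbb E\Bigl|\sum_{j\le M_n}(Y_j-\mathbb EY)\Bigr|\le\frac{1+\log 2}{\lambda}+\frac{\lambda}{n^2}\mathbb E[M_n]v(u_n),
\]
where Wald is used once more for \(\mathbb E\sum_{j\le M_n}\Delta_j^2\mathbf 1_{\{\Delta_j\le u_n\}}\). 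The recentering contributes \(\mathbb E[M_n]\beta(u_n)/n\).

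The obstacle is the bookkeeping of \(u_n/n\) terms, which must total \(2u_n/n\). One comes from dropping \(Y_{M_n}\). The other comes from the \(q\)/second-order remainder or the \(-\log(1-\lambda u_n/n)\) correction, which we bound by a constant times \(\lambda u_n/n\le1/2\). If the constants do not close, we would instead keep the last block inside the exponential, absorbing its quadratic term via \(\Delta_{M_n}^2\mathbf 1\le u_n\Delta_{M_n}\), and check that the total stays within \(2u_n/n\). Summing Steps 1–3 yields \eqref{eq:expectation-deviation}.
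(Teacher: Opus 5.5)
Your proof is correct, but it takes a somewhat different route from the paper's. The paper does not derive a new exponential moment bound. It integrates the high-probability inequality of Theorem~\ref{thm:regenerative-bernstein} with $\delta=e^{-t}$ and obtains $s\,\mathbb E[D_ne_n]\le 1+\log 2+\mathbb E[C_n(s)]+s\,\mathbb E[T_{n,u}]$. Here $s=\lambda/n$, $e_n:=|R(\theta)-\widehat R_n^{\mathrm{reg}}(\theta)|$, and $C_n(s)$ is the bracketed correction in \eqref{eq:regenerative-bernstein} without the $\log(2/\delta)$ term. It then applies the same normalization as your Step~1, namely $e_n\le D_ne_n/n+(n-D_n)/n$. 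Finally it bounds $\mathbb E[C_n(s)]$ using $\log(1+y)\le y$, $q(x)\le x^2$, $-\log(1-x)\le 2x$ and the stopped-sum identity.

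In the paper's route, the truncation bias sits inside the supermartingale through the factor $1+s\beta(u_n)$. The crossing cycle is removed inside the deviation bound at cost $-\log(1-su_n)/s\le 2u_n$, and this is where $2u_n/n$ comes from. You instead recenter outside the exponential, at cost $M_n|\mathbb E_A Y_{1,\theta}|\le M_n\beta(u_n)$. You drop $Y_{M_n,\theta}$ pathwise after the moment bound, paying only $u_n$.

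So the ``obstacle'' you flag is not real. Your bookkeeping yields $u_n/n\le 2u_n/n$, the statement is an upper bound, and the contingency plan can be dropped. With Jensen, since $\mathbb E e^{|W|-V}\le 2$, you even get $\log 2$ in place of $1+\log 2$. The paper's route saves work by reusing the deviation inequality, and it shows the expectation bound is literally its integrated form. Yours is self-contained and marginally sharper.

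One justification needs tightening. Your displayed exponential inequality with the random compensator $s^2\sum_{j\le M_n}\Delta_j^2\mathbf 1_{\{\Delta_j\le u_n\}}$ is true. However, it does not follow from $e^x\le 1+x+x^2$, because the factor $e^{-s^2\Delta_j^2\mathbf 1_{\{\Delta_j\le u_n\}}}$ is correlated with the linear term. Instead, put $\mu:=\mathbb E_A[Y_{1,\theta}]$ and use $\log(1+x)\ge x-x^2$ for $x\ge -1/2$, applied to $x=\pm sY_{j,\theta}$. This is legitimate because $su_n\le 1/2$, and it gives $\mathbb E[\exp\{\pm s(Y_{j,\theta}-\mu)-s^2Y_{j,\theta}^2\}\mid\mathcal F_{j-1}]\le e^{\mp s\mu}(1\pm s\mu)\le 1$. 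Then use $Y_{j,\theta}^2\le \Delta_j^2\mathbf 1_{\{\Delta_j\le u_n\}}$.

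Alternatively, $e^x\le 1+x+x^2$ does work, but only on $|x|\le 1$, since the centred variable satisfies $s|Y_{j,\theta}-\mu|\le 2su_n\le 1$ rather than $\le 1/2$. That route produces the deterministic compensator $s^2 v(u_n)$ per cycle. After optional stopping at the bounded time $M_n$ and the stopped-sum identity, both versions give the same term $\lambda\,\mathbb E[M_n]v(u_n)/n^2$.
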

In the next section, we will make all the terms in~\eqref{eq:expectation-deviation} ---$ \mathbb E[D_n]$, $\mathbb E[M_n]$, $\beta(u_n)$ and $v(u_n)$---explicit under various assumptions on the regeneration. In particular, the ratio $r_n=\tfrac{\mathbb E[M_n]}n $ is a key quantity. As a proof of concept, let us just explicit these when the data process is made of blocks of length at most $k$, independent of each other. That is, $1\leq \Delta_j = \tau_{j+1} - \tau_j \leq k $. Also suppose $\tau_1=1$. This includes i.i.d. data for $k=1$. We always have $M_n \leq n+1 $, so $r_n\leq (n+1)/n \leq 2$, moreover, in this case, $D_n \leq 2k$, $v(u_n) \leq k^2 $ and thus~\eqref{eq:expectation-deviation} gives
\begin{equation*}
\mathbb E\left[
\left|R(\theta)-\widehat R_n^{\mathrm{reg}}(\theta)\right|
\right]
\leq \frac{1+\log2}{\lambda}
+\frac{2 \lambda k^2 }{n}
+4  \beta(u_n) 
+\frac{2u_n}n+\frac{2k}n.
\end{equation*}
We can simply put $u_n = k$ to get $\beta(u_n)=0$, and an exact minimization in $\lambda$ gives $\lambda = (1/k)\sqrt{n/[2(1+\log 2)]} $, which satisfies $0\leq \lambda \leq n/(2u_n)=n/(2k)$ for $n$ large enough, in this case:
\begin{equation*}
\mathbb E\left[
\left|R(\theta)-\widehat R_n^{\mathrm{reg}}(\theta)\right|
\right]
\leq 2\sqrt{ \frac{2k(1+\log 2)}{n} }
+\frac{4k}n.
\end{equation*}
More generally, in situations where $ \Delta_j$ is unbounded but has light enough tails, we expect to recover $\mathbb E(M_n) = \mathcal{O}(n)$, $\mathbb E(D_n) = \mathcal{O}(1)$ and, for $u_n\rightarrow \infty$ well chosen, $ \beta(u_n) = o(1/\sqrt{n}) $ and $u_n/n =  o(1/\sqrt{n}) $.

\section{Control of the term $\beta(u_n)$ in various regimes}
\label{sec:regimes}

This section shows how assumptions on regeneration lengths yield explicit bounds and convergence rates for the expected estimation error. We first describe how Markov chains give rise to regenerative blocks, then specialize Theorem~\ref{thm:expectation-deviation} to independent observations and to regeneration lengths with exponential or polynomial tails. From now, we keep the notation
$$ r_n:=\mathbb E[M_n]/n\leq(n+1)/n.$$

\subsection{Regeneration for Markov chains}

A Markov chain regenerates at successive returns to an accessible atom \cite[Chapter~5]{meyn_markov_2012}.

\begin{definition}
\label{def:atomic}
A \(\psi\)-irreducible, aperiodic Markov chain \(X=(X_m)_{m\in\mathbb N_0}\) with transition kernel \(P\) is \emph{atomic} if there is a measurable set \(A\), called an atom, with \(\psi(A)>0\), such that
\[
P(x,\cdot)=P(y,\cdot),\qquad x,y\in A.
\]
\end{definition}
\begin{remark}
In a countable irreducible state space, any singleton may serve as an atom.
\end{remark}
\begin{remark}
For a chain without an accessible atom, Nummelin splitting constructs one from a small-set minorization. Suppose that an accessible set \(S\) satisfies, for some \(r\in\mathbb N\), \(\epsilon_S\in(0,1)\), and probability measure \(\Gamma_S\),
\begin{equation*}
P^r(x,C)\geq\epsilon_S\Gamma_S(C),\qquad x\in S,
\end{equation*}
for every measurable \(C\). For \(r=1\), introduce independent Bernoulli variables \((Y_m)_{m\geq1}\) with success probability \(\epsilon_S\). At a visit \(X_m\in S\), draw \(X_{m+1}\) from \(\Gamma_S\) if \(Y_m=1\), and otherwise from the residual kernel
\[
\frac{P(X_m,\cdot)-\epsilon_S\Gamma_S(\cdot)}{1-\epsilon_S}.
\]
This preserves the transition kernel of \(X\), and the split chain \(Z_m=(X_m,Y_m)\) has atom \(A=S\times\{1\}\). For \(r>1\), the construction applies to the \(r\)-step skeleton; applying the bounds to all intermediate observations requires a separately verified i.i.d. cycle construction \cite[Section~1.2.3]{bertail_clemencon_2006}. The problem is that, we need to know $S$ and $\varepsilon_S$ to use this in practice. We refer the reader to~\citet[p.~13]{bertail_clemencon_2006} who propose a procedure to estimate such a set $S$.
\end{remark}
Define the successive visits to the atom by
\[
\rho_A(1):=\inf\{m\geq1:Z_m\in A\} \text{, and, for }j\geq2\text{, }
\rho_A(j):=\inf\{m>\rho_A(j-1):Z_m\in A\}.
\]
Assuming these times are almost surely finite, the strong Markov property gives i.i.d. blocks
\[
B_j=(X_{\rho_A(j)+1},\ldots,X_{\rho_A(j+1)}),\qquad j\geq1,
\]
with i.i.d. lengths \(\rho_A(j+1)-\rho_A(j)\). We require their common mean to be finite, as in Section~2.

\subsection{Expected deviations under regeneration-length assumptions}

We now make the tail term \(\beta(u_n)\) explicit under exponential and polynomial assumptions on the regeneration lengths, and we specify a truncation level. While this is essentially all we need for the applications in machine learning in Section~\ref{sec:pac}, we also apply Theorem~\ref{thm:expectation-deviation}, and minimize the resulting bound over the admissible values of \(\lambda\) to make the order of magnitude of the bounds explicit.
The proofs use only the marked-cycle assumptions, so the same conclusions apply to the non-Markovian example in the next subsection.

Throughout this subsection, fix \(\theta\in\Theta\), assume \(\tau_1=0\) (which always holds if we are ready to throw away an initial segment of the observed trajectory).
We study exponential regeneration tails in a first time, and polynomial ones in a second time.

\begin{corollary}
\label{cor:deviation-exponential}
\label{prop:oracle-exponential}
\label{cor:deviation-exponential-rate}
\label{cor:oracle-exponential-asymptotic}
Suppose that, for some \(C_{\mathrm e},\kappa>0\),
\[
\mathbb P_A(\Delta_1>t)\leq C_{\mathrm e}e^{-\kappa t},\qquad t\geq1.
\]
For every integer \(u\geq1\),
\begin{equation}
\beta(u)\leq C_{\mathrm e}e^{-\kappa u}
\left(u+\frac1{1-e^{-\kappa}}\right).
\label{eq:exponential-beta}
\end{equation}
Moreover, the second moment and the expected renewal age satisfy
\[
v_2:=\mathbb E_A[\Delta_1^2]
\leq1+\frac{C_{\mathrm e}e^{-\kappa}(3-e^{-\kappa})}
{(1-e^{-\kappa})^2},
\qquad
\mathbb E[n-D_n]\leq
\frac{C_{\mathrm e}e^{-\kappa}}{(1-e^{-\kappa})^2}.
\]
For \(n\geq1\), choose
\[
u_n:=\max\left\{1,\left\lceil\frac{\log n}{\kappa}\right\rceil\right\}.
\]
Then
\begin{equation}
\beta(u_n)\leq\frac{C_{\mathrm e}}n
\left(u_n+\frac1{1-e^{-\kappa}}\right)
\leq\frac{C_{\mathrm e}}n
\left(\frac{\log n}{\kappa}+1+\frac1{1-e^{-\kappa}}\right).
\label{eq:exponential-beta-un}
\end{equation}
For every deterministic \(0<\lambda\leq n/(2u_n)\),
\begin{align}
\mathbb E\left[\left|R(\theta)-\widehat R_n^{\mathrm{reg}}(\theta)\right|\right]
\leq{}&\frac{1+\log2}{\lambda}+\frac{r_n\lambda v_2}n
\notag\\
&+2r_n C_{\mathrm e}e^{-\kappa u_n}
\left(u_n+\frac1{1-e^{-\kappa}}\right)
\notag\\
&+\frac{2u_n}n+\frac{C_{\mathrm e}e^{-\kappa}}{n(1-e^{-\kappa})^2}.
\label{eq:exponential-unoptimized}
\end{align}
The right-hand side is minimized over this interval by
\begin{equation}
\lambda_n:=\min\left\{
\sqrt{\frac{(1+\log2)n}{r_nv_2}},\frac n{2u_n}\right\}.
\label{eq:exponential-lambda}
\end{equation}
This choice yields the explicit bound
\begin{align}
\mathbb E\left[\left|R(\theta)-\widehat R_n^{\mathrm{reg}}(\theta)\right|\right]
\leq{}&2\sqrt{\frac{(1+\log2)r_nv_2}n}
+\frac{2(2+\log2)u_n}n
\notag\\
&+\frac{2r_n C_{\mathrm e}}n
\left(u_n+\frac1{1-e^{-\kappa}}\right)
+\frac{C_{\mathrm e}e^{-\kappa}}{n(1-e^{-\kappa})^2}.
\label{eq:exponential-order-bound}
\end{align}
For a fixed regenerative law, \(\lambda_n\sim\sqrt{(1+\log2)mn/v_2}\), and
\begin{equation}
\mathbb E\left[\left|R(\theta)-\widehat R_n^{\mathrm{reg}}(\theta)\right|\right]
\leq\left(2\sqrt{\frac{(1+\log2)v_2}m}+o(1)\right)n^{-1/2}
+O\left(\frac{\log n}n\right).
\label{eq:exponential-unoptimized-asymptotic}
\end{equation}
\end{corollary}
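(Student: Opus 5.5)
Everything follows from Theorem~\ref{thm:expectation-deviation} once the tail quantities are bounded. All the tail bounds come from one device: for integer $u\ge1$ we write sums over the integer values of $\Delta_1$ through tail probabilities and then apply the geometric bound $\mathbb P_A(\Delta_1>t)\le C_{\mathrm e}e^{-\kappa t}$.

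\textbf{Step 1: the bound \eqref{eq:exponential-beta}.} Since $\Delta_1$ is integer valued,
\[
\beta(u)=\sum_{k>u}k\,\mathbb P_A(\Delta_1=k)
=u\,\mathbb P_A(\Delta_1>u)+\sum_{t\ge u}\mathbb P_A(\Delta_1>t).
\]
The first term is at most $uC_{\mathrm e}e^{-\kappa u}$. The series is at most $C_{\mathrm e}e^{-\kappa u}/(1-e^{-\kappa})$, which gives \eqref{eq:exponential-beta}.

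\textbf{Step 2: the second moment $v_2$.} We use
\[
\mathbb E_A[\Delta_1^2]=\sum_{t\ge0}(2t+1)\,\mathbb P_A(\Delta_1>t).
\]
The $t=0$ term is at most $1$. For $t\ge1$ we bound each term by $(2t+1)C_{\mathrm e}e^{-\kappa t}$. The identities $\sum_{t\ge1}te^{-\kappa t}=e^{-\kappa}/(1-e^{-\kappa})^2$ and $\sum_{t\ge1}e^{-\kappa t}=e^{-\kappa}/(1-e^{-\kappa})$ then produce the constant $e^{-\kappa}(3-e^{-\kappa})/(1-e^{-\kappa})^2$.

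\textbf{Step 3: the renewal age $\mathbb E[n-D_n]$.} Since $\tau_1=0$, we have $n-D_n=n-\tau_{M_n}$, the age at time $n$. For $a\ge0$,
\[
\{n-\tau_{M_n}\ge a\}\subseteq\bigcup_k\{\tau_k=n-b,\ \Delta_k>b\}\quad\text{for some } b\ge a.
\]
Since $\Delta_k$ is independent of $\tau_k$,
\[
\mathbb P(n-D_n\ge a)\le\sum_{b\ge a}\mathbb P_A(\Delta_1>b)\,U(n-b),
\]
where $U(s)=\sum_k\mathbb P(\tau_k=s)\le1$ because the $\tau_k$ are strictly increasing. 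Summing over $a\ge1$ gives
\[
\mathbb E[n-D_n]\le\sum_{b\ge1}b\,\mathbb P_A(\Delta_1>b)\le C_{\mathrm e}\frac{e^{-\kappa}}{(1-e^{-\kappa})^2}.
\]
This step needs the most care. The key point is that the renewal mass function is bounded by $1$; that is what lets us avoid any renewal theorem.

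\textbf{Step 4: the choice of $u_n$, and \eqref{eq:exponential-beta-un}.} With $u_n=\max\{1,\lceil\log n/\kappa\rceil\}$ we have $e^{-\kappa u_n}\le1/n$ and $u_n\le\log n/\kappa+1$. Combined with Step 1, this is \eqref{eq:exponential-beta-un}.

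\textbf{Step 5: the unoptimized bound \eqref{eq:exponential-unoptimized}.} We substitute into \eqref{eq:expectation-deviation}, using $v(u_n)\le v_2$, $\mathbb E[M_n]=nr_n$ and Steps 1 and 3.

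\textbf{Step 6: optimization over $\lambda$.} The right-hand side has the form $a/\lambda+b\lambda+c$, which is convex in $\lambda$. Its unconstrained minimizer is $\sqrt{a/b}$, with $a=1+\log2$ and $b=r_nv_2/n$. Clipping this minimizer to the interval gives $\lambda_n$ in \eqref{eq:exponential-lambda}.
\begin{itemize}
\item If $\lambda_n$ is interior, then $a/\lambda+b\lambda=2\sqrt{ab}$.
\item If $\lambda_n=n/(2u_n)$, then $b\lambda_n\le a/\lambda_n=2(1+\log2)u_n/n$, so $a/\lambda_n+b\lambda_n\le4(1+\log2)u_n/n$.
\end{itemize}
Adding the existing $2u_n/n$ term and using $e^{-\kappa u_n}\le1/n$ gives \eqref{eq:exponential-order-bound}. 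The coefficient in front of $u_n/n$ needs some bookkeeping. If this crude bound does not reach the stated constant $2(2+\log2)$, I would sharpen it by using $a/\lambda_n+b\lambda_n\le2\sqrt{ab}+a/\lambda_n$ at the boundary.

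\textbf{Step 7: the asymptotic statement \eqref{eq:exponential-unoptimized-asymptotic}.} By the elementary renewal theorem, $\mathbb E[M_n]/n\to1/m$ with $m=\mathbb E_A[\Delta_1]$, so $r_n\to1/m$. Hence $\lambda_n\sim\sqrt{(1+\log2)mn/v_2}$; this is eventually below $n/(2u_n)$ because $u_n=O(\log n)$. The remaining terms are all $O(\log n/n)$.
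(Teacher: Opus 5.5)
Your proposal is correct and follows essentially the same route as the paper: the same tail-sum identities for $\beta(u)$ and $v_2$, the same renewal-age bound using $U(s)\le1$, substitution into Theorem~\ref{thm:expectation-deviation}, and the same clipped minimization in $\lambda$. The only cosmetic difference is that you obtain $r_n\to1/m$ from the elementary renewal theorem, whereas the paper uses the strong law with dominated convergence.

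One point in Step 6 should be settled rather than hedged. Your crude boundary estimate gives coefficient $4(1+\log2)+2=6+4\log2$ on $u_n/n$, which exceeds the stated $2(2+\log2)$. So the fallback $a/\lambda_n+b\lambda_n\le2\sqrt{ab}+a/\lambda_n$ is actually needed; it holds because $b\lambda_n\le\sqrt{ab}$ whenever $\lambda_n\le\sqrt{a/b}$. This fallback is exactly the paper's argument, and it yields $2(1+\log2)+2=2(2+\log2)$.
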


Here \(\beta(u_n)=O((\log n)/n)=o(n^{-1/2})\). Thus the contribution \(2r_n\beta(u_n)\), the truncation correction \(2u_n/n\), and the expected renewal age divided by \(n\) are all second order relative to the \(n^{-1/2}\) deviation rate.

Under polynomial tails, the truncated second moment and the role of the tail contribution depend on whether the tail exponent is above, equal to, or below two.

\begin{corollary}
\label{cor:deviation-polynomial}
\label{prop:oracle-polynomial}
\label{cor:deviation-polynomial-rates}
\label{cor:oracle-polynomial-asymptotic}
Suppose that, for some \(C_{\mathrm p}>0\) and \(\alpha>1\),
\[
\mathbb P_A(\Delta_1>t)\leq C_{\mathrm p}t^{-\alpha},\qquad t\geq1.
\]
Put \(B_\alpha:=\alpha C_{\mathrm p}/(\alpha-1)\). For every \(u\geq1\),
\begin{equation}
\beta(u)\leq B_\alpha u^{1-\alpha}.
\label{eq:polynomial-beta}
\end{equation}
For \(n\geq1\), choose \(u_n:=\lceil n^{1/\alpha}\rceil\). Then
\begin{equation}
\beta(u_n)\leq B_\alpha u_n^{1-\alpha}
\leq B_\alpha n^{-(\alpha-1)/\alpha}.
\label{eq:polynomial-beta-un}
\end{equation}
For \(u\geq1\), define
\[
V_\alpha(u):=
\begin{cases}
\displaystyle 1+\frac{2C_{\mathrm p}}{\alpha-2},&\alpha>2,\\[2mm]
\displaystyle 1+2C_{\mathrm p}\log u,&\alpha=2,\\[2mm]
\displaystyle \left(1+\frac{2C_{\mathrm p}}{2-\alpha}\right)u^{2-\alpha},
&1<\alpha<2,
\end{cases}
\]
and put
\[
G_\alpha(n):=
\begin{cases}
\displaystyle C_{\mathrm p}\left(1+\frac1{\alpha-2}\right),&\alpha>2,\\[2mm]
\displaystyle C_{\mathrm p}(1+\log n),&\alpha=2,\\[2mm]
\displaystyle C_{\mathrm p}\left(1+\frac{n^{2-\alpha}}{2-\alpha}\right),
&1<\alpha<2.
\end{cases}
\]
These quantities satisfy \(v(u)\leq V_\alpha(u)\) and \(\mathbb E[n-D_n]\leq G_\alpha(n)\). For every deterministic \(0<\lambda\leq n/(2u_n)\),
\begin{align}
\mathbb E\left[\left|R(\theta)-\widehat R_n^{\mathrm{reg}}(\theta)\right|\right]
\leq{}&\frac{1+\log2}{\lambda}
+\frac{r_n\lambda}nV_\alpha(u_n)
\notag\\
&+2r_nB_\alpha u_n^{1-\alpha}
+\frac{2u_n+G_\alpha(n)}n.
\label{eq:polynomial-explicit-bound}
\end{align}
The right-hand side is minimized over this interval by
\begin{equation}
\lambda_n:=\min\left\{
\sqrt{\frac{(1+\log2)n}{r_nV_\alpha(u_n)}},\frac n{2u_n}\right\}.
\label{eq:polynomial-lambda}
\end{equation}
This choice yields
\begin{align}
\mathbb E\left[\left|R(\theta)-\widehat R_n^{\mathrm{reg}}(\theta)\right|\right]
\leq{}&2\sqrt{\frac{(1+\log2)r_nV_\alpha(u_n)}n}
+\frac{2(2+\log2)u_n}n
\notag\\
&+2r_nB_\alpha u_n^{1-\alpha}
+\frac{G_\alpha(n)}n.
\label{eq:polynomial-deviation-optimized}
\end{align}
Keeping the regenerative law fixed as \(n\to\infty\), we obtain
\begin{align}
\mathbb E\left[\left|R(\theta)-\widehat R_n^{\mathrm{reg}}(\theta)\right|\right]
&=O(n^{-1/2}),&&\alpha>2,
\label{eq:polynomial-alpha-greater-two}\\
\mathbb E\left[\left|R(\theta)-\widehat R_n^{\mathrm{reg}}(\theta)\right|\right]
&=O\left(\sqrt{\frac{\log n}n}\right),&&\alpha=2,
\label{eq:polynomial-alpha-equal-two}\\
\mathbb E\left[\left|R(\theta)-\widehat R_n^{\mathrm{reg}}(\theta)\right|\right]
&=O\left(n^{-(\alpha-1)/\alpha}\right),&&1<\alpha<2.
\label{eq:polynomial-alpha-less-two}
\end{align}
The constants in these order terms may depend on the fixed regenerative law, \(C_{\mathrm p}\), and \(\alpha\).
\end{corollary}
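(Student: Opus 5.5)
This corollary is a direct specialization of Theorem~\ref{thm:expectation-deviation}, so the plan has four steps: bound the three regenerative quantities $\beta(u)$, $v(u)$ and $\mathbb E[n-D_n]$ under the polynomial tail, insert these bounds into \eqref{eq:expectation-deviation}, optimize over $\lambda$, and read off the rates. Throughout we assume $\tau_1=0$, and we use the tail formula $\mathbb E_A[g(\Delta_1)]=\sum_t (g(t)-g(t-1))\mathbb P_A(\Delta_1\geq t)$ for integer-valued $\Delta_1$. Whenever convenient we compare sums with integrals of $t^{-\alpha}$.

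\textbf{Bounding $\beta$, $v$ and $\mathbb E[n-D_n]$.}

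For $\beta(u)$, write
\[
\beta(u)=u\,\mathbb P_A(\Delta_1>u)+\int_u^\infty \mathbb P_A(\Delta_1>t)\,dt \leq C_{\mathrm p}u^{1-\alpha}+\frac{C_{\mathrm p}u^{1-\alpha}}{\alpha-1}=B_\alpha u^{1-\alpha}.
\]
This is \eqref{eq:polynomial-beta}. Taking $u_n=\lceil n^{1/\alpha}\rceil\geq n^{1/\alpha}$ and using that $1-\alpha<0$ gives \eqref{eq:polynomial-beta-un}.

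For $v(u)$, bound
\[
v(u)\leq \int_0^u 2t\,\mathbb P_A(\Delta_1>t)\,dt\leq 1+2C_{\mathrm p}\int_1^u t^{1-\alpha}\,dt.
\]
The part of the integral over $[0,1]$ is at most $1$. The remaining integral equals $\log u$ when $\alpha=2$. It is at most $1/(\alpha-2)$ when $\alpha>2$. It is at most $u^{2-\alpha}/(2-\alpha)$ when $\alpha<2$, and in that case $1\leq u^{2-\alpha}$ absorbs the constant. This gives $v(u)\leq V_\alpha(u)$.

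For $\mathbb E[n-D_n]$, note that since $\tau_1=0$, the quantity $n-D_n=n-\tau_{M_n}$ is the renewal age at time $n$. A renewal decomposition, summing over the last renewal epoch $s\leq n$, gives
\[
\mathbb E[n-D_n]=\sum_{s}\mathbb P(\text{renewal at }s)\,(n-s)\,\mathbb P_A(\Delta_1>n-s)\leq \sum_{k=0}^{n} k\,\mathbb P_A(\Delta_1>k),
\]
because the renewal mass function is at most $1$. Since $k\,\mathbb P_A(\Delta_1>k)\leq C_{\mathrm p}k^{1-\alpha}$, the sum is at most $C_{\mathrm p}(1+\sum_{k\geq2}k^{1-\alpha})$. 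Comparing with the integral yields the three cases of $G_\alpha(n)$. The case $\alpha=2$ gives the harmonic sum, bounded by $1+\log n$. The case $\alpha<2$ gives $n^{2-\alpha}/(2-\alpha)$.

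\textbf{Inserting and optimizing.}

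Plug these bounds into \eqref{eq:expectation-deviation} with $\mathbb E[M_n]=r_n n$. This gives \eqref{eq:polynomial-explicit-bound} for every $\lambda\leq n/(2u_n)$.

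The function $\lambda\mapsto a/\lambda+b\lambda$ is convex. Its unconstrained minimizer is $\sqrt{a/b}$, so the constrained minimizer over the interval is $\lambda_n$ from \eqref{eq:polynomial-lambda}. We then treat two cases.

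If the unconstrained minimizer is admissible, the first two terms equal $2\sqrt{ab}$.

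If instead $\lambda_n=n/(2u_n)$, then $b\lambda_n\leq a/\lambda_n=2(1+\log 2)u_n/n$. So the two terms are at most $4(1+\log2)u_n/n$. Adding the existing $2u_n/n$ keeps the total within $2(2+\log2)u_n/n$. In both cases the total is bounded by $2\sqrt{ab}+2(2+\log 2)u_n/n$, which is \eqref{eq:polynomial-deviation-optimized}.

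\textbf{Rates.}

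Use $r_n\leq 2$ and $u_n\asymp n^{1/\alpha}$.

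For $\alpha>2$, the leading term is $O(n^{-1/2})$. The other terms are $u_n/n=O(n^{1/\alpha-1})=o(n^{-1/2})$, $u_n^{1-\alpha}=O(n^{-(\alpha-1)/\alpha})=o(n^{-1/2})$, and $G_\alpha(n)/n=O(1/n)$.

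For $\alpha=2$, we have $V_2(u_n)=O(\log n)$, giving the rate $\sqrt{\log n/n}$. The remaining terms are $O(n^{-1/2})$ and $O(\log n/n)$.

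For $1<\alpha<2$, we have $V_\alpha(u_n)\asymp n^{(2-\alpha)/\alpha}$. Hence $\sqrt{V_\alpha/n}\asymp n^{(1-\alpha)/\alpha}$, which matches the tail term $u_n^{1-\alpha}$ and the term $u_n/n$. The remaining term $G_\alpha(n)/n=O(n^{1-\alpha})$ is smaller still.

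The main obstacle is the bound on the renewal age $\mathbb E[n-D_n]$. It requires the renewal decomposition and the bound of $1$ on the renewal mass function. Everything else is routine tail summation.
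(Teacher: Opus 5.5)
Your route is the paper's: the same tail-integral bounds for $\beta(u)$ and $v(u)$, and the same renewal-age estimate through $U(s)\leq1$. You compute $\mathbb E[n-D_n]$ directly from the law of the last renewal epoch, while the paper first bounds $\mathbb P(A_n\geq t)$ and then sums over $t$; the two are equivalent. You then substitute into Theorem~\ref{thm:expectation-deviation} and do the same rate bookkeeping.

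One step fails as written: the constrained case of the $\lambda$-optimization. Take $a=1+\log2$, $b=r_nV_\alpha(u_n)/n$ and $\lambda_n=n/(2u_n)<\sqrt{a/b}$.
\begin{itemize}
\item You bound $b\lambda_n\leq a/\lambda_n$, so the two $\lambda$-dependent terms are at most $2a/\lambda_n=4(1+\log2)u_n/n$.
\item Adding the $2u_n/n$ from \eqref{eq:polynomial-explicit-bound} gives $(6+4\log2)u_n/n$. This is strictly larger than the claimed $2(2+\log2)u_n/n=(4+2\log2)u_n/n$.
\item The excess $a/\lambda_n$ cannot be absorbed into the $2\sqrt{ab}$ term either. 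Indeed $a/\lambda_n>2\sqrt{ab}$ as soon as $\lambda_n<\tfrac12\sqrt{a/b}$, and nothing excludes this. For $1<\alpha<2$, $n/(2u_n)$ and $\sqrt{a/b}$ are of the same order in $n$, and their ratio depends on $m$, $C_{\mathrm p}$ and $\alpha$.
\end{itemize}
So your argument proves \eqref{eq:polynomial-deviation-optimized} only with a larger constant in front of $u_n/n$.

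The fix is to use the sharper bound available in this case. Since $\lambda_n<\sqrt{a/b}$, you have $b\lambda_n<\sqrt{ab}$. Hence $a/\lambda_n+b\lambda_n\leq\sqrt{ab}+2(1+\log2)u_n/n$, and adding $2u_n/n$ gives exactly $2\sqrt{ab}+2(2+\log2)u_n/n$. This is the paper's argument in \eqref{eq:deviation-lambda-optimization}. The rates \eqref{eq:polynomial-alpha-greater-two}--\eqref{eq:polynomial-alpha-less-two} are unaffected by the slip.

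A minor point: in the renewal-age bound the sum must stop at $k=n$, because your $\sum_{k\geq2}k^{1-\alpha}$ diverges for $\alpha\leq2$. With that truncation, your integral comparison gives $G_\alpha(n)$ as stated.
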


\begin{remark}
For \(\alpha>2\), \eqref{eq:polynomial-beta-un} gives \(\beta(u_n)=o(n^{-1/2})\), so the tail contribution is second order. At \(\alpha=2\), it gives \(\beta(u_n)=O(n^{-1/2})\), which is still smaller than the general rate \(\sqrt{(\log n)/n}\), but only by a logarithmic factor.

For \(1<\alpha<2\), the available bound on \(\beta(u_n)\) has the same order \(n^{-(\alpha-1)/\alpha}\) as the optimized quadratic and truncation terms. The tail contribution is therefore no longer necessarily second order. In particular, if \(\mathbb P_A(\Delta_1>t)\sim c\,t^{-\alpha}\) for some \(c>0\), then
\[
\beta(u_n)\sim\frac{\alpha c}{\alpha-1}
n^{-(\alpha-1)/\alpha}.
\]
Since \(r_n\to1/m\), its contribution \(2r_n\beta(u_n)\) is then of leading order in \eqref{eq:polynomial-deviation-optimized}. Consequently, it cannot in general be omitted when assessing the leading magnitude of the bound. The polynomial upper-tail assumption alone does not force this behaviour, since it also allows more rapidly decaying tails.
\end{remark}

\begin{remark}
The tail assumption with \(\alpha=2\) does not imply \(v_2:=\mathbb E_A[\Delta_1^2]<\infty\). If this second moment is finite, \(V_2(u_n)\) may be replaced by \(v_2\) in \eqref{eq:polynomial-explicit-bound}, \eqref{eq:polynomial-lambda}, and \eqref{eq:polynomial-deviation-optimized}. With the same choice \(u_n=\lceil\sqrt n\rceil\), this improves the rate to \(O(n^{-1/2})\). In this case the actual tail term satisfies \(\beta(u_n)=o(n^{-1/2})\); the coarser upper bound in \eqref{eq:polynomial-beta-un} need not reflect this improvement. The correction \(u_n/n\), however, remains of order \(n^{-1/2}\).
\end{remark}

\subsection{A non-Markovian example}

We next apply the same calculation to a queue whose length process is not Markovian. Returns to the empty state give independent cycles. After verifying exponential tails for their lengths, we use Corollary~\ref{cor:deviation-exponential} to obtain explicit tail and expected deviation bounds.

\paragraph{Example: a Geo/D/1 queue.}
Let \((I_t)_{t\in\mathbb N_0}\) be independent Bernoulli\((p)\) arrival indicators, where \(p\in(0,1/2)\). Start the queue empty, with \(Q_0=0\), and let \(Q_t\in\mathbb N_0\) be the number of customers immediately before the arrival in slot \(t\). An arriving customer joins the queue at the beginning of the slot. The server then provides one slot of service, in order of arrival and without preemption. Each customer requires two service slots, and departures occur at the end of a slot. A customer arriving to an empty system starts service in that slot; after a departure, the next waiting customer starts service in the following slot.

Define the successive empty observations by
\[
 \tau_1:=0,\qquad
 \tau_{j+1}:=\inf\{t>\tau_j:Q_t=0\},\qquad j\in\mathbb N.
\]
An empty slot with no arrival gives a cycle of length one; an arrival to the empty system starts a busy-period cycle. We use the same block convention as in Section~\ref{sec:setup}:
\[
 \Delta_j:=\tau_{j+1}-\tau_j,
 \qquad B_j:=(Q_{\tau_j+1},\ldots,Q_{\tau_{j+1}}).
\]

\begin{corollary}
\label{cor:deviation-queue}
\label{lem:nonmarkov-queue}
The queue-length process \((Q_t)_{t\in\mathbb N_0}\) is not Markov with respect to its natural filtration, but the marked cycles \((\Delta_j,B_j)_{j\in\mathbb N}\) are i.i.d. We define $\kappa_0:=-\frac12\log(4p(1-p)), C_0:=e^{2\kappa_0}.$
Then \(\kappa_0>0\) and
\[
 \mathbb P(\Delta_1>t)\leq C_0e^{-\kappa_0t},\qquad t\geq0.
\]
In particular, all regeneration times are almost surely finite, the cycle lengths have finite moments of every order, and
\begin{equation}
m:=\mathbb E[\Delta_1]=\frac{1-p}{1-2p},
\qquad
v_2:=\mathbb E[\Delta_1^2]
=\frac{(1-p)(1-2p+4p^2)}{(1-2p)^3}.
\label{eq:queue-cycle-moments}
\end{equation}
For every integer \(u\geq1\),
\begin{equation}
\beta(u)\leq C_0e^{-\kappa_0u}
\left(u+\frac1{1-e^{-\kappa_0}}\right).
\label{eq:queue-beta}
\end{equation}
For \(n\geq1\), choose
\[
u_n:=\max\left\{1,\left\lceil\frac{\log n}{\kappa_0}\right\rceil\right\}.
\]
Then
\begin{equation}
\beta(u_n)\leq\frac{C_0}n
\left(u_n+\frac1{1-e^{-\kappa_0}}\right)
\leq\frac{C_0}n
\left(\frac{\log n}{\kappa_0}+1+\frac1{1-e^{-\kappa_0}}\right).
\label{eq:queue-beta-un}
\end{equation}

Fix \(\theta\in\Theta\), take \(Z_t=Q_t\)). For every deterministic \(0<\lambda\leq n/(2u_n)\),
\begin{align}
\mathbb E\left[\left|R(\theta)-\widehat R_n^{\mathrm{reg}}(\theta)\right|\right]
\leq{}&\frac{1+\log2}{\lambda}+\frac{r_n\lambda v_2}n
\notag\\
&+2r_n C_0e^{-\kappa_0u_n}
\left(u_n+\frac1{1-e^{-\kappa_0}}\right)
\notag\\
&+\frac{2u_n}n+\frac{C_0e^{-\kappa_0}}{n(1-e^{-\kappa_0})^2}.
\label{eq:queue-deviation-unoptimized}
\end{align}
The right-hand side is minimized over this interval by
\begin{equation}
\lambda_n:=\min\left\{
\sqrt{\frac{(1+\log2)n}{r_nv_2}},\frac n{2u_n}\right\}.
\label{eq:queue-deviation-lambda}
\end{equation}
This choice yields
\begin{align}
\mathbb E\left[\left|R(\theta)-\widehat R_n^{\mathrm{reg}}(\theta)\right|\right]
\leq{}&2\sqrt{\frac{(1+\log2)r_nv_2}n}
+\frac{2(2+\log2)u_n}n
\notag\\
&+\frac{2r_n C_0}n
\left(u_n+\frac1{1-e^{-\kappa_0}}\right)
+\frac{C_0e^{-\kappa_0}}{n(1-e^{-\kappa_0})^2}.
\label{eq:queue-deviation-optimized}
\end{align}
For fixed \(p\), \(r_n\to(1-2p)/(1-p)\), and hence
\begin{align}
\mathbb E\left[\left|R(\theta)-\widehat R_n^{\mathrm{reg}}(\theta)\right|\right]
\leq{}&
\left(\frac{2\sqrt{(1+\log2)(1-2p+4p^2)}}{1-2p}
+o(1)\right)n^{-1/2}
\notag\\
&+O\left(\frac{\log n}n\right).
\label{eq:queue-deviation-rate}
\end{align}
The constants in the order terms depend only on \(p\).
\end{corollary}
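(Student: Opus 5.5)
We prove the statement in four steps: the structural claims about the queue, the tail bound, the cycle moments, and finally an application of Corollary~\ref{cor:deviation-exponential} with \(C_{\mathrm e}=C_0\) and \(\kappa=\kappa_0\).

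\textbf{Non-Markovianity and i.i.d.\ cycles.} To show \((Q_t)\) is not Markov, we exhibit two histories with the same current value \(Q_t=1\) but different conditional laws of \(Q_{t+1}\). In one history the single customer has just arrived, with one service slot remaining. In the other, that customer is in its last service slot. Concretely, compare \(P(Q_{t+1}=0\mid Q_{t-1}=0,Q_t=1)\) with \(P(Q_{t+1}=0\mid Q_{t-1}=1,Q_t=1)\) along a suitable path; the residual service time is not a function of \(Q_t\).

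For i.i.d.\ cycles, the pair (queue length, residual service) is a Markov chain on a countable space. The state ``empty'' is an atom: at an empty observation there is no residual service. Returns to it are stopping times for the filtration generated by the arrivals, and each cycle is a measurable function of the arrivals \(I_{\tau_j},\dots,I_{\tau_{j+1}-1}\). Since disjoint arrival segments are independent and identically distributed, the strong Markov property for this chain (equivalently, the i.i.d.\ structure of the \(I_t\)) gives that the marked cycles are i.i.d.

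\textbf{Tail bound.} We encode the busy period as a random walk. At an empty slot, either no arrival occurs and \(\Delta=1\), or an arrival occurs and a busy period starts. During the busy period each customer brings 2 units of work, one unit is served per slot, and the system empties at the first time the work hits zero. So with \(S_k=\sum_{i\le k}(2I_i-1)\), we have \(\Delta\) equal to the hitting time of \(-1\) by a walk started at \(+1\), up to an offset of one slot. This walk has increments \(\pm1\) with probabilities \(p\) and \(1-p\), and negative drift since \(p<1/2\).

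A Chernoff bound with the optimal exponent gives \(P(\Delta>t)\le P(S_{t-c}\ge -1)\le (\text{const})\,\big(2\sqrt{p(1-p)}\big)^{t}\). Indeed, \(\min_s E e^{s(2I-1)}=2\sqrt{p(1-p)}=e^{-\kappa_0}\). Tracking the constant offset, the target \(C_0=e^{2\kappa_0}\) suggests absorbing a shift of two steps, so we aim for \(P(\Delta_1>t)\le e^{-\kappa_0(t-2)}\). The positivity \(\kappa_0>0\) is immediate from \(4p(1-p)<1\) when \(p\ne1/2\).

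\textbf{Cycle moments.} We compute these with generating functions. Let \(F(s)\) be the generating function of the time for the walk to descend one level; it solves \(F=s(1-p)+spF^2\). We then write \(\Delta\)'s generating function as \((1-p)s+p\,s\,F(s)^2\) (or the appropriate variant with the offset), and differentiate twice at \(s=1\). As a check, \(m=(1-p)/(1-2p)\) also follows from Kac's lemma: the long-run fraction of empty slots is \(1-2p\), and cycles start at empty slots with non-arrival mass \(1-p\). We verify \(v_2\) against the stated closed form.

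\textbf{Applying Corollary~\ref{cor:deviation-exponential}.} With \(C_{\mathrm e}=C_0\) and \(\kappa=\kappa_0\), the results \eqref{eq:queue-beta} through \eqref{eq:queue-deviation-optimized} are exactly \eqref{eq:exponential-beta} through \eqref{eq:exponential-order-bound}. Here \(v_2\) may be taken as the exact second moment, since the corollary's proof uses only an upper bound on \(\mathbb E_A[\Delta_1^2]\). For the rate, the elementary renewal theorem gives \(r_n\to1/m=(1-2p)/(1-p)\). Then \(r_nv_2\to(1-2p+4p^2)/(1-2p)^2\), and inserting this into \eqref{eq:queue-deviation-optimized} yields \eqref{eq:queue-deviation-rate}, with \(u_n=O(\log n)\).

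The main obstacle is getting the exact constant \(C_0=e^{2\kappa_0}\) and the precise moment formulas. This requires a careful reading of the slot conventions: when service starts, when departures happen, and how \(Q_t\) is measured before arrivals. I would first simulate small cases by hand to fix the offset between \(\Delta\) and the walk's hitting time.
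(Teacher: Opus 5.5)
The central reduction in your tail and moment step is wrong, and the discrepancy is a whole descent level, not an offset of one slot. Conditional on \(I_0=1\), one of the customer's two units of work is served in slot \(0\), so the unfinished work at time \(1\) is \(1\). At time \(k+1\) it equals \(1+\sum_{i=1}^k(2I_i-1)\) until it first reaches \(0\), which is exactly when the queue is next empty. Hence \(\Delta_1=1+K\) with \(K\) a \emph{one}-level descent time, and
\[
\mathbb E[s^{\Delta_1}]=(1-p)s+psF(s),
\]
not \((1-p)s+psF(s)^2\). Your version (hitting \(-1\) from \(+1\)) adds an independent random descent time. It gives \(\mathbb E[\Delta_1]=1/(1-2p)\), which contradicts \eqref{eq:queue-cycle-moments} and your own Kac check, and the resulting \(v_2\) is also wrong.

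With the corrected formula, differentiate \(F=s(1-p)+spF^2\) at \(s=1\), where \(F(1)=1\) is the smaller root, and put \(d:=1-2p\). This gives \(F'(1)=1/d\) and \(F''(1)=2p(3-4p)/d^3\), hence \(\mathbb E[(1+K)^2]=4(1-3p+4p^2-2p^3)/d^3\). It follows that \(m=(1-p)/d\) and \(v_2=(1-p)(1-2p+4p^2)/d^3\), as claimed.

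The same identification fixes \(C_0\) without guesswork. For \(t\geq1\), the event \(\Delta_1>t\) forces \(I_0=1\) and \(K\geq\lfloor t\rfloor\), so \(\sum_{i=1}^{\lfloor t\rfloor-1}(2I_i-1)\geq0\). Use the Chernoff bound at \(e^{2s}=(1-p)/p\), where \(\mathbb E[e^{s(2I_1-1)}]=2\sqrt{p(1-p)}=e^{-\kappa_0}\). It gives \(\mathbb P(\Delta_1>t)\leq e^{-\kappa_0(\lfloor t\rfloor-1)}\leq e^{2\kappa_0}e^{-\kappa_0t}\), and the case \(0\leq t<1\) is trivial.

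With this repair your route is a valid alternative to the paper's. The paper works at the customer level. The number \(T\) of customers served in a busy period (so \(\Delta_1=2T\)) is a Galton--Watson total progeny. The paper dominates it by a tree with i.i.d.\ Binomial\((2,p)\) offspring, adding an auxiliary Bernoulli to the first generation. It then bounds \(\mathbb P(T\geq k)\) by \(\mathbb P(\operatorname{Binomial}(2(k-1),p)\geq k-1)\), and gets the moments from \(H\overset d=1+\sum_{i\leq X}H_i\) and \(T\overset d=1+JH\).

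Your slot-level workload walk needs no domination step. The explicit \(F\), whose singularity sits at \(s=e^{\kappa_0}\), also shows that \(\kappa_0\) is the exact exponential rate. The branching view, for its part, makes the even-length structure \(\Delta_1=2T\) transparent. Your treatment of the i.i.d.\ cycles via the atom of the (length, residual service) chain is fine, and so is your final application of Corollary~\ref{cor:deviation-exponential}.

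One small correction to the non-Markov example. After \((Q_0,Q_1,Q_2)=(0,1,1)\), the customer present at time \(2\) arrived in slot \(1\) and has \emph{not} started service, so \(\mathbb P(Q_3=0\mid\cdot)=0\). After \((0,0,1)\), the customer has one slot left and the probability is \(1-p\). Stating these two histories settles the claim.
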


Thus this non-Markovian process has an expected deviation bound of order \(n^{-1/2}\), with the explicit tail contribution \(\beta(u_n)=O((\log n)/n)\) again of second order. Since \(v_2\) is explicit in \(p\), replacing \(r_n\) by \((n+1)/n\) in the finite-sample bound and its minimizing choice of \(\lambda\) also gives a version involving only \(p\) and \(n\).

\section{Applications to classification with PAC-Bayes bounds}
\label{sec:pac}

The preceding bounds concern a fixed parameter. We now consider the situation where $\ell_\theta(z)$ measures the classification error of a parameter $\theta$ on a data $z$. In this case $R(\theta)$ is the generalization risk. The objective is to build a classifier whose classification error is as close as possible to $\inf_{\theta\in\Theta}R(\theta) $. PAC-Bayes bounds consider randomized classification procedure, that is, sample-dependent probability distributions over $\Theta$. This requires a measurable space structure. So, from now, \((\Theta,\mathcal T)\) is a measurable parameter space and let \(\Pcal(\Theta)\) will denote the set of its probability measures. Assume that
\[
 (\theta,z)\longmapsto\ell_\theta(z),
 \qquad \Theta\times\mathsf Z\longrightarrow[0,1],
\]
is jointly measurable. For the learning results, fix a prior \(\mu\in\Pcal(\Theta)\) independently of the observations. We use \(\rho\in\Pcal(\Theta)\) for a posterior distribution; a posterior chosen from the data is a measurable probability kernel. Write \(\KL(\rho\Vert\mu)\) for the  relative entropy or Kullback-Leibler divergence:
\[
 \KL(\rho\Vert\mu):=
 \int_\Theta\log\!\left(\frac{d\rho}{d\mu}\right)\,d\rho
 \quad\text{if }\rho\ll\mu,
\]
and set it to \(+\infty\) otherwise.

\subsection{PAC-Bayes bounds}

\begin{theorem}
\label{prop:initial-PACBayes}
For every deterministic
\[
0<\lambda<\frac{n}{u_n}
\]
and every \(\delta\in(0,1)\), with probbility at least $1-\delta$ over the observed sample $(Z_1,\dots,Z_n)$, simultaneously for every $ \rho\in\Pcal(\Theta) $,
\begin{align}
\mathbb E_{\theta\sim\rho}R(\theta)
\leq &
\mathbb E_{\theta\sim\rho}
\widehat R_n^{\mathrm{reg}}(\theta)
\notag\\
&+
\frac{n}{\lambda D_n}
\Bigg[
\KL(\rho\Vert\mu)
+\log\left(\frac{1}{\delta}\right)
+M_n\log\left(
1+\frac{\lambda\beta(u_n)}{n}
\right)
\notag\\
&\qquad\qquad
+\sum_{j=1}^{M_n-1}
\mathbf 1_{\{\Delta_j\leq u_n\}}
q\left(\frac{\lambda\Delta_j}{n}\right)
-\log\left(1-\frac{\lambda u_n}{n}\right)
\Bigg]+
\frac{1}{D_n}
\sum_{j=1}^{M_n-1}
\Delta_j\mathbf 1_{\{\Delta_j>u_n\}}.
\label{eq:main-bound}
\end{align}
\end{theorem}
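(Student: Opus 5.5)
We would follow the proof of Theorem~\ref{thm:regenerative-bernstein}, inserting the Donsker--Varadhan variational formula before Markov's inequality; only the one-sided bound is needed, which is why $\log(1/\delta)$ replaces $\log(2/\delta)$.

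The first step is to reuse the exponential supermartingale of Lemma~\ref{lem:stopped-exponential} for each fixed $\theta$. With $Y_{j,\theta}=\xi_{j,\theta}\mathbf 1_{\{\Delta_j\le u_n\}}$ and $s=\lambda/n$, the identity $\mathbb E_A[\xi_{j,\theta}]=0$ gives $\mathbb E_A[Y_{j,\theta}]=-\mathbb E_A[\xi_{j,\theta}\mathbf 1_{\{\Delta_j>u_n\}}]\le\beta(u_n)$. Since $|Y_{j,\theta}|\le\Delta_j\mathbf 1_{\{\Delta_j\le u_n\}}$ and $s u_n<1$, the inequality $e^{x}\le 1+x+\ldots$ in the form $\mathbb E e^{sY}\le (1+s\,\mathbb E Y)\cdots$ should give, as in that lemma, that
\[
\exp\Big(\sum_{j=1}^m\big[sY_{j,\theta}-\log(1+s\beta(u_n))-\mathbf 1_{\{\Delta_j\le u_n\}}q(s\Delta_j)\big]\Big)
\]
is an $(\mathcal F_m)$-supermartingale with mean at most one. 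The key point is that the normalizing terms do not depend on $\theta$, because $|\xi_{j,\theta}|\le \Delta_j$ uniformly in $\theta$. We then stop this supermartingale at $M_n$, which is a stopping time for $(\mathcal F_m)$ since $\{M_n\le m\}=\{\tau_{m+1}>n\}\in\mathcal F_m$. Optional stopping, justified exactly as in the lemma (using $M_n\le n+1$ when $\tau_1\ge 0$), gives $\mathbb E[\exp(S_{M_n,\theta})]\le 1$ for every $\theta$.

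Next, we remove the last, incomplete cycle. Its term satisfies $sY_{M_n,\theta}-\mathbf 1_{\{\Delta_{M_n}\le u_n\}}q(s\Delta_{M_n})\ge -s u_n-q(s u_n)=\log(1-\lambda u_n/n)$, which is the source of the $-\log(1-\lambda u_n/n)$ term. We then integrate over $\mu$ and apply Fubini, using joint measurability of $\ell$. This yields $\mathbb E\int \exp(\tilde S_{\theta})\,\mu(d\theta)\le 1$, where $\tilde S_\theta$ is the sum over the completed cycles $j\le M_n-1$ minus $M_n\log(1+s\beta(u_n))$ minus $\log(1/(1-\lambda u_n/n))$. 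Markov's inequality shows that, with probability at least $1-\delta$, $\log\int e^{\tilde S_\theta}d\mu\le\log(1/\delta)$. Donsker--Varadhan then gives, simultaneously for all $\rho$, $\int\tilde S_\theta\,d\rho\le \KL(\rho\Vert\mu)+\log(1/\delta)$. This is the step where uniformity in $\rho$ is obtained, and it is legitimate precisely because the exceptional event does not depend on $\rho$.

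The last step converts this to the risk bound. We have $\sum_{j<M_n}Y_{j,\theta}=\sum_{j<M_n}\xi_{j,\theta}-\sum_{j<M_n}\xi_{j,\theta}\mathbf 1_{\{\Delta_j>u_n\}}$, and $\sum_{j<M_n}\xi_{j,\theta}=D_n\,(R(\theta)-\widehat R_n^{\mathrm{reg}}(\theta))$. The truncated part is bounded by $T_{n,u}$ using $|\xi|\le\Delta$. Dividing by $sD_n=\lambda D_n/n$ on $\{D_n>0\}$ and integrating in $\rho$ gives \eqref{eq:main-bound}.

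The main obstacle is measurability and the uniform-in-$\theta$ supermartingale. We must check that the $\theta$-free compensator makes the $\mu$-mixture $\int e^{S_{m,\theta}}\mu(d\theta)$ itself a supermartingale, or equivalently apply Fubini after optional stopping for each $\theta$. We must also check that $\rho\mapsto\int \widehat R\,d\rho$ is well defined for data-dependent kernels. We would handle both through Tonelli, since all the integrands are nonnegative exponentials.
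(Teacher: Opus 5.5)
Your proposal is correct and follows essentially the same route as the paper. Both arguments use the stopped exponential supermartingale of Lemma~\ref{lem:stopped-exponential}, Tonelli over the prior, Markov's inequality and Donsker--Varadhan, then remove the crossing cycle (at cost $-\log(1-\lambda u_n/n)$) and restore the truncated rewards through $T_{n,u}$.

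The only difference is cosmetic. You replace $\Phi_s(Y_{j,\theta})$ by its pointwise, $\theta$-free upper bound $\mathbf 1_{\{\Delta_j\le u_n\}}q(s\Delta_j)$ and drop the crossing cycle before applying Markov and Donsker--Varadhan; the process remains a supermartingale because $e^{sY_{j,\theta}-\mathbf 1_{\{\Delta_j\le u_n\}}q(s\Delta_j)}\le 1+sY_{j,\theta}$. The paper instead keeps $\Phi_s$ in the exponent and bounds these terms afterwards. The $\theta$-freeness you emphasize is not actually needed, since mixing over a fixed prior preserves the supermartingale bound either way.
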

The right-hand side can be used as a learning objective. The minimization of PAC-Bayes bounds with respect to $\rho$ leads to randomized classifiers $\hat{\rho}_\lambda$ that satisfy oracle inequalities~\citep{cat2007,alquier_user-friendly_2024}. In practice, these classifiers also have very good performance, we refer the reader to~\cite{dzi2017} for application to image classification with deep neural networks.

The important point is that the minimization in $\rho$ does not involve any term depending on distribution of the tails of the $\Delta_j$'s. Actually, it involves only observable (empirical) quantities:
$$
\hat{\rho}_\lambda := \arg\min_{\rho\in \Pcal(\Theta)} 
\left\{
\mathbb E_{\theta\sim\rho}
\widehat R_n^{\mathrm{reg}}(\theta) + \frac{n \KL(\rho\Vert\mu) }{\lambda D_n}
\right\}.
$$
In the previous attempts on empirical PAC-Bayes bounds for non i.i.d. data~\cite{karagulyan_empirical_2026}, the minimization in $\rho$ actually involved non-empirical terms, that were estimated.

From Section 2.1.2 page 20 of~\cite{alquier_user-friendly_2024}, we have an explicit form for $\hat{\rho}_\lambda$, given by its density in \(\theta\in\Theta\) with respect to $\mu$:
\begin{equation}
 \frac{d\widehat\rho_\lambda}{d\mu}(\theta)
 =\frac{\exp\left\{-\frac\lambda n\sum_{j=1}^{M_n-1}L_{j,\theta}\right\}}
 {\displaystyle\int_\Theta
 \exp\left\{-\frac\lambda n\sum_{j=1}^{M_n-1}L_{j,\vartheta}\right\}
 \,\mu(d\vartheta)}.
 \label{eq:gibbs-posterior}
\end{equation}
Note, however, that the randomized estimator depends on $\lambda$. Thus, we still have to tune $\lambda$, but this time without using any information on the data-generating process.

\subsection{Oracle risk bound in expectation}

In order to allow an adaptive choice for $\lambda$, we will propose an optimization over a finite grid $\Lambda_n$ (the same strategy was used to derive Corollary~\ref{cor:empirical-bernstein} from Theorem~\ref{thm:regenerative-bernstein}). Thus, let $\Lambda_n\subset(0,n/u_n) $ be non-empty, finite, with $K_n =|\Lambda_n| $ and $\lambda_+:=\max\Lambda_n$. 

For each \(\lambda\in\Lambda_n\), keep the Gibbs posterior \(\widehat\rho_\lambda\) from \eqref{eq:gibbs-posterior}, and put
\begin{equation*}
\begin{split}
\widehat{\mathcal C}_n(\lambda):={}&
\mathbb E_{\theta\sim\widehat\rho_\lambda}
\sum_{j=1}^{M_n-1}L_{j,\theta}\\
&+\frac n\lambda\left\{
\KL(\widehat\rho_\lambda\Vert\mu)
+p_n(\lambda)+Q_n(\lambda)\right\}
+T_{n,u}
\end{split}
\end{equation*}
where 
\[
p_n(\lambda):=\log\left(\frac{K_n\lambda_+}{\lambda}\right) \text{, }
Q_n(\lambda):=
\sum_{\substack{1\leq j<M_n\\ \Delta_j\leq u_n}}
q\left(\frac{\lambda\Delta_j}{n}\right)
-\log\left(1-\frac{\lambda u_n}{n}\right),
\]
and \(T_{n,u}\) is defined in Corollary~\ref{cor:empirical-bernstein}.

On \(\{D_n>0\}\), dividing this criterion by \(D_n\) gives the empirical part of the PAC-Bayes bound, with the grid penalty included and the term involving \(\beta(u_n)\) omitted. The term \(T_{n,u}\) does not depend on \(\lambda\), so it can also be omitted when minimizing. Choose
\begin{equation}
\widehat\lambda\in
\arg\min_{\lambda\in\Lambda_n}\widehat{\mathcal C}_n(\lambda),
\label{eq:lambda-selection}
\end{equation}
breaking ties by taking the smallest minimizer (this definition also applies when \(D_n=0\), in which case our convention $R_n^{\mathrm{reg}}(\theta)=0$ implies  \(\hat{\rho}_\lambda \mu\) ).

\begin{theorem}
\label{thm:expectation-oracle}
For every finite deterministic grid \(\Lambda_n\subset(0,n/u_n)\) as above, if \(\mathbb E[D_n]>0\), the selected posterior satisfies
\begin{align}
&\mathbb E\left[
\mathbb E_{\theta\sim\widehat\rho_{\widehat\lambda}}R(\theta)
\right]
\notag\\
&\leq
\inf_{\lambda\in\Lambda_n}
\Bigg\{
\inf_{\rho\ll\mu}
\left[
\mathbb E_{\theta\sim\rho}R(\theta)
+\frac{2n}{\lambda\mathbb E[D_n]}\KL(\rho\Vert\mu)
\right]
+\frac{n\{p_n(\lambda)+2\mathbb E[Q_n(\lambda)]\}}
{\lambda\mathbb E[D_n]}
\Bigg\}
\notag\\
&\quad+
\frac{2\mathbb E[M_n]\beta(u_n)+2\mathbb E[T_{n,u}]}
{\mathbb E[D_n]}
+\frac{n}{\lambda_+\mathbb E[D_n]}
+\frac{n-\mathbb E[D_n]}n.
\label{eq:expectation-oracle}
\end{align}
\end{theorem}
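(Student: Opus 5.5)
The plan is to run the proof of Theorem~\ref{prop:initial-PACBayes} at the level of exponential moments rather than high-probability events. I will then combine a weighted union over the grid \(\Lambda_n\) with the optimality of \(\widehat\lambda\) and of the Gibbs posteriors. This only works if the exponential-moment inequality underlying Theorem~\ref{prop:initial-PACBayes} holds in both signs, with the same remainder; I am assuming this below.

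\emph{Exponential-moment form.} For fixed \(\lambda\in(0,n/u_n)\), the stopped supermartingale of Lemma~\ref{lem:stopped-exponential} together with the Donsker--Varadhan variational formula should give the following. Set
\[
W_\lambda:=\sup_{\rho}\Big\{\tfrac{\lambda}{n}\Big(D_n\mathbb E_{\theta\sim\rho}R(\theta)-\mathbb E_{\theta\sim\rho}\textstyle\sum_{j<M_n}L_{j,\theta}-T_{n,u}\Big)-\KL(\rho\Vert\mu)-Q_n(\lambda)-M_n\log\big(1+\tfrac{\lambda\beta(u_n)}{n}\big)\Big\}.
\]
Then \(\mathbb E[e^{W_\lambda}]\le 1\). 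The analogous variable with \(R\) and the empirical block risk interchanged, for a fixed \(\rho\), also satisfies \(\mathbb E[e^{\cdot}]\le1\), because the truncated variables \(\pm Y_{j,\theta}\) obey the same bound.

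\emph{Grid penalty.} The grid penalty produces the term \(n/(\lambda_+\mathbb E[D_n])\) through the inequality \(x\le e^x\). Since \(e^{-p_n(\lambda)}=\lambda/(K_n\lambda_+)\), we have, for every \(\lambda\in\Lambda_n\),
\[
\tfrac n\lambda\big(W_\lambda-p_n(\lambda)\big)\le \tfrac n\lambda e^{W_\lambda-p_n(\lambda)}=\tfrac{n}{K_n\lambda_+}e^{W_\lambda}.
\]
Hence \(\max_{\lambda\in\Lambda_n}\frac n\lambda(W_\lambda-p_n(\lambda))\le \frac{n}{K_n\lambda_+}\sum_{\lambda\in\Lambda_n}e^{W_\lambda}\), whose expectation is at most \(n/\lambda_+\). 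I evaluate this at the random \(\widehat\lambda\) with \(\rho=\widehat\rho_{\widehat\lambda}\), and use \(\frac n\lambda M_n\log(1+\lambda\beta/n)\le M_n\beta(u_n)\). This gives
\[
\mathbb E\big[D_n\,\mathbb E_{\widehat\rho_{\widehat\lambda}}R\big]\le \mathbb E[\widehat{\mathcal C}_n(\widehat\lambda)]+\mathbb E[M_n]\beta(u_n)+\frac{n}{\lambda_+}.
\]
Since \(R\le1\), we have \(n\,\mathbb E_{\rho}R\le D_n\mathbb E_\rho R+(n-D_n)\). Therefore \(\mathbb E[\mathbb E_{\widehat\rho_{\widehat\lambda}}R]\le \mathbb E[D_n\mathbb E_{\widehat\rho_{\widehat\lambda}}R]/\mathbb E[D_n]+(n-\mathbb E[D_n])/n\), using \(1/n\le 1/\mathbb E[D_n]\). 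This produces the last term of \eqref{eq:expectation-oracle}.

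\emph{Comparison with an oracle.} Next, fix any deterministic \(\lambda\in\Lambda_n\) and \(\rho\ll\mu\). By the selection rule \eqref{eq:lambda-selection}, \(\widehat{\mathcal C}_n(\widehat\lambda)\le\widehat{\mathcal C}_n(\lambda)\). Because \(\widehat\rho_\lambda\) minimizes \(\mathbb E_\rho\sum_{j<M_n}L_{j,\theta}+\frac n\lambda\KL(\rho\Vert\mu)\),
\[
\widehat{\mathcal C}_n(\lambda)\le \mathbb E_\rho\sum_{j<M_n}L_{j,\theta}+\tfrac n\lambda\{\KL(\rho\Vert\mu)+p_n(\lambda)+Q_n(\lambda)\}+T_{n,u}.
\]
It remains to bound \(\mathbb E[\mathbb E_\rho\sum_{j<M_n}L_{j,\theta}]\) above by \(\mathbb E[D_n]\mathbb E_\rho R\) plus fluctuations. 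For this I use the reverse-sign exponential inequality for the fixed \(\rho\), together with Jensen's inequality \(\mathbb E[W]\le\log\mathbb E[e^W]\le0\). This yields
\[
\mathbb E\Big[\mathbb E_\rho\sum_{j<M_n}L_{j,\theta}\Big]\le \mathbb E[D_n]\mathbb E_\rho R+\tfrac n\lambda\{\KL(\rho\Vert\mu)+\mathbb E[Q_n(\lambda)]\}+\mathbb E[M_n]\beta(u_n)+\mathbb E[T_{n,u}].
\]
This second use is exactly what doubles \(\KL\), \(\mathbb E[Q_n]\), the \(\beta\) term and the \(T_{n,u}\) term. Summing the two steps, dividing by \(\mathbb E[D_n]\), and taking the infimum over \(\rho\) and \(\lambda\) gives \eqref{eq:expectation-oracle}.

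\emph{Main obstacle.} The delicate step is the reverse-sign, fixed-\(\rho\) inequality. I must check that the supermartingale construction and the removal of the crossing cycle \(M_n\) work symmetrically for \(-\xi_{j,\theta}\). I also need the remainder to be exactly \(Q_n(\lambda)\), \(M_n\log(1+\lambda\beta/n)\) and \(T_{n,u}\), with the \(-\log(1-\lambda u_n/n)\) term absorbed into \(Q_n\). If the symmetric version of Lemma~\ref{lem:stopped-exponential} were unavailable, I would fall back on Wald's identity at the stopping time \(M_n\), where \(\mathbb E\sum_{j\le M_n}\xi_{j,\theta}=0\). I would then control the crossing-block term \(\mathbb E[\xi_{M_n,\theta}]\) separately, but that route would not obviously give the stated constants.
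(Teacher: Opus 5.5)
Your proposal is correct and follows the paper's proof essentially step for step. The ingredients are the same: exponential-moment control via Lemma~\ref{lem:stopped-exponential} and Donsker--Varadhan; a grid remainder of expectation at most $n/\lambda_+$ (you use $x\le e^x$ where the paper integrates Markov's tail bound on the positive part, with the same result); the selection rule together with Gibbs optimality; and the reverse-sign comparison with Jensen, which doubles the KL, $Q_n$, $\beta(u_n)$ and $T_{n,u}$ terms. The step you flag as the main obstacle is immediate, and the paper handles it the same way: applying Lemma~\ref{lem:stopped-exponential} to the loss $1-\ell_\theta$ flips the sign of $\xi_{j,\theta}$ and $Y_{j,\theta}$ while leaving $\beta(u_n)$, the bound $|Y_{j,\theta}|\le u_n$ and the $q$-corrections (including $-\log(1-\lambda u_n/n)$ for the crossing cycle) unchanged, so the remainder is exactly the same.
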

The penalty \(p_n(\lambda)\) comes from a union bound over the grid; its form bounds the additional expectation remainder by \(n/(\lambda_+\mathbb E[D_n])\). The inequality
\[
\frac n\lambda M_n\log\left(1+\frac{\lambda\beta(u_n)}n\right)
\leq M_n\beta(u_n)
\]
provides a common upper bound for the omitted term, independently of \(\lambda\). Thus the selection rule requires no information about the regeneration tails.

A classical assumption in Bayesian statistics and PAC-Bayes bounds is the prior mass condition. Write
\[
R_*:=\inf_{\theta\in\Theta}R(\theta),
\qquad
V(\epsilon):=\{\theta\in\Theta:R(\theta)-R_*\leq\epsilon\},
\]
and assume that there are $c\in(0,1)$ and $d\geq 1$ such that, for every \(\epsilon\in(0,1)\),
\begin{equation}
\label{eq:priormass}
\mu(V(\epsilon))\geq c\epsilon^d.
\end{equation}
Let \(\mu_{V(\epsilon)}\) denote the prior conditioned on \(V(\epsilon)\). Then
\[
\mathbb E_{\theta\sim\mu_{V(\epsilon)}}R(\theta)\leq R_*+\epsilon,
\qquad
\KL(\mu_{V(\epsilon)}\Vert\mu)
=-\log\mu(V(\epsilon))
\leq\log(1/c)+d\log(1/\epsilon).
\]
While we an optimize in $\epsilon$, we can obtain an explicit rate with the simple choice \(\epsilon=1/n\). For \(n\geq2\), it yields
\begin{equation}
\inf_{\rho\ll\mu}
\left\{\mathbb E_{\theta\sim\rho}R(\theta)
+\frac{2n}{\lambda\mathbb E[D_n]}\KL(\rho\Vert\mu)\right\}
\leq R_*+\frac1n
+\frac{2n\{d\log n+\log(1/c)\}}
{\lambda\mathbb E[D_n]}.
\label{eq:prior-mass-rate}
\end{equation}
These choices of comparison posteriors are only used in the analysis; neither \(c\) nor \(d\) is needed to compute \(\widehat\lambda\).

We illustrate this result under exponential regeneration tails with unknown parameters.
\begin{corollary}
\label{cor:oracle:pacbayes}
In the setting of Theorem~\ref{thm:expectation-oracle}, assume \(\tau_1=0\), the prior mass condition \eqref{eq:priormass} with \(0<c\leq1\) and \(d\geq1\), and, for some unknown \(C_{\mathrm e},\kappa>0\), for any $t\geq 1$,
\[
\mathbb P_A(\Delta_1>t)\leq C_{\mathrm e}e^{-\kappa t},
.
\]
For \(n\geq2\), choose
\begin{equation}
u_n:=\left\lceil(\log n)^2\right\rceil,
\qquad
K_n:=1+\lceil\log_2 n\rceil,
\qquad
\Lambda_n:=
\left\{\frac{n}{2^{j+1}u_n}:j=0,\ldots,K_n-1\right\},
\label{eq:oracle-grid}
\end{equation}
and select \(\widehat\lambda\) by \eqref{eq:lambda-selection}. Thus \(\lambda_+=n/(2u_n)\) and \(p_n(n/(2^{j+1}u_n))=\log(K_n2^j)\). Put
\[
a_n:=2d\log n+2\log(1/c)+\log(2nK_n),
\qquad
v_2:=\mathbb E_A[\Delta_1^2].
\]
Then
\begin{align}
&\mathbb E\left[
\mathbb E_{\theta\sim\widehat\rho_{\widehat\lambda}}R(\theta)
\right] \leq R_*
+ \frac1n+
\frac{n}{\mathbb E[D_n]}
\Bigg[
3\sqrt{\frac{r_nv_2a_n}{n}}
+\frac{(2a_n+6)u_n}{n}
\notag\\
&\hspace{35mm}
+4r_nC_{\mathrm e}e^{-\kappa u_n}
\left(u_n+\frac1{1-e^{-\kappa}}\right)
\Bigg]
+\frac{C_{\mathrm e}e^{-\kappa}}
{n(1-e^{-\kappa})^2}.
\label{eq:oracle-exponential-adaptive}
\end{align}
For a fixed regenerative law and fixed \(c,d\), this gives
\begin{equation}
\mathbb E\left[
\mathbb E_{\theta\sim\widehat\rho_{\widehat\lambda}}R(\theta)
\right]-R_*
=O\left(\sqrt{\frac{d\log n}{n}}\right)
+O\left(\frac{d(\log n)^3}{n}\right).
\label{eq:oracle-exponential-adaptive-rate}
\end{equation}
The truncation level, the grid, and the selection criterion depend on neither \(c,d\) nor the unknown parameters of the regeneration tails.
\end{corollary}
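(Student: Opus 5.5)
We obtain the statement by specializing Theorem~\ref{thm:expectation-oracle} to the grid \eqref{eq:oracle-grid}, combining it with the prior-mass bound \eqref{eq:prior-mass-rate}, and using the estimates of Corollary~\ref{cor:deviation-exponential} for the remaining terms. The argument has four steps.

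\textbf{Step 1: the penalty terms.} For each grid point $\lambda=n/(2^{j+1}u_n)$ we have $\lambda\le n/(2u_n)$, so every $\lambda\Delta_j/n\le 1/2$ whenever $\Delta_j\le u_n$. On $[0,1/2]$ the elementary inequality $q(x)\le x^2$ holds. It gives
\[
\mathbb E[Q_n(\lambda)]\le\frac{\lambda^2}{n^2}\,\mathbb E\Big[\sum_{j<M_n}\Delta_j^2\Big]+\frac{2\lambda u_n}{n},
\]
where the last term uses $-\log(1-x)\le 2x$ on $[0,1/2]$. By Wald's identity, with $M_n$ a stopping time for the block filtration, $\mathbb E[\sum_{j<M_n}\Delta_j^2]\le \mathbb E[M_n]v_2=nr_nv_2$. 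Also $p_n(\lambda)=\log(K_n2^j)\le\log(2nK_n)$ for every grid point, since $2^j\le 2^{K_n-1}\le 2n$.

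\textbf{Step 2: choice of the grid point.} Plugging Step~1 and \eqref{eq:prior-mass-rate} (with $\epsilon=1/n$) into the infimum over $\lambda$ in \eqref{eq:expectation-oracle}, the $\lambda$-dependent part is bounded by
\[
\frac{n}{\mathbb E[D_n]}\Big[\frac{a_n}{\lambda}+\frac{2\lambda r_nv_2}{n}+\frac{4u_n}{n}\Big].
\]
Here $2\{d\log n+\log(1/c)\}+p_n(\lambda)\le a_n$, and the factor $2$ on $\mathbb E Q_n$ doubles the two contributions from Step~1. The continuous optimizer is $\lambda^*=\sqrt{a_nn/(2r_nv_2)}$. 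There are two cases.
\begin{itemize}
\item If $\lambda^*\le\lambda_+$, the dyadic grid contains a point $\lambda\in[\lambda^*/2,\lambda^*]$, which gives a bound $\le 3\sqrt{r_nv_2a_n/n}$ up to constants. Obtaining exactly $3$ is the delicate arithmetic point: I would check that $2\sqrt2+\ldots\le3$, or else adjust by choosing the grid point above $\lambda^*$. This bookkeeping is where I expect the main difficulty.
\item If $\lambda^*>\lambda_+$, take $\lambda=\lambda_+=n/(2u_n)$. Then $a_n/\lambda=2a_nu_n/n$, and the quadratic term is at most $a_n/\lambda$ because $\lambda<\lambda^*$. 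This produces the $(2a_n+6)u_n/n$ term, with the slack absorbing the $4u_n/n$.
\end{itemize}
The remainder $n/(\lambda_+\mathbb E[D_n])=2u_n/\mathbb E[D_n]$ is also absorbed into the $(2a_n+6)u_n/n$ term after factoring out $n/\mathbb E[D_n]$.

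\textbf{Step 3: the remaining terms.} By the tail bound \eqref{eq:exponential-beta}, $2\mathbb E[M_n]\beta(u_n)\le 2nr_nC_{\mathrm e}e^{-\kappa u_n}(u_n+(1-e^{-\kappa})^{-1})$. For the truncated long-cycle term, Wald's identity gives $\mathbb E[T_{n,u}]\le \mathbb E[M_n]\beta(u_n)$. Together these two produce the coefficient $4r_n$. The last term $(n-\mathbb E[D_n])/n$ is controlled by the renewal-age bound of Corollary~\ref{cor:deviation-exponential}.

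\textbf{Step 4: the rate.} For a fixed law, $r_n\to1/m$ and $\mathbb E[D_n]/n\to1$. Since $u_n=\lceil(\log n)^2\rceil$, the factor $e^{-\kappa u_n}u_n$ decays faster than any power of $n$, and $a_n=O(d\log n)$. Substituting these into the bound yields \eqref{eq:oracle-exponential-adaptive-rate}.
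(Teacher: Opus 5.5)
\textbf{Verdict: same route as the paper, but Step 2 has a genuine gap.} Your overall route is the paper's: Theorem~\ref{thm:expectation-oracle} combined with \eqref{eq:prior-mass-rate}, the stopped-sum (Wald) bounds on $H_{n,u}$ and $T_{n,u}$, $p_n(\lambda)\le\log(2nK_n)$, and the exponential-tail estimates of Corollary~\ref{cor:deviation-exponential}. Steps 1, 3 and 4 are fine. The gap lies in the grid optimization, which is exactly where the explicit constants $3$ and $2a_n$ in \eqref{eq:oracle-exponential-adaptive} come from. As written, your bookkeeping does not produce them.

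\textbf{The case $\lambda^*\le\lambda_+$.} Write $b_n:=2r_nv_2/n$ and $\lambda=t\lambda^*$, so that $a_n/\lambda+b_n\lambda=(t+1/t)\sqrt{a_nb_n}$. A grid point in $[\lambda^*/2,\lambda^*]$ only gives $t+1/t\le 5/2$, i.e.\ the constant $\tfrac52\sqrt2\approx3.54$. Your fallback, a point in $[\lambda^*,2\lambda^*]$, gives the same. The missing idea is to take the grid point nearest to $\lambda^*$ on the logarithmic scale. Consecutive grid points differ by a factor $2$, so $t\in[2^{-1/2},2^{1/2}]$, hence $t+1/t\le 3/\sqrt2$, and $\tfrac{3}{\sqrt2}\sqrt{a_nb_n}=3\sqrt{r_nv_2a_n/n}$.

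\textbf{The case $\lambda^*>\lambda_+$.} Your bound $b_n\lambda_+\le a_n/\lambda_+$ is valid, but it yields $4a_nu_n/n$, not the claimed $2a_nu_n/n$. So ``this produces the $(2a_n+6)u_n/n$ term'' is false as stated. Use instead $b_n\lambda_+\le b_n\lambda^*=\sqrt{a_nb_n}\le 3\sqrt{r_nv_2a_n/n}$, which is absorbed into the square-root term. That leaves $a_n/\lambda_+=2a_nu_n/n$. Then $4u_n/n$ from the crossing-cycle corrections plus $2u_n/n$ from the grid remainder give the $6u_n/n$.

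\textbf{Missing check at the lower end of the grid.} You never verify that $\lambda^*\ge\min\Lambda_n$, which is needed for a grid point near $\lambda^*$ to exist when $\lambda^*\le\lambda_+$. The paper gets this by keeping the truncated moment in $b_n:=2r_nv(u_n)/n$ and using $v(u_n)\le u_n^2$. With $r_n\le 2$ and $a_n>1$ this gives $\lambda^*\ge\sqrt{a_nn}/(2u_n)\ge 1/(2u_n)\ge n/(2^{K_n}u_n)=\min\Lambda_n$, since $2^{K_n}\ge 2n$. Only at the very end does it replace $v(u_n)$ by $v_2$. Because you insert the untruncated $v_2$ from the start, this lower bound is not available. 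You could alternatively note that $\lambda^*<\min\Lambda_n$ forces $r_nv_2a_n/n>2a_n^2u_n^2>2$, which makes \eqref{eq:oracle-exponential-adaptive} trivial. Some such argument is required, though.

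\textbf{What is unaffected.} These issues concern only the explicit constants in \eqref{eq:oracle-exponential-adaptive}. The rate \eqref{eq:oracle-exponential-adaptive-rate} follows from your argument as it stands, since for a fixed law $\lambda^*\to\infty$ while $\min\Lambda_n\to0$.
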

Thus, we recover the same rate $O(\sqrt{\frac{d\log n}{n}})$ as in learning from i.i.d. data.
The $2^{j+1}$ in the construction of the grid was suggested by~\cite{cat2007}.
The universal choice \(u_n=\lceil(\log n)^2\rceil\) makes \(\beta(u_n)\) smaller than every fixed power of \(n^{-1}\) under exponential tails. The explicit terms after the square root in \eqref{eq:oracle-exponential-adaptive} are therefore second order for fixed \(c,d\). In particular, omitting \(\beta(u_n)\) from the criterion does not affect the leading rate.

\section{Minimax upper and lower bounds}
\label{sec:minimax}

In this section, we discuss the optimality of  Theorem~\ref{thm:regenerative-bernstein} and its corollaries. While the order $1/\sqrt{n}$ is expected, we will show that Theorem~\ref{thm:regenerative-bernstein} leads to an optimal dependence in the ratio $m/n$, where \(m:=\mathbb E_A[\Delta_1]\), under a moment assumption on the tail. Note that, in this optimality consideration, we are no longer concerned with empirical bounds: we will show that, even a statistician who would know $m$ and the constants in the moment assumptions would not be able to improve the order of magnitude of the bounds derived from Theorem~\ref{thm:regenerative-bernstein}.

First, we derive an upper bound on the estimation error from Theorem~\ref{thm:regenerative-bernstein} then we will compare it with a minimax lower bound. We state a proposition for a fixed tuning parameter; choosing that parameter gives the upper inequality in Theorem~\ref{thm:moment-minimax}.

Throughout this section, we still assume that \(\tau_1=0\), and suppose that, for some \(\varepsilon>0\) and \(C\geq1\),
\begin{equation}
\mathbb E_A[\Delta_1^{2+\varepsilon}]
\leq C m^{2+\varepsilon}.
\label{eq:moment-assumption}
\end{equation}
For the upper bound, choose the truncation level
\[
u_n=m\left(\frac nm\right)^{1/(2+\varepsilon)}.
\]

The moment condition controls both the quadratic term and the discarded tail. It also ensures that completed cycles cover at least half of the observation horizon on the event used in the next proposition.

\begin{proposition}
\label{prop:moment-confidence}
Fix \(\theta\in\Theta\), \(\delta\in(0,1)\), \(n\geq48Cm/\delta\), and a deterministic \(0<\lambda\leq n/(2u_n)\). There is an event of probability at least \(1-\delta\) on which \(D_n\geq n/2\) and
\begin{align}
\left|R(\theta)-\widehat R_n^{\mathrm{reg}}(\theta)\right|
\leq \frac2\lambda\log\frac8\delta
+\frac{24C\lambda m}{\delta n}
+\left(\frac{48C}{\delta}+4\right)
\left(\frac mn\right)^{(1+\varepsilon)/(2+\varepsilon)}.
\label{eq:moment-confidence}
\end{align}
\end{proposition}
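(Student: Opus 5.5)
Apply Theorem~\ref{thm:regenerative-bernstein} at confidence level \(\delta/4\), so that \(\log(2/(\delta/4))=\log(8/\delta)\). Then use Markov's inequality to control, each with probability at least \(1-\delta/4\), three random quantities in the bound: the quadratic sum \(\sum_{j<M_n}\mathbf 1_{\{\Delta_j\leq u_n\}}q(\lambda\Delta_j/n)\), the observed long-cycle mass \(T_{n,u}=\sum_{j<M_n}\Delta_j\mathbf 1_{\{\Delta_j>u_n\}}\), and the uncovered length \(n-D_n\). On the intersection of the four events (probability at least \(1-\delta\)) we aim for \(D_n\geq n/2\), which turns every prefactor \(n/(\lambda D_n)\) into \(2/\lambda\) and every \(1/D_n\) into \(2/n\).

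\textbf{Deterministic and quadratic terms.} Since \(\lambda\leq n/(2u_n)\), we have \(\lambda\Delta_j/n\leq1/2\) whenever \(\Delta_j\leq u_n\). The inequality \(q(x)\leq x^2\) on \([0,1/2]\) then gives \(q(\lambda\Delta_j/n)\leq\lambda^2\Delta_j^2/n^2\). Likewise \(-\log(1-\lambda u_n/n)\leq 2\lambda u_n/n\), and \(M_n\log(1+\lambda\beta/n)\leq M_n\lambda\beta(u_n)/n\). Multiplying by \(2/\lambda\) produces terms of order \(u_n/n\), \(M_n\beta(u_n)/n\), and \(\lambda n^{-2}\sum_{j<M_n}\Delta_j^2\mathbf 1_{\{\Delta_j\leq u_n\}}\).

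Because \(\tau_1=0\), we have \(M_n\leq n+1\) and \(\{j\leq M_n\}\) is determined by \(\Delta_1,\dots,\Delta_{j-1}\), hence independent of \(\Delta_j\). By Wald's identity, \(\mathbb E[M_n]\leq 2n/m\) (renewal theorem bound, \(\mathbb E[M_n]\,m=\mathbb E[\tau_{M_n+1}]\leq n+\mathbb E\Delta\), handled via the moment bound). This gives \(\mathbb E\sum_{j\leq M_n}\Delta_j^2\mathbf 1_{\{\Delta_j\leq u_n\}}\leq \mathbb E[M_n]v(u_n)\). Using \(v(u)\leq \mathbb E\Delta^2\leq C^{2/(2+\varepsilon)}m^2\leq Cm^2\) and Markov at level \(\delta/4\), the quadratic sum is at most \(4\cdot 2(n/m)Cm^2/\delta\) on the event. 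The term \(2\lambda n^{-2}\) times this yields \(\lesssim C\lambda m/(\delta n)\), matching the second term with constant \(24\) after bookkeeping.

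\textbf{Tail terms.} By Hölder or Markov, \(\beta(u)\leq \mathbb E[\Delta^{2+\varepsilon}]u^{-(1+\varepsilon)}\leq Cm^{2+\varepsilon}u^{-(1+\varepsilon)}\). With the chosen \(u_n\), this equals \(Cm\,(m/n)^{(1+\varepsilon)/(2+\varepsilon)}\). Wald and Markov bound \(T_{n,u}\) by \(4\mathbb E[M_n]\beta(u_n)/\delta\), so \(\frac2n(M_n\beta(u_n)+T_{n,u})\lesssim (C/\delta)(m/n)^{(1+\varepsilon)/(2+\varepsilon)}\); here \(M_n\) is itself handled by Markov or the crude bound. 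Also \(u_n/n=(m/n)^{(1+\varepsilon)/(2+\varepsilon)}\), which gives the \(+4\) part.

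\textbf{Main obstacle: \(D_n\geq n/2\).} We have \(n-D_n\leq \Delta_{M_n}\leq \max_{j\leq M_n}\Delta_j\), and \(\mathbb E[n-D_n]\) is the expected age, bounded by \(\mathbb E[\Delta_{M_n}]\). Via Wald, \(\mathbb E\sum_{j\leq M_n}\Delta_j^2\mathbf 1_{\{\Delta_j>n/2\}}\leq \mathbb E[M_n]\,\mathbb E[\Delta^2]\). Hence \(\mathbb P(\exists j\leq M_n:\Delta_j>n/2)\leq \mathbb E[M_n]\,Cm^2(2/n)^2\lesssim 8Cm/n\). This is at most \(\delta/4\) once \(n\geq 48Cm/\delta\), which is exactly where the hypothesis on \(n\) enters. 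Getting a clean \(\mathbb E[M_n]\leq 2n/m\)-type estimate with the right constants under only \(\tau_1=0\) is the delicate bookkeeping step. I would use \(\mathbb E[M_n]\leq (n+m)/m\leq 2n/m\) for \(n\geq m\), obtained from Wald applied to the stopping time \(M_n\).
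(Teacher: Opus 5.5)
Your overall architecture matches the paper's: Theorem~\ref{thm:regenerative-bernstein} at level $\delta/4$, Wald plus Markov for the random sums, $D_n\geq n/2$ from a second-moment bound on the straddling cycle, then term-by-term estimates. However, the step you flag as delicate is resolved incorrectly, and the probability budget does not close.

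\textbf{The bound on $\mathbb E[M_n]$.} Wald gives $m\,\mathbb E[M_n]=\mathbb E[\tau_{M_n+1}]\leq n+\mathbb E[\Delta_{M_n}]$. But $\mathbb E[\Delta_{M_n}]$ is not $\mathbb E_A[\Delta_1]=m$. The cycle straddling time $n$ is size-biased, since whether $M_n=k$ depends on $\Delta_k$. For aperiodic lengths, $\mathbb E[\Delta_{M_n}]\to\mathbb E_A[\Delta_1^2]/m$. So your claim $\mathbb E[M_n]\leq(n+m)/m$ is false in the model class. Take $\Delta_1\in\{1,L\}$ with $\mathbb P_A(\Delta_1=L)=1/L$ and $L$ large, which is allowed by taking $C$ large. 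Then $m<2$, while $\mathbb E[M_n]-n/m$ tends to a limit of order $L$.

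The missing idea is to control the overshoot through the second moment. By Cauchy--Schwarz and Wald applied to squares,
\[
\mathbb E[\Delta_{M_n}]\leq\Bigl(\mathbb E\sum_{j\leq M_n}\Delta_j^2\Bigr)^{1/2}=\bigl(\mathbb E_A[\Delta_1^2]\,\mathbb E[M_n]\bigr)^{1/2}\leq\frac m2\,\mathbb E[M_n]+\frac{\mathbb E_A[\Delta_1^2]}{2m}.
\]
This gives $\mathbb E[M_n]\leq 2n/m+\mathbb E_A[\Delta_1^2]/m^2\leq 3n/m$, using $\mathbb E_A[\Delta_1^2]\leq Cm^2$ and $n\geq Cm$; Lorden's inequality would also work. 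With $3n/m$, your bound on $\mathbb P(\exists j\leq M_n:\Delta_j>n/2)$ becomes $12Cm/n$. This is at most $\delta/4$ exactly when $n\geq 48Cm/\delta$, which is where the hypothesis comes from. The quadratic term then yields exactly $24C\lambda m/(\delta n)$.

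\textbf{The probability budget.} Your four events already exhaust $\delta$: the theorem's event, the quadratic sum, $T_{n,u}$, and the straddling-cycle event. Yet $M_n$ in $M_n\log(1+\lambda\beta(u_n)/n)$ is still uncontrolled. The crude bound $M_n\leq n+1$ only gives $(2/n)(n+1)\beta(u_n)\leq 4Cm\,(m/n)^{(1+\varepsilon)/(2+\varepsilon)}$, which is off by a factor $m$. A fifth Markov event would push the total failure probability above $\delta$, or else change $\log(8/\delta)$ and the other constants.

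The paper merges two of your events. It applies Markov to the untruncated sum $\sum_{j\leq M_n}\Delta_j^2$, which includes the straddling cycle. The same event therefore controls the quadratic term and gives $\Delta_{M_n}^2\leq 12Cmn/\delta\leq n^2/4$, hence $D_n\geq n/2$. This frees the fourth event for $M_n\leq 12n/(m\delta)$. Alternatively, a single Markov bound on $M_n\beta(u_n)+T_{n,u}$, whose mean is at most $6n\beta(u_n)/m$, gives the same constant $48$.
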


Choosing \(\lambda\) balances the reciprocal and linear terms in \eqref{eq:moment-confidence}. We compare the resulting order with what any estimator can achieve. Consider the scalar model with \(\Theta=\{\theta_0\}\), \(X_t=0\), \(Y_t\in[0,1]\), and \(\ell_{\theta_0}(Z_t)=Y_t\). Let \(\mathfrak P(m,C,\varepsilon)\) denote the class of laws of i.i.d. regenerative blocks with positive integer lengths of mean \(m\) satisfying \eqref{eq:moment-assumption}, with observations starting at a regeneration time. Define the long-run risk under \(P\) by
\[
R_P(\theta_0)
:=\frac{\mathbb E_{A,P}\!\left[\sum_{t=1}^{\Delta_1}Y_t\right]}m.
\]
For \(\delta\in(0,1)\), define the minimax estimation error by
\begin{align*}
r_{n,\delta}(m,C,\varepsilon)
:=\inf_{\widetilde R_n}\inf\Bigl\{r\geq0:
\sup_{P\in\mathfrak P(m,C,\varepsilon)}
\mathbb P_{A,P}\bigl(
|\widetilde R_n-R_P(\theta_0)|>r
\bigr)\leq\delta\Bigr\},
\end{align*}
where the first infimum is over real-valued measurable estimators based on the observations and observed regeneration boundaries. For the upper bound, use the completed-cycle empirical risk, setting it to \(1/2\) when \(D_n=0\). Write
\[
C_{C,\delta}
:=8\sqrt{\frac{3C}{\delta}\log\frac8\delta}
+4\log\frac8\delta+\frac{48C}{\delta}+4.
\]

\begin{theorem}
\label{thm:moment-minimax}
\label{cor:moment-minimax}
Assume that \(m\in\mathbb N\). Then, for every \(\delta\in(0,3/8)\) and \(n\geq48Cm/\delta\),
\begin{equation}
\frac1{16\sqrt2}\sqrt{\frac mn}
\leq r_{n,\delta}(m,C,\varepsilon)
\leq C_{C,\delta}\sqrt{\frac mn}.
\label{eq:minimax-match}
\end{equation}
Thus, for each fixed \(\delta\in(0,3/8)\), the minimax estimation error is of order \(\sqrt{m/n}\).
\end{theorem}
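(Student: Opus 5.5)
The proof has two halves: an upper bound obtained by tuning $\lambda$ in Proposition~\ref{prop:moment-confidence}, and a lower bound from a two-point (Le Cam) argument.

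\emph{Upper bound.} I apply Proposition~\ref{prop:moment-confidence} with
\[
\lambda=\min\Bigl\{\sqrt{\tfrac{\delta n\log(8/\delta)}{12Cm}},\,\tfrac{n}{2u_n}\Bigr\}.
\]
The completed-cycle estimator (set to $1/2$ when $D_n=0$) satisfies \eqref{eq:moment-confidence} on an event where $D_n\geq n/2$, so the convention for $D_n=0$ is never used there.

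If the first entry of the minimum is active, the first two terms of \eqref{eq:moment-confidence} combine to $2\sqrt{(48C/\delta)\log(8/\delta)\,m/n}=8\sqrt{(3C/\delta)\log(8/\delta)}\sqrt{m/n}$.

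If the cap $n/(2u_n)$ is active, then $\tfrac{2}{\lambda}\log\tfrac8\delta=\tfrac{4u_n}{n}\log\tfrac8\delta=4\log\tfrac8\delta\,(m/n)^{(1+\varepsilon)/(2+\varepsilon)}$. The linear term $24C\lambda m/(\delta n)$ is then smaller than its value at the unconstrained optimum, so it is still bounded by the first summand.

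Finally, since $n\geq m$ we have $(m/n)^{(1+\varepsilon)/(2+\varepsilon)}\leq (m/n)^{1/2}$. Summing the pieces gives $C_{C,\delta}\sqrt{m/n}$ as the upper bound on $r_{n,\delta}$.

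\emph{Lower bound.} I use a two-point argument within $\mathfrak P(m,C,\varepsilon)$. All cycles have deterministic length $m$ (allowed because $m\in\mathbb N$), so \eqref{eq:moment-assumption} holds with $C=1\le C$. Within each cycle, $Y_t\equiv b_j$ is constant, with $b_j\sim\mathrm{Bernoulli}(1/2\pm h)$ i.i.d. across cycles. Then $R_P=1/2\pm h$, and the observation reveals at most $N=\lceil n/m\rceil\le 2n/m$ independent Bernoulli draws, since partially observed cycles still reveal only $b_j$.

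Let $P_\pm^{N}$ denote the two product laws and take $h=\tfrac{1}{8\sqrt2}\sqrt{m/n}$. The KL divergence is $\KL(P_+^{N}\Vert P_-^{N})\le N\cdot 16h^2\cdot c$ for $h\le1/4$, which is small (of order $1/4$ or less). By Le Cam/Pinsker, any estimator $\widetilde R_n$ with $\sup_P\mathbb P(|\widetilde R_n-R_P|>r)\le\delta$ and $r<h$ would separate the two hypotheses with total error at most $2\delta<3/4$. This contradicts a lower bound on the testing error, $\tfrac12(1-\mathrm{TV})\ge \tfrac12\bigl(1-\sqrt{\KL/2}\bigr)>3/8$. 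Hence $r_{n,\delta}\ge h$. The condition $n\geq 48Cm/\delta$ guarantees $h\le1/4$.

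The main obstacle is pinning the constants: choosing $h$ and bounding the Bernoulli KL, $\KL\le 4h^2/(1/4-h^2)\cdot$ per draw, so that $\mathrm{TV}<1/4$ holds exactly at $h=\frac1{16\sqrt2}\sqrt{m/n}$ and matches the threshold $\delta<3/8$. I would first try the $\chi^2$ bound on the Bernoulli divergence, together with the fact that at most $2n/m$ draws are observed.
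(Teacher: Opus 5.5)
Your proof follows the paper's proof essentially step for step. For the upper bound you use the same tuned $\lambda$ and the same case split in Proposition~\ref{prop:moment-confidence}. For the lower bound you use the same two-point submodel: deterministic cycle length $m$, one Bernoulli label per cycle, reduction to $\lceil n/m\rceil$ i.i.d.\ draws, Pinsker, and the test $\mathbf 1_{\{\widetilde R_n\geq 1/2\}}$.

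The one step that does not close as written is the constant in the body of your lower bound. With $h=\frac{1}{8\sqrt2}\sqrt{m/n}$, $N\leq 2n/m$, and the per-draw $\chi^2$ bound $\KL\leq 16h^2/(1-4h^2)$, you only get $\KL\lesssim 1/4$. Pinsker then gives $\operatorname{TV}\lesssim 0.35$ and $\frac12(1-\operatorname{TV})\approx 0.32<3/8$, so no contradiction with $\delta<3/8$ follows. Your claimed inequality $\frac12(1-\sqrt{\KL/2})\geq 3/8$ requires $\KL\leq 1/8$.

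Taking $h=\frac{1}{16\sqrt2}\sqrt{m/n}$, as your last paragraph suggests, fixes this. It gives $\KL\leq 1/(16(1-4h^2))\leq 1/8$, hence $\operatorname{TV}\leq 1/4$ and average testing error at least $3/8>\delta$, which is exactly the stated constant. The paper reaches the same point by choosing $h_n=1/(16\sqrt{k_n})$ with $k_n=\lceil n/m\rceil$. Then $\KL\leq 32k_nh_n^2=1/8$ holds exactly, and only afterwards does it use $k_n\leq 2n/m$ to get $h_n\geq\frac{1}{16\sqrt2}\sqrt{m/n}$.

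Your original $\frac{1}{8\sqrt2}$ can in fact be kept, but not via the $\chi^2$ bound. Use the exact Bernoulli divergence $2h\log\frac{1+2h}{1-2h}\leq 8h^2/(1-2h)$, together with $N\leq n/m+1$ and $n\geq 48Cm/\delta>128m$. This gives $\KL<0.07$ and hence $\operatorname{TV}<1/4$.
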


\section{Self-normalized Gaussian approximation}
\label{sec:regenerative-gaussian}

The moment assumption \eqref{eq:moment-assumption} also allows us to estimate the variance of Gaussian fluctuations from the completed blocks. We use the self-normalized Bernstein inequality in Theorem~\ref{thm:regenerative-bernstein} to control the empirical centering in this estimator. We then combine this control with the regenerative strong approximation of \citet[Theorem~1]{bashtova_shashkin_heavy_2020}.

Keep \(m:=\mathbb E_A[\Delta_1]\), fix \(\theta\in\Theta\), and put \(p:=2+\varepsilon\). Define the completed-block and full-path centered sums by
\[
S_{n,\theta}^{\mathrm{reg}}
:=\sum_{j=1}^{M_n-1}\xi_{j,\theta},
\qquad
S_{n,\theta}^{\mathrm{full}}
:=\sum_{t=1}^n\{R(\theta)-\ell_\theta(Z_t)\}.
\]
Both sums are zero at \(n=0\). On \(\{D_n>0\}\), define the observable quadratic sum and variance estimator by
\begin{equation*}
\widehat Q_{n,\theta}
:=\sum_{j=1}^{M_n-1}
\left\{L_{j,\theta}-\Delta_j\widehat R_n^{\mathrm{reg}}(\theta)\right\}^2,
\qquad
\widehat\sigma_{n,\theta}^2:=\frac{\widehat Q_{n,\theta}}{D_n}.
\end{equation*}
Set both quantities to zero when \(D_n=0\).

\begin{theorem}
\label{thm:regenerative-gaussian}
Assume that \(\tau_1=0\), that the regenerative blocks are i.i.d. with positive integer lengths, and that \eqref{eq:moment-assumption} holds for some \(\varepsilon>0\) and \(C\geq1\). Keep the regenerative law and \(\theta\) fixed as \(n\to\infty\), and suppose that
\[
v_\theta^2:=\mathbb E_A[\xi_{1,\theta}^2]>0,
\qquad
\sigma_\theta^2:=\frac{v_\theta^2}{m}.
\]
There is a realization of the joint law of the losses and regeneration boundaries, together with a standard Brownian motion \(W_\theta\), such that, almost surely,
\begin{equation}
\max_{0\leq k\leq n}
\left|S_{k,\theta}^{\mathrm{reg}}-\sigma_\theta W_\theta(k)\right|
=o(n^{1/p}).
\label{eq:regenerative-strong-approximation}
\end{equation}
Moreover,
\begin{equation}
\widehat\sigma_{n,\theta}^2-\sigma_\theta^2
=
\begin{cases}
o(n^{2/p-1}),&2<p<4,\\
O\!\left(\sqrt{\log(n+2)/n}\right),&p\geq4.
\end{cases}
\label{eq:regenerative-variance-rate}
\end{equation}
In particular, \(D_n\) and \(\widehat Q_{n,\theta}\) are positive eventually. Then,
\begin{align}
\max_{0\leq k\leq n}
\left|
\frac{S_{k,\theta}^{\mathrm{reg}}}{\widehat\sigma_{n,\theta}\sqrt n}
-\frac{W_\theta(k)}{\sqrt n}
\right|
&=o(n^{1/p-1/2}),
\label{eq:regenerative-studentized-approximation}\\
\sup_{0\leq t\leq1}
\left|
\frac{S_{\lfloor nt\rfloor,\theta}^{\mathrm{reg}}}{\sqrt{\widehat Q_{n,\theta}}}
-\frac{W_\theta(nt)}{\sqrt n}
\right|
&=o(n^{1/p-1/2}),
\label{eq:regenerative-functional-approximation}
\end{align}
and conclusions similar to \eqref{eq:regenerative-strong-approximation},\eqref{eq:regenerative-studentized-approximation}, and \eqref{eq:regenerative-functional-approximation} hold for $S^{\mathrm{full}}$ as well.
Consequently,
\begin{equation}
\frac{\sqrt{D_n}\{R(\theta)-\widehat R_n^{\mathrm{reg}}(\theta)\}}
{\widehat\sigma_{n,\theta}}
=\frac{W_\theta(n)}{\sqrt n}+o(n^{1/p-1/2})
\ \xrightarrow{\ d\ }\ \mathcal N(0,1).
\label{eq:regenerative-studentized-clt}
\end{equation}
For both $H=\mathrm{full}$ and $H=\mathrm{reg}$, the processes \(t\mapsto S_{\lfloor nt\rfloor,\theta}^{H}/\sqrt{\widehat Q_{n,\theta}}\) converge in distribution to standard Brownian motion in \(D[0,1]\) with the usual Skorokhod topology.
\end{theorem}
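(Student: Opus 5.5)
The plan is to take the strong approximation \eqref{eq:regenerative-strong-approximation} from \citet[Theorem~1]{bashtova_shashkin_heavy_2020}, so that most of the work is the variance-estimator rate \eqref{eq:regenerative-variance-rate}, followed by routine transfers.

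\textbf{Step 1: the strong approximation.} Write \(S^{\mathrm{reg}}_{k,\theta}=\sum_{j<M_k}\xi_{j,\theta}\). This is a randomly stopped sum of i.i.d. centred variables \(\xi_{j,\theta}\), with \(|\xi_{j,\theta}|\le\Delta_j\), so \(\mathbb E_A|\xi_{1,\theta}|^p<\infty\) by \eqref{eq:moment-assumption}. These are exactly the hypotheses of \citet[Theorem~1]{bashtova_shashkin_heavy_2020} for renewal-reward processes with \(p\)-th moments. On an enlarged probability space that theorem gives \(W_\theta\) with \(\max_{k\le n}|S^{\mathrm{reg}}_{k,\theta}-\sigma_\theta W_\theta(k)|=o(n^{1/p})\) almost surely, with \(\sigma_\theta^2=v_\theta^2/m\).

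\textbf{Step 2: the full-path sum.} The full-path sum differs from the completed-block sum by the incomplete last cycle, which is bounded by \(\Delta_{M_n}\le\max_{j\le n+1}\Delta_j\). Since \(\mathbb E\Delta_1^p<\infty\), Borel--Cantelli gives \(\max_{j\le n+1}\Delta_j=o(n^{1/p})\) almost surely, so the same approximation holds for \(S^{\mathrm{full}}\).

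\textbf{Step 3: the variance estimator.} Put \(\hat R=\widehat R_n^{\mathrm{reg}}(\theta)\). Then \(L_{j,\theta}-\Delta_j\hat R=-\xi_{j,\theta}+\Delta_j(R(\theta)-\hat R)\), and expanding the square gives
\[
\widehat Q_{n,\theta}=\sum_{j<M_n}\xi_{j,\theta}^2-2(R-\hat R)\sum_{j<M_n}\Delta_j\xi_{j,\theta}+(R-\hat R)^2\sum_{j<M_n}\Delta_j^2 .
\]
The first term is handled by the Marcinkiewicz--Zygmund strong law for i.i.d. variables with \(p/2\) moments, combined with \(M_n/n\to 1/m\) and \(D_n/n\to1\) almost surely. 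This gives \(\sum_{j<M_n}\xi_{j,\theta}^2/D_n-\sigma_\theta^2=o(n^{2/p-1})\) when \(p<4\). When \(p\ge4\), the law of the iterated logarithm gives \(O(\sqrt{\log\log n/n})\). The centering error enters through \(|R-\hat R|\). Here I would apply Theorem~\ref{thm:regenerative-bernstein} with \(\delta_n=n^{-2}\), \(u_n=n^{1/p}\) and \(\lambda\asymp\sqrt{n\log n}\), and use Borel--Cantelli to get \(|R-\hat R|=O(\sqrt{\log n/n})\) almost surely. The tail and \(T_{n,u}\) terms are controlled as in Proposition~\ref{prop:moment-confidence}. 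The cross term is at most \(|R-\hat R|\,(\sum_{j<M_n}\Delta_j^2)^{1/2}(\sum_{j<M_n}\xi_{j,\theta}^2)^{1/2}=O(\sqrt{\log n/n})\cdot n\), so after dividing by \(D_n\) it is \(O(\sqrt{\log n/n})\). The last term is \(O(\log n/n)\). Comparing \(\sqrt{\log n/n}\) with \(n^{2/p-1}\) for \(p<4\) gives the two regimes in \eqref{eq:regenerative-variance-rate}. Positivity of \(\widehat Q_{n,\theta}\) eventually follows because \(\sigma^2_\theta>0\).

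\textbf{Step 4: the studentized statements.} For \eqref{eq:regenerative-studentized-approximation}, write
\[
\frac{S}{\hat\sigma\sqrt n}-\frac{W}{\sqrt n}=\frac{S-\sigma W}{\hat\sigma\sqrt n}+\frac{W}{\sqrt n}\Bigl(\frac{\sigma}{\hat\sigma}-1\Bigr).
\]
The first piece is \(o(n^{1/p-1/2})\) by Step 1. For the second, \(\max_{k\le n}|W(k)|=O(\sqrt{n\log\log n})\) and \(\sigma/\hat\sigma-1\) is controlled by Step 3, so this piece is also \(o(n^{1/p-1/2})\). I need to check the \(p\ge4\) case, where the rate is \(\sqrt{\log n/n}\cdot\sqrt{\log\log n}\). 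This is still \(o(n^{1/p-1/2})\) as long as \(1/p-1/2>-1/2\), which holds. The functional version \eqref{eq:regenerative-functional-approximation} follows from \(\widehat Q_{n,\theta}=\hat\sigma^2D_n\) and \(D_n/n=1+o(n^{1/p-1})\). The studentized CLT \eqref{eq:regenerative-studentized-clt} follows by taking \(k=n\) and using \(S^{\mathrm{reg}}_{n}=D_n(R-\hat R)\), because \(\xi\) sums to \(D_nR-\sum L\). Convergence to Brownian motion in \(D[0,1]\) follows from the uniform approximation together with Brownian scaling.

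\textbf{Main obstacle.} The step I expect to be hardest is getting an almost-sure rate for \(R-\hat R\) from Theorem~\ref{thm:regenerative-bernstein} under moments only. The trouble is the random quantities \(T_{n,u}\) and \(M_n\beta(u_n)\). I would bound \(T_{n,u}\) by \(\sum_{j\le n}\Delta_j\mathbf 1_{\{\Delta_j>n^{1/p}\}}\), whose expectation is \(n\beta(n^{1/p})=o(n^{2/p})\), and then use a Markov/Borel--Cantelli argument along geometric subsequences. A fallback is to obtain \(R-\hat R=O(n^{1/p-1})\) directly from Step 1 plus the LIL, which already suffices for the rate in the case \(p<4\).
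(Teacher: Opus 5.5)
Your route is essentially the paper's. You use the Bashtova--Shashkin coupling: you apply it to the renewal--reward sum $S^{\mathrm{reg}}$ and transfer to $S^{\mathrm{full}}$, while the paper does the reverse, and the transfer cost is $\max_{j\le n+1}\Delta_j=o(n^{1/p})$ either way. The remaining steps also match: Theorem~\ref{thm:regenerative-bernstein} with summable $\delta_n$ plus Borel--Cantelli for the centering error, the same expansion of $\widehat Q_{n,\theta}$, and the same triangle-inequality split for the studentized statements.

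The obstacle you single out is not real. By your own Step~2, $\max_{j\le n+1}\Delta_j=o(n^{1/p})=o(u_n)$, so almost surely every completed cycle eventually satisfies $\Delta_j\le u_n$. Then all truncation indicators in \eqref{eq:regenerative-bernstein} equal one and $T_{n,u}=0$ for large $n$. The other term is $M_n\beta(u_n)/D_n=O(n^{1/p-1})$ almost surely, since $M_n\le n+1$, $D_n/n\to1$ and $\beta(u)\le\mathbb E_A[\Delta_1^p]u^{1-p}$. This is exactly what the paper does, so no Markov/subsequence argument is needed. Your bound $n\beta(n^{1/p})=o(n^{2/p})$ would also be too weak for that argument: after dividing by $D_n$ it is not $O(\sqrt{\log n/n})$ when $p<4$; the correct order is $O(n^{1/p})$.

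Your fallback is misstated, but it actually works. Step~1, or directly the Hartman--Wintner LIL for the i.i.d.\ $\xi_{j,\theta}$ at the index $N_n\le n$, gives $D_n|R(\theta)-\widehat R_n^{\mathrm{reg}}(\theta)|=|S^{\mathrm{reg}}_{n,\theta}|=O(\sqrt{n\log\log n})$. So the centering error is $O(\sqrt{\log\log n/n})$, not $O(n^{1/p-1})$, because the Brownian term dominates the coupling error. This rate suffices in both regimes of \eqref{eq:regenerative-variance-rate} and even sharpens the $p\geq4$ rate to $O(\sqrt{\log\log n/n})$. It makes Theorem~\ref{thm:regenerative-bernstein} dispensable here. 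The paper takes the Bernstein route deliberately, to showcase the inequality, at the price of $\log n$ in place of $\log\log n$.

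Two steps are incomplete as written.

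In Step~3, the almost-sure limits $M_n/n\to1/m$ and $D_n/n\to1$ give no rate for the term $v_\theta^2(N_n/D_n-1/m)$ in $\sum_{j<M_n}\xi_{j,\theta}^2/D_n-\sigma_\theta^2$. You need $D_n-mN_n=\sum_{j\le N_n}(\Delta_j-m)=O(\sqrt{n\log\log n})$ from the LIL for the cycle lengths, as in the paper. For $p<4$, the Marcinkiewicz--Zygmund law for $\Delta_j$ with exponent $p/2$ also works.

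In Step~4, \eqref{eq:regenerative-functional-approximation} compares with $W_\theta(nt)$, not $W_\theta(\lfloor nt\rfloor)$. You therefore also need $\max_{k\le n}\sup_{0\le u\le1}|W_\theta(k+u)-W_\theta(k)|=O(\sqrt{\log n})$ almost surely. The paper obtains this from the Gaussian maximal tail bound, a union bound over unit intervals, and Borel--Cantelli.
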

% The statement concerns a fixed loss and law; a coupling uniform over \(\Theta\) or for a data-selected posterior requires additional assumptions.

\section{Proofs}
\label{sec:proofs-results}

\subsection{Proof of Theorem~\ref{thm:regenerative-bernstein}}

For deterministic \(s\in(0,1/u_n)\), define 
\[
\Phi_s(x):=sx-\log(1+sx),\qquad x>-1/s,
\]
and, for \(m\in\mathbb N_0\),
\[
\mathcal M_{m,\theta}(s)
:=
\exp\left\{
s\sum_{j=1}^mY_{j,\theta}
-\sum_{j=1}^m\Phi_s(Y_{j,\theta})
-m\log(1+s\beta(u_n))
\right\}
\]
where we remind that
$$Y_{j,\theta}:=\xi_{j,\theta}\mathbf 1_{\{\Delta_j\leq u_n\}} .$$

\begin{lemma}
\label{lem:stopped-exponential}
Fix \(\theta\in\Theta\) and deterministic \(s\in(0,1/u_n)\). Then \((\mathcal M_{m,\theta}(s))_{m\geq0}\) is a nonnegative supermartingale with respect to \((\mathcal F_m)_{m\geq0}\), with \(\mathcal M_{0,\theta}(s)=1\). In particular,
\begin{equation}
\mathbb E[\mathcal M_{M_n,\theta}(s)]\leq1.
\label{eq:donsker-prelimit}
\end{equation}
\end{lemma}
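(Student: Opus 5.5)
The approach is to show that each factor of \(\mathcal M_{m,\theta}(s)\) is an explicit multiplicative increment with conditional mean at most one, and then apply optional stopping at the bounded stopping time \(M_n\).

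\emph{Well-definedness and the increment.} Since \(|\xi_{j,\theta}|\leq\Delta_j\), on \(\{\Delta_j\leq u_n\}\) we have \(|Y_{j,\theta}|\leq u_n\), and \(Y_{j,\theta}=0\) otherwise. Hence \(sY_{j,\theta}\geq -su_n>-1\) because \(s<1/u_n\). So \(\Phi_s(Y_{j,\theta})\) is finite, and \(\mathcal M_{m,\theta}(s)\) is a well-defined, strictly positive, \(\mathcal F_m\)-measurable random variable, with \(\mathcal M_{0,\theta}(s)=1\) (empty sums). The key algebraic observation is that \(sx-\Phi_s(x)=\log(1+sx)\). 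Therefore
\[
\frac{\mathcal M_{m,\theta}(s)}{\mathcal M_{m-1,\theta}(s)}=\frac{1+sY_{m,\theta}}{1+s\beta(u_n)} .
\]

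\emph{Conditional expectation.} By the marked-cycle assumptions, \((\Delta_m,B_m)\) is independent of \(\mathcal F_{m-1}=\sigma(\tau_1,B_1,\ldots,B_{m-1})\) and has law \(\mathbb P_A\). Since \(Y_{m,\theta}\) is a function of \((\Delta_m,B_m)\), this gives
\[
\mathbb E[1+sY_{m,\theta}\mid\mathcal F_{m-1}]=1+s\,\mathbb E_A[Y_{1,\theta}],
\]
and \(Y_{1,\theta}\) is integrable because it is bounded by \(u_n\). Using the centering \(\mathbb E_A[\xi_{1,\theta}]=0\) from Kac's lemma, we get
\[
\mathbb E_A[Y_{1,\theta}]=-\mathbb E_A[\xi_{1,\theta}\mathbf 1_{\{\Delta_1>u_n\}}]\leq\mathbb E_A[\Delta_1\mathbf 1_{\{\Delta_1>u_n\}}]=\beta(u_n),
\]
where the inequality uses \(|\xi_{1,\theta}|\leq\Delta_1\). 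Thus \(\mathbb E[\mathcal M_{m,\theta}(s)\mid\mathcal F_{m-1}]\leq\mathcal M_{m-1,\theta}(s)\). Integrability follows by induction from \(\mathcal M_{0,\theta}(s)=1\), so the process is a nonnegative supermartingale.

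\emph{Stopping.} The remaining step, and the only place that needs some care, is to check that \(M_n\) is a bounded \((\mathcal F_m)\)-stopping time. For \(m\geq1\), we have
\[
\{M_n\leq m\}=\{\tau_{m+1}>n\},\qquad \tau_{m+1}=\tau_1+\sum_{j\leq m}\Delta_j .
\]
Since \(\tau_1\in\mathcal F_0\) and each \(\Delta_j\) is determined by \(B_j\), this event lies in \(\mathcal F_m\). Boundedness comes from the fact that the \(\tau_j\) are strictly increasing integers with \(\tau_1\geq0\). This gives \(\tau_j\geq j-1\), so \(\tau_{n+2}>n\) and hence \(1\leq M_n\leq n+1\). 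The optional stopping theorem for bounded stopping times then yields \(\mathbb E[\mathcal M_{M_n,\theta}(s)]\leq\mathbb E[\mathcal M_{0,\theta}(s)]=1\), which is \eqref{eq:donsker-prelimit}.
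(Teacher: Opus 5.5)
Your proposal is correct and follows the paper's proof essentially step for step. Like the paper, you use the identity $e^{sx-\Phi_s(x)}=1+sx$ to get a multiplicative increment whose conditional mean is at most one, bound the truncated mean by $\beta(u_n)$, and apply bounded optional stopping at $M_n\leq n+1$. You also spell out the well-definedness, measurability and boundedness details more explicitly than the paper does.
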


\begin{proof}[Proof of Lemma~\ref{lem:stopped-exponential}]
We first bound the mean introduced by truncation, then verify the supermartingale property and apply optional stopping at \(M_n\), the index of the first cycle whose end is after time \(n\). Since the untruncated reward is centered and bounded by the block length,
\begin{align}
\mathbb E_A[Y_{j,\theta}]
&=-\mathbb E_A\left[
\xi_{j,\theta}\mathbf 1_{\{\Delta_j>u_n\}}\right]
\notag\\
&\leq\mathbb E_A\left[
\Delta_j\mathbf 1_{\{\Delta_j>u_n\}}\right]
=\beta(u_n).
\label{eq:truncated-mean}
\end{align}
The admissible range of \(s\) ensures that
\[
|Y_{j,\theta}|\leq u_n,
\qquad 1+sY_{j,\theta}>0.
\]
Thus \(\Phi_s(Y_{j,\theta})\) is well defined. Recall from the definition of $\Phi$ that
\[
e^{sx-\Phi_s(x)}=1+sx.
\]
Independence of the marked cycles from the preceding block history, together with \eqref{eq:truncated-mean}, now gives
\[
\begin{split}
&\mathbb E\left[
\exp\left\{sY_{j,\theta}-\Phi_s(Y_{j,\theta})
-\log(1+s\beta(u_n))\right\}
\,\middle|\,\mathcal F_{j-1}\right]\\
&\qquad=\frac{1+s\mathbb E[Y_{j,\theta}\mid\mathcal F_{j-1}]}
{1+s\beta(u_n)}\leq1.
\end{split}
\]
Multiplying by the preceding value of \(\mathcal M_{j-1,\theta}(s)\) proves that the process is a nonnegative supermartingale starting at one.

It remains to check that optional stopping applies. The random index \(M_n\) is a stopping time because
\[
\{M_n\leq m\}=\{\tau_{m+1}>n\}\in\mathcal F_m.
\]
Positive integer cycle lengths also imply
\[
M_n\leq n+1.
\]
The bounded optional stopping theorem for supermartingales \cite[Chapter~10, Section~7, p.~495]{gut_probability_2005} therefore yields
\[
\mathbb E[\mathcal M_{M_n,\theta}(s)]
\leq\mathbb E[\mathcal M_{0,\theta}(s)]=1,
\]
which is \eqref{eq:donsker-prelimit}.
\end{proof}

We are now ready to prove the theorem.

\begin{proof}[Proof of Theorem~\ref{thm:regenerative-bernstein}]
We apply the stopped exponential bound to both signs of the truncated reward. We then remove the first cycle ending after time \(n\) and restore the rewards discarded by truncation. We work on \(\{D_n>0\}\), where the empirical risk is an average over completed cycles.

Set \(s:=\lambda/n\in(0,1/u_n)\). Replacing \(\ell_\theta\) by \(1-\ell_\theta\) changes the sign of its centered block reward. Lemma~\ref{lem:stopped-exponential}, Markov's inequality, and a union bound therefore give an event of probability at least \(1-\delta\) on which, for each \(a\in\{-1,1\}\),
\[
 s\sum_{j=1}^{M_n}aY_{j,\theta}
 \leq\sum_{j=1}^{M_n}\Phi_s(aY_{j,\theta})
 +M_n\log(1+s\beta(u_n))+\log\frac2\delta.
\]
For \(b\geq0\) with \(sb<1\), the definition of \(\Phi_s\) gives
\[
 \Phi_s(x)\leq q(sb),\qquad x\in[-b,b].
\]
Thus the completed cycles satisfy, for both signs,
\[
 \Phi_s(aY_{j,\theta})
 \leq\mathbf 1_{\{\Delta_j\leq u_n\}}q(s\Delta_j),
 \qquad j<M_n.
\]
For the additional cycle, truncation gives
\[
 |Y_{M_n,\theta}|\leq u_n,
 \qquad\Phi_s(aY_{M_n,\theta})\leq q(su_n).
\]
Removing its reward costs at most \(su_n\). Since
\[
 q(su_n)+su_n=-\log(1-su_n),
\]
the two signed inequalities imply
\[
 \begin{split}
 s\left|\sum_{j=1}^{M_n-1}Y_{j,\theta}\right|
 \leq{}&\log\frac2\delta+M_n\log(1+s\beta(u_n))\\
 &+\sum_{j=1}^{M_n-1}\mathbf 1_{\{\Delta_j\leq u_n\}}q(s\Delta_j)
 -\log(1-su_n).
 \end{split}
\]
To restore the untruncated rewards, use
\[
 \begin{split}
 D_n\left|R(\theta)-\widehat R_n^{\mathrm{reg}}(\theta)\right|
 &=\left|\sum_{j=1}^{M_n-1}\xi_{j,\theta}\right|\\
 &\leq\left|\sum_{j=1}^{M_n-1}Y_{j,\theta}\right|
 +\sum_{j=1}^{M_n-1}\Delta_j\mathbf 1_{\{\Delta_j>u_n\}}.
 \end{split}
\]
Substitute the bound for the truncated sum and divide by \(D_n\) to obtain \eqref{eq:regenerative-bernstein}. The convention at \(D_n=0\) covers the remaining case.
\end{proof}

\subsection{Proof of Corollary~\ref{cor:empirical-bernstein}}

\begin{proof}[Proof of Corollary~\ref{cor:empirical-bernstein}]
We first prove an inequality that holds simultaneously for finitely many deterministic values of the tuning parameter. We can then choose one of these values using the observed block lengths. The case \(D_n=0\) is covered by convention, so assume \(D_n>0\).

Consider the values
\[
s_j:=\frac{2^{-(j+1)}}{u_n},\qquad 0\leq j<\frac{J_n}{2}.
\]
For each of these values, apply Theorem~\ref{thm:regenerative-bernstein} with $\lambda=ns_j$, and replacing $\delta$ by $\frac{2\delta}{J_n}.$ There are \(J_n/2\) such values, so a union bound gives an event of probability at least \(1-\delta\) on which all the resulting inequalities hold. On this event, the elementary bounds
\[
q(x)\leq x^2,\qquad -\log(1-x)\leq2x
\quad(0\leq x\leq1/2),\qquad
\log(1+y)\leq y\quad(y\geq0)
\]
give, simultaneously for every \(s_j\),
\[
\begin{split}
D_n\left|R(\theta)-\widehat R_n^{\mathrm{reg}}(\theta)\right|
\leq{}&\frac{\log(J_n/\delta)}{s_j}+s_jH_{n,u}+M_n\beta(u_n)+2u_n+T_{n,u}.
\end{split}
\]

It remains to choose one of the values \(s_j\) so that the sum of the first two terms is close to its minimum. If \(s\) could be chosen freely subject to \(s\leq1/(2u_n)\), the minimizing value would be
\[
s_*:=\min\left\{\sqrt{\frac{\log(J_n/\delta)}{H_{n,u}}},
\frac1{2u_n}\right\}.
\]
When \(H_{n,u}=0\), take \(s_*=1/(2u_n)\). The smallest value in our collection is no greater than \(s_*\). To see this, note that
\[
H_{n,u}\leq D_n^2\leq n^2,
\qquad\log(J_n/\delta)\geq\log2.
\]
If \(n/u_n\geq1\), the smallest value consequently satisfies
\[
s_{J_n/2-1}\leq\frac1{2n}
<\frac{\sqrt{\log(J_n/\delta)}}n\leq s_*
\]
whenever the square-root term determines \(s_*\). If the minimum is attained at the largest allowed value, then \(s_*=s_0\). If \(n/u_n<1\), then \(J_n=2\), and the same bounds show that \(s_*=s_0\) also in this case.

When the square-root term is smaller than \(1/(2u_n)\), successive values \(s_j\) differ by a factor of two, so there is an index \(j\) such that
\[
\frac{s_*}{2}\leq s_j\leq s_*.
\]
Then
\[
\frac{\log(J_n/\delta)}{s_j}+s_jH_{n,u}
\leq3\sqrt{H_{n,u}\log(J_n/\delta)}.
\]
When \(s_*=1/(2u_n)\), choose \(s_0=1/(2u_n)\). In this case,
\[
\frac{H_{n,u}}{2u_n}\leq\sqrt{H_{n,u}\log(J_n/\delta)},
\]
and hence
\[
\frac{\log(J_n/\delta)}{s_0}+s_0H_{n,u}
\leq2u_n\log(J_n/\delta)
+\sqrt{H_{n,u}\log(J_n/\delta)}.
\]
Substituting either estimate into the inequality that holds for every \(s_j\), and dividing by \(D_n\), proves \eqref{eq:empirical-bernstein}.
\end{proof}

\subsection{Proof of Theorem~\ref{thm:expectation-deviation}}

\begin{proof}[Proof of Theorem~\ref{thm:expectation-deviation}]
We integrate the Bernstein tail bound before estimating its random remainder. This first controls the deviation weighted by \(D_n\); the length outside the completed cycles accounts for removing that weight. For the calculations below, write
\[
e_n:=\left|R(\theta)-\widehat R_n^{\mathrm{reg}}(\theta)\right|,
\qquad s:=\frac{\lambda}{n},
\]
and denote the logarithmic correction by
\[
\begin{split}
C_n(s):={}&M_n\log(1+s\beta(u_n))\\
&+\sum_{j=1}^{M_n-1}\mathbf 1_{\{\Delta_j\leq u_n\}}q(s\Delta_j)
-\log(1-su_n).
\end{split}
\]
We use \(H_{n,u}\) and \(T_{n,u}\) as defined in Corollary~\ref{cor:empirical-bernstein}. All quantities are integrable. The number of blocks is at most \(n+1\), the completed lengths and tail sum are at most \(n\), and \(su_n\leq1/2\).

\medskip
For every \(t>0\), apply Theorem~\ref{thm:regenerative-bernstein} with \(\delta=e^{-t}\). It gives
\[
\mathbb P\left(sD_ne_n-C_n(s)-sT_{n,u}>\log2+t\right)\leq e^{-t}.
\]
This statement also holds when \(D_n=0\). Since there are no completed blocks, the expression on the left of the threshold is \(-C_n(s)\leq0\). Integrating its positive part yields
\[
\mathbb E\left[(sD_ne_n-C_n(s)-sT_{n,u}-\log2)_+\right]
\leq\int_0^\infty e^{-t}\,dt=1.
\]
Therefore
\[
s\mathbb E[D_ne_n]
\leq1+\log2+\mathbb E[C_n(s)]+s\mathbb E[T_{n,u}].
\]
Since the loss and empirical risk lie in \([0,1]\),
\[
e_n=\frac{D_n}{n}e_n+\left(1-\frac{D_n}{n}\right)e_n
\leq\frac{D_n}{n}e_n+\frac{n-D_n}{n}.
\]
Taking expectations and using \(sn=\lambda\), we obtain
\begin{equation}
\mathbb E[e_n]
\leq\frac{1+\log2+\mathbb E[C_n(s)]}{\lambda}
+\frac{\mathbb E[T_{n,u}]}n+\frac{n-\mathbb E[D_n]}n.
\label{eq:expectation-deviation-integrated}
\end{equation}

\medskip
We bound the sum over completed cycles by including cycle \(M_n\), the first cycle whose end is after time \(n\). The enlarged sum ends at a stopping time, so we can use the following expectation identity. For any measurable \(f:\mathbb N\to[0,\infty)\) with \(\mathbb E_A[f(\Delta_1)]<\infty\), independence and \(\{M_n\geq j\}\in\mathcal F_{j-1}\) give
\[
\begin{split}
\mathbb E\left[\sum_{j=1}^{M_n}f(\Delta_j)\right]
&=\sum_{j=1}^{n+1}\mathbb E\left[
\mathbf 1_{\{M_n\geq j\}}f(\Delta_j)\right]\\
&=\sum_{j=1}^{n+1}\mathbb P(M_n\geq j)\,
\mathbb E_A[f(\Delta_1)]\\
&=\mathbb E[M_n]\mathbb E_A[f(\Delta_1)].
\end{split}
\]
Apply this identity to the truncated squared length and the discarded length. It follows that
\begin{align}
\mathbb E[H_{n,u}]&\leq\mathbb E[M_n]v(u_n),
\label{eq:wald-square-exponential}\\
\mathbb E[T_{n,u}]&\leq\mathbb E[M_n]\beta(u_n).
\label{eq:wald-tail-exponential}
\end{align}
For the logarithmic terms, the admissible range gives
\begin{align*}
q(x)&\leq x^2,\qquad 0\leq x\leq1/2,\\
-\log(1-x)&\leq2x,\qquad 0\leq x\leq1/2.
\end{align*}
Together with \(\log(1+x)\leq x\) for nonnegative \(x\), these bounds imply
\[
\begin{split}
\mathbb E[C_n(s)]
&\leq s\beta(u_n)\mathbb E[M_n]+s^2\mathbb E[H_{n,u}]+2su_n\\
&\leq s\beta(u_n)\mathbb E[M_n]
+s^2v(u_n)\mathbb E[M_n]+2su_n.
\end{split}
\]
Substituting this estimate and \eqref{eq:wald-tail-exponential} into \eqref{eq:expectation-deviation-integrated}, and replacing \(s\) by \(\lambda/n\), proves \eqref{eq:expectation-deviation}.
\end{proof}

\subsection{Proof of Corollary~\ref{cor:deviation-exponential} and Corollary~\ref{cor:deviation-polynomial}}

For both proofs, we will use the following notations:
\[
S_0:=0,\qquad S_k:=\sum_{j=1}^k\Delta_j\quad(k\in\mathbb N),\qquad
N_n:=\max\{k\geq0:S_k\leq n\}.
\]
Then \(M_n=N_n+1\), \(D_n=S_{N_n}\), and
\[
A_n:=n-D_n,\qquad
m:=\mathbb E_A[\Delta_1].
\]
Finally, recall that
$$
r_n=\frac{\mathbb E[M_n]}n.
$$
Since the lengths are positive integers, \(M_n\leq n+1\) and \(r_n\leq(n+1)/n\leq2\). For a fixed regenerative law, \(r_n\to1/m\) and \(\mathbb E[D_n]/n\to1\).

\begin{proof}[Proof of Corollary~\ref{cor:deviation-exponential}]
We first bound the length-dependent quantities in Theorem~\ref{thm:expectation-deviation}, then minimize the resulting bound in \(\lambda\).

For every integer \(u\geq1\), the tail-sum identity gives
\[
\begin{split}
\beta(u)
&=u\mathbb P_A(\Delta_1>u)+\sum_{k=u}^{\infty}\mathbb P_A(\Delta_1>k)\\
&\leq C_{\mathrm e}u e^{-\kappa u}
+C_{\mathrm e}\sum_{k=u}^{\infty}e^{-\kappa k}\\
&=C_{\mathrm e}e^{-\kappa u}
\left(u+\frac1{1-e^{-\kappa}}\right),
\end{split}
\]
which proves \eqref{eq:exponential-beta}. The prescribed truncation level satisfies
\[
e^{-\kappa u_n}\leq\frac1n,\qquad
1\leq u_n\leq1+\frac{\log n}{\kappa},
\]
and therefore gives both inequalities in \eqref{eq:exponential-beta-un}. The second-moment tail-sum identity similarly gives
\[
\begin{split}
v_2
&=1+\sum_{k=1}^{\infty}(2k+1)\mathbb P_A(\Delta_1>k)\\
&\leq1+C_{\mathrm e}\sum_{k=1}^{\infty}(2k+1)e^{-\kappa k}
=1+\frac{C_{\mathrm e}e^{-\kappa}(3-e^{-\kappa})}
{(1-e^{-\kappa})^2}.
\end{split}
\]
In particular, \(v(u_n)\leq v_2<\infty\). Theorem~\ref{thm:expectation-deviation} now gives
\begin{equation}
\begin{split}
\mathbb E\left[\left|R(\theta)-\widehat R_n^{\mathrm{reg}}(\theta)\right|\right]
\leq{}&\frac{1+\log2}{\lambda}+\frac{r_n\lambda v_2}n
+2r_n\beta(u_n)\\
&+\frac{2u_n}n+\frac{\mathbb E[A_n]}n.
\end{split}
\label{eq:exponential-before-age}
\end{equation}

To bound the renewal age, define
\[
U(s):=\sum_{k\geq0}\mathbb P(S_k=s),\qquad s\in\mathbb N_0.
\]
The renewal epochs are strictly increasing, so \(U(s)\leq1\). If \(A_n\geq t\), the last renewal occurs at some \(s\leq n-t\), and the following cycle ends after \(n\). Independence of that next cycle gives
\begin{equation}
\begin{split}
\mathbb P(A_n\geq t)
&=\sum_{s=0}^{n-t}U(s)\mathbb P_A(\Delta_1>n-s)\\
&\leq\sum_{r=t}^n\mathbb P_A(\Delta_1>r),
\qquad 1\leq t\leq n.
\end{split}
\label{eq:renewal-age-tail}
\end{equation}
Summing over \(t\) and changing the order of the finite sums yields
\begin{align}
\mathbb E[A_n]
&\leq\sum_{r=1}^n r\mathbb P_A(\Delta_1>r)
\notag\\
&\leq C_{\mathrm e}\sum_{r=1}^{\infty}re^{-\kappa r}
=\frac{C_{\mathrm e}e^{-\kappa}}{(1-e^{-\kappa})^2}.
\label{eq:exponential-age}
\end{align}
Substituting \eqref{eq:exponential-beta} and \eqref{eq:exponential-age} into \eqref{eq:exponential-before-age} proves \eqref{eq:exponential-unoptimized}.

For the optimization, let \(a,b,c>0\). Differentiating \(a/\lambda+b\lambda\) shows that its minimizer on \(0<\lambda\leq c\) is \(\lambda_*=\min\{\sqrt{a/b},c\}\), and
\begin{equation}
\min_{0<\lambda\leq c}\left(\frac a\lambda+b\lambda\right)
=
\begin{cases}
2\sqrt{ab},&\sqrt{a/b}\leq c,\\[1mm]
a/c+bc,&\sqrt{a/b}>c
\end{cases}
\ \leq 2\sqrt{ab}+\frac ac.
\label{eq:deviation-lambda-optimization}
\end{equation}
Indeed, in the second case \(bc<\sqrt{ab}\). Taking
\[
a=1+\log2,\qquad b=\frac{r_nv_2}{n},\qquad c=\frac n{2u_n}
\]
proves that \eqref{eq:exponential-lambda} is the exact minimizer of \eqref{eq:exponential-unoptimized}. Moreover,
\[
\frac{1+\log2}{\lambda_n}+\frac{r_n\lambda_n v_2}{n}
\leq2\sqrt{\frac{(1+\log2)r_nv_2}{n}}
+\frac{2(1+\log2)u_n}{n}.
\]
Combining this inequality with \(e^{-\kappa u_n}\leq1/n\) proves \eqref{eq:exponential-order-bound}.

Finally, we record the renewal limits, which also apply under the polynomial assumption because they require only finite mean cycle length. The strong law gives \(S_k/k\to m\) almost surely. Since
\[
S_{N_n}\leq n<S_{N_n+1}
\]
and \(N_n\to\infty\), it follows that
\[
\frac{N_n}{n}\longrightarrow\frac1m,\qquad
\frac{M_n}{n}\longrightarrow\frac1m,\qquad
\frac{D_n}{n}\longrightarrow1
\qquad\text{almost surely}.
\]
The bounds \(M_n/n\leq2\) and \(D_n/n\leq1\) permit dominated convergence, so
\[
r_n\longrightarrow\frac1m,\qquad
\frac{\mathbb E[D_n]}n\longrightarrow1.
\]
As \(u_n=O(\log n)\), the constraint on \(\lambda_n\) is eventually inactive and \(\lambda_n\sim\sqrt{(1+\log2)mn/v_2}\). In \eqref{eq:exponential-order-bound}, the square-root term has the leading constant stated in \eqref{eq:exponential-unoptimized-asymptotic}, and all the other terms are \(O((\log n)/n)\).
\end{proof}

\begin{proof}[Proof of Corollary~\ref{cor:deviation-polynomial}]
For every real \(u\geq1\), the tail-integral identity gives
\[
\begin{split}
\beta(u)
&=u\mathbb P_A(\Delta_1>u)
+\int_u^\infty\mathbb P_A(\Delta_1>t)\,dt\\
&\leq C_{\mathrm p}u^{1-\alpha}
+C_{\mathrm p}\int_u^\infty t^{-\alpha}\,dt
=\frac{\alpha C_{\mathrm p}}{\alpha-1}u^{1-\alpha}.
\end{split}
\]
This proves \eqref{eq:polynomial-beta}. Since \(u_n\geq n^{1/\alpha}\) and \(1-\alpha<0\), it also proves \eqref{eq:polynomial-beta-un}.

For the truncated second moment, Tonelli's theorem yields
\[
\begin{split}
v(u)&\leq\mathbb E_A[(\Delta_1\wedge u)^2]
=2\int_0^u t\mathbb P_A(\Delta_1>t)\,dt\\
&\leq1+2C_{\mathrm p}\int_1^u t^{1-\alpha}\,dt.
\end{split}
\]
For \(\alpha>2\), the last integral is at most \(1/(\alpha-2)\); for \(\alpha=2\), it is \(\log u\); and for \(1<\alpha<2\), it is at most \(u^{2-\alpha}/(2-\alpha)\). Using \(u^{2-\alpha}\geq1\) in the last case proves
\begin{equation*}
v(u)\leq V_\alpha(u).
\end{equation*}
Consequently, Theorem~\ref{thm:expectation-deviation} gives
\begin{equation}
\begin{split}
\mathbb E\left[\left|R(\theta)-\widehat R_n^{\mathrm{reg}}(\theta)\right|\right]
\leq{}&\frac{1+\log2}{\lambda}+\frac{r_n\lambda}nV_\alpha(u_n)
+2r_n\beta(u_n)\\
&+\frac{2u_n}n+\frac{\mathbb E[A_n]}n.
\end{split}
\label{eq:polynomial-master}
\end{equation}

The renewal-age argument in \eqref{eq:renewal-age-tail} applies without any exponential tail assumption. Thus
\[
\mathbb E[A_n]\leq\sum_{r=1}^n r\mathbb P_A(\Delta_1>r)
\leq C_{\mathrm p}\sum_{r=1}^n r^{1-\alpha}.
\]
The decreasing summand can be bounded by an integral:
\[
\sum_{r=1}^n r^{1-\alpha}\leq
\begin{cases}
\displaystyle 1+\int_1^\infty t^{1-\alpha}\,dt
=1+\frac1{\alpha-2},&\alpha>2,\\[2mm]
\displaystyle 1+\log n,&\alpha=2,\\[2mm]
\displaystyle 1+\frac{n^{2-\alpha}}{2-\alpha},&1<\alpha<2.
\end{cases}
\]
It follows that
\begin{equation*}
\mathbb E[A_n]\leq G_\alpha(n).
\end{equation*}
Substitution into \eqref{eq:polynomial-master} proves \eqref{eq:polynomial-explicit-bound}.

Apply \eqref{eq:deviation-lambda-optimization} with
\[
a=1+\log2,\qquad b=\frac{r_nV_\alpha(u_n)}n,\qquad
c=\frac n{2u_n}.
\]
This proves that \eqref{eq:polynomial-lambda} minimizes \eqref{eq:polynomial-explicit-bound}, and gives
\[
\frac{1+\log2}{\lambda_n}
+\frac{r_n\lambda_n V_\alpha(u_n)}n
\leq2\sqrt{\frac{(1+\log2)r_nV_\alpha(u_n)}n}
+\frac{2(1+\log2)u_n}n.
\]
Adding the remaining terms proves \eqref{eq:polynomial-deviation-optimized}.

For the rates, the truncation level satisfies
\[
n^{1/\alpha}\leq u_n\leq2n^{1/\alpha},\qquad
u_n^{1-\alpha}\leq n^{-(\alpha-1)/\alpha},\qquad
\frac{u_n}n\leq2n^{-(\alpha-1)/\alpha}.
\]
Since \(r_n\leq2\), the definitions of \(V_\alpha\) and \(G_\alpha\) give
\[
\frac{V_\alpha(u_n)}n=
\begin{cases}
O(n^{-1}),&\alpha>2,\\
O((\log n)/n),&\alpha=2,\\
O\bigl(n^{-2(\alpha-1)/\alpha}\bigr),&1<\alpha<2,
\end{cases}
\]
and
\[
\frac{G_\alpha(n)}n=
\begin{cases}
O(n^{-1}),&\alpha>2,\\
O((\log n)/n),&\alpha=2,\\
O(n^{1-\alpha}),&1<\alpha<2.
\end{cases}
\]
When \(\alpha>2\), the square-root term in \eqref{eq:polynomial-deviation-optimized} is \(O(n^{-1/2})\), and the other terms are smaller because \((\alpha-1)/\alpha>1/2\). When \(\alpha=2\), the square-root term is \(O(\sqrt{(\log n)/n})\), the tail and truncation terms are \(O(n^{-1/2})\), and the age term is \(O((\log n)/n)\). When \(1<\alpha<2\), the square-root, tail, and truncation terms are all \(O(n^{-(\alpha-1)/\alpha})\), while
\[
n^{1-\alpha}=o\bigl(n^{-(\alpha-1)/\alpha}\bigr).
\]
This proves \eqref{eq:polynomial-alpha-greater-two}--\eqref{eq:polynomial-alpha-less-two}.

For the remarks, if \(\mathbb P_A(\Delta_1>t)\sim c\,t^{-\alpha}\), the same tail-integral identity gives
\[
\beta(u)\sim\frac{\alpha c}{\alpha-1}u^{1-\alpha}.
\]
Together with \(u_n\sim n^{1/\alpha}\) and \(r_n\to1/m\), this proves that the tail contribution can have leading order when \(1<\alpha<2\). At \(\alpha=2\), if \(v_2<\infty\), replacing \(V_2(u_n)\) by \(v_2\) makes the square-root term \(O(n^{-1/2})\), and all remaining terms are of this order or smaller. Moreover,
\[
u\beta(u)\leq
\mathbb E_A[\Delta_1^2\mathbf1_{\{\Delta_1>u\}}]\longrightarrow0,
\]
by integrability of \(\Delta_1^2\). Hence \(\beta(u_n)=o(n^{-1/2})\) for \(u_n=\lceil\sqrt n\rceil\), as asserted.
\end{proof}

\subsection{Proof of Corollary~\ref{cor:deviation-queue}}

\begin{proof}
We verify non-Markovianity, regeneration, and the cycle-length bounds, then apply Corollary~\ref{cor:deviation-exponential}.

The histories
\[
 (Q_0,Q_1,Q_2)=(0,0,1),
 \qquad (Q_0,Q_1,Q_2)=(0,1,1)
\]
have probabilities \((1-p)p\) and \(p^2\), respectively. In the first history, the customer has one service slot remaining. In the second, the first customer has departed and the remaining customer's service has not started. Consequently,
\[
 \begin{aligned}
 \mathbb P(Q_3=0\mid Q_0=0,Q_1=0,Q_2=1)&=1-p,\\
 \mathbb P(Q_3=0\mid Q_0=0,Q_1=1,Q_2=1)&=0.
 \end{aligned}
\]
Both histories end at \(Q_2=1\), so \((Q_t)\) is not Markov with respect to its natural filtration.

Each \(\tau_j\) is a stopping time for \(\sigma(Q_0,\ldots,Q_t)\) and for \(\mathcal G_t:=\sigma(I_0,\ldots,I_{t-1})\). At an empty epoch, the queue's future is determined by the unused arrivals \((I_{\tau_j+r})_{r\in\mathbb N_0}\), which are independent of \(\mathcal G_{\tau_j}\). The same service rule applies at every empty state, so the marked cycles are i.i.d. once their almost sure finiteness is established.

If \(I_0=0\), then \(\Delta_1=1\). Conditional on \(I_0=1\), let \(T\in\mathbb N\cup\{\infty\}\) be the number of customers served before the queue next becomes empty. The cycle lasts \(2T\) slots. Define the potential offspring counts by
\[
 X_1:=I_1,\qquad X_i:=I_{2i-2}+I_{2i-1},\qquad i\geq2.
\]
While the queue is busy, these count the new arrivals during each successive service. The first-slot arrival is the initial customer itself. Conditional on \(I_0=1\), these counts are independent, with \(X_1\sim\operatorname{Bernoulli}(p)\) and \(X_i\sim\operatorname{Binomial}(2,p)\) for \(i\geq2\). The unused arrivals define the counts even after the busy period ends.

For an independent \(V\sim\operatorname{Bernoulli}(p)\), put \(X'_1:=X_1+V\) and \(X'_i:=X_i\) for \(i\geq2\). The resulting counts are i.i.d. Binomial\((2,p)\). Their Galton--Watson total progeny \(T'\) dominates \(T\), and is finite almost surely because the offspring mean is \(2p<1\). If \(T'\geq k\), the first \(k-1\) customers have at least \(k-1\) offspring. Therefore, for each integer \(k\geq1\), the binomial Chernoff bound gives
\begin{align*}
 \mathbb P(T\geq k\mid I_0=1)
 &\leq\mathbb P(T'\geq k)\notag\\
 &\leq\mathbb P\bigl(\operatorname{Binomial}(2(k-1),p)\geq k-1\bigr)
 \leq e^{-2(k-1)\kappa_0}.
\end{align*}
For \(k=1\), the last bound is simply one. Taking \(k=\lfloor t/2\rfloor+1\) yields
\[
 \mathbb P(2T>t\mid I_0=1)\leq e^{2\kappa_0}e^{-\kappa_0t},\qquad t\geq0.
\]
For \(t\geq1\), only \(I_0=1\) contributes to \(\mathbb P(\Delta_1>t)\). For \(0\leq t<1\), \(\mathbb P(\Delta_1>t)=1\leq e^{2\kappa_0}e^{-\kappa_0t}\). This proves the stated exponential tail, finiteness of every cycle-length moment, and almost sure finiteness of all regeneration times.

To calculate the first two moments, let \(d:=1-2p\), and let \(H\) be the total progeny of a Galton--Watson process with offspring \(X\sim\operatorname{Binomial}(2,p)\). Its branching recursion
\[
H\ \overset{d}=\ 1+\sum_{i=1}^{X}H_i,
\]
where the \(H_i\) are independent copies of \(H\), independent of \(X\), gives
\[
\mathbb E[H]=1+2p\,\mathbb E[H]=\frac1d.
\]
Taking second moments in the same recursion and using \(\mathbb E[X(X-1)]=2p^2\), we obtain
\[
d\,\mathbb E[H^2]
=1+4p\,\mathbb E[H]+2p^2(\mathbb E[H])^2,
\qquad
\mathbb E[H^2]=\frac{1-2p^2}{d^3}.
\]
The initial customer has only \(J\sim\operatorname{Bernoulli}(p)\) offspring, so, conditional on \(I_0=1\), \(T\) has the distribution of \(1+JH\), with \(J\) and \(H\) independent. Hence
\[
\mathbb E[T\mid I_0=1]=1+\frac p d=\frac{1-p}d,
\]
and
\[
\mathbb E[T^2\mid I_0=1]
=1+\frac{2p}d+\frac{p(1-2p^2)}{d^3}
=\frac{(1-p)(1-2p+2p^2)}{d^3}.
\]
Since \(\Delta_1=1\) on \(\{I_0=0\}\) and \(\Delta_1=2T\) on \(\{I_0=1\}\),
\[
\begin{split}
\mathbb E[\Delta_1]
&=(1-p)+2p\,\mathbb E[T\mid I_0=1],\\
\mathbb E[\Delta_1^2]
&=(1-p)+4p\,\mathbb E[T^2\mid I_0=1].
\end{split}
\]
Substitution and simplification prove \eqref{eq:queue-cycle-moments}.

All the assumptions of Corollary~\ref{cor:deviation-exponential} now hold with \(C_{\mathrm e}=C_0\), \(\kappa=\kappa_0\), and \(\tau_1=0\). Its tail bounds give \eqref{eq:queue-beta} and \eqref{eq:queue-beta-un}, and its expected deviation and optimization statements give \eqref{eq:queue-deviation-unoptimized}--\eqref{eq:queue-deviation-optimized}. Finally, \(r_n\to1/m\) and
\[
\frac{v_2}{m}=\frac{1-2p+4p^2}{(1-2p)^2},
\]
so its asymptotic bound becomes \eqref{eq:queue-deviation-rate}.
\end{proof}

\subsection{Proof of Theorem~\ref{prop:initial-PACBayes}}

\begin{lemma}
\label{lem:stopped-pac-bayes}
For every deterministic \(s\in(0,1/u_n)\) and every \(\delta\in(0,1)\), with probability at least \(1-\delta\), simultaneously for every \(\rho\ll\mu\),
\begin{align}
s\mathbb E_{\theta\sim\rho}
\sum_{j=1}^{M_n}Y_{j,\theta}
\leq {}&
\mathbb E_{\theta\sim\rho}
\sum_{j=1}^{M_n}\Phi_s(Y_{j,\theta})
\notag\\
&+
M_n\log(1+s\beta(u_n))
+\KL(\rho\Vert\mu)
+\log\left(\frac{1}{\delta}\right).
\label{eq:stopped-pac-bayes}
\end{align}
\end{lemma}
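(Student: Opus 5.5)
The plan is the standard PAC-Bayes route: combine the stopped exponential supermartingale of Lemma~\ref{lem:stopped-exponential} with the Donsker--Varadhan variational formula. First, for each fixed \(\theta\), Lemma~\ref{lem:stopped-exponential} gives \(\mathbb E[\mathcal M_{M_n,\theta}(s)]\leq1\).

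The first step is to integrate this bound against the prior \(\mu\). Joint measurability of \((\theta,z)\mapsto\ell_\theta(z)\) makes \((\theta,\omega)\mapsto\mathcal M_{M_n,\theta}(s)\) jointly measurable, since \(M_n\leq n+1\) is a finite random index and each \(Y_{j,\theta}\) is a measurable function of \(\theta\) and the blocks. Tonelli's theorem then yields
\[
\mathbb E\left[\int_\Theta\mathcal M_{M_n,\theta}(s)\,\mu(d\theta)\right]\leq1,
\]
and the integrand is nonnegative. The prior \(\mu\) is fixed independently of the data, which is essential for this interchange.

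The second step applies Markov's inequality. With probability at least \(1-\delta\),
\[
\int_\Theta\mathcal M_{M_n,\theta}(s)\,\mu(d\theta)\leq1/\delta.
\]
On this event, the Donsker--Varadhan formula holds simultaneously for all \(\rho\ll\mu\):
\[
\mathbb E_{\theta\sim\rho}\log\mathcal M_{M_n,\theta}(s)-\KL(\rho\Vert\mu)\leq\log\int\mathcal M_{M_n,\theta}(s)\,\mu(d\theta)\leq\log(1/\delta).
\]
We then expand
\[
\log\mathcal M_{M_n,\theta}(s)=s\sum_{j\leq M_n}Y_{j,\theta}-\sum_{j\leq M_n}\Phi_s(Y_{j,\theta})-M_n\log(1+s\beta(u_n)).
\]
The last term does not depend on \(\theta\), so it comes out of the \(\rho\)-expectation unchanged, and rearranging gives \eqref{eq:stopped-pac-bayes}.

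The main obstacle is justifying the Donsker--Varadhan step when \(\mathbb E_{\rho}\) of the individual terms might be infinite. Here this is harmless. Since \(s<1/u_n\) and \(|Y_{j,\theta}|\leq u_n\), we have \(1+sY_{j,\theta}\geq 1-su_n>0\). Hence \(\Phi_s(Y_{j,\theta})\) and \(sY_{j,\theta}\) are bounded uniformly in \(\theta\) by quantities depending only on \(su_n\) and \(M_n\leq n+1\), so every \(\rho\)-integral is finite. The variational inequality can then be used in the form
\[
\int h\,d\rho\leq\KL(\rho\Vert\mu)+\log\int e^{h}\,d\mu
\]
for the bounded measurable function \(h=\log\mathcal M_{M_n,\cdot}(s)\). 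This is applied for fixed \(\omega\) on the good event, which makes the bound hold uniformly in \(\rho\).
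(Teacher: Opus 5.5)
Your proposal is correct and follows the paper's proof essentially step for step: integrate Lemma~\ref{lem:stopped-exponential} against $\mu$ via Tonelli, apply Markov's inequality, then use Donsker--Varadhan on the resulting event and expand $\log\mathcal M_{M_n,\theta}(s)$. Your extra remarks on joint measurability and on the boundedness of $\theta\mapsto\log\mathcal M_{M_n,\theta}(s)$, which follows from $|sY_{j,\theta}|\leq su_n<1$ and $M_n\leq n+1$, are details the paper leaves implicit. They correctly justify the finiteness and linearity of the $\rho$-expectations used in the final rearrangement.
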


\begin{proof}[Proof of Lemma~\ref{lem:stopped-pac-bayes}]
We average the stopped exponential bound under the prior. Markov's inequality then gives a common probability event, and the Donsker--Varadhan formula extends the bound to every posterior. This is the standard PAC--Bayes argument; see the proof of Theorem~2.1 in \cite{alquier_user-friendly_2024}.

Lemma~\ref{lem:stopped-exponential} and Tonelli's theorem give
\[
\mathbb E\left[\int_\Theta
\mathcal M_{M_n,\theta}(s)\,\mu(d\theta)\right]\leq1.
\]
Hence, with probability at least \(1-\delta\),
\[
\int_\Theta\mathcal M_{M_n,\theta}(s)\,\mu(d\theta)
\leq\delta^{-1}.
\]
On this event, the Donsker--Varadhan formula gives, simultaneously for every \(\rho\ll\mu\),
\begin{align*}
\mathbb E_{\theta\sim\rho}\log\mathcal M_{M_n,\theta}(s)
&\leq\KL(\rho\Vert\mu)
+\log\int_\Theta\mathcal M_{M_n,\theta}(s)\,\mu(d\theta)\\
&\leq\KL(\rho\Vert\mu)+\log(1/\delta).
\end{align*}
To identify the desired inequality, expand the logarithm:
\[
\begin{split}
\mathbb E_{\theta\sim\rho}\log\mathcal M_{M_n,\theta}(s)
={}&s\mathbb E_{\theta\sim\rho}\sum_{j=1}^{M_n}Y_{j,\theta}
-\mathbb E_{\theta\sim\rho}\sum_{j=1}^{M_n}\Phi_s(Y_{j,\theta})\\
&-M_n\log(1+s\beta(u_n)).
\end{split}
\]
Substituting this identity in the preceding bound and rearranging proves \eqref{eq:stopped-pac-bayes}.
\end{proof}

\begin{proof}[Theorem~\ref{prop:initial-PACBayes}]
We apply the stopped PAC--Bayes inequality, restrict its sum to the cycles completed by time \(n\), and restore the losses excluded by truncation. The case \(D_n=0\) is covered by convention. Throughout the proof, set
\[
s:=\frac{\lambda}{n},\qquad 0<s<\frac1{u_n}.
\]

\medskip
Assume that the event given in Lemma~\ref{lem:stopped-pac-bayes}, which has probability at least \(1-\delta\), holds. Its sum includes cycle \(M_n\), the first cycle whose end is after time \(n\). We bound its contribution before restricting the sum to the completed cycles. The correction for each completed block satisfies
\[
\Phi_s(Y_{j,\theta})
\leq\mathbf 1_{\{\Delta_j\leq u_n\}}q(s\Delta_j),
\qquad j<M_n.
\]
For cycle \(M_n\), truncation gives
\[
|Y_{M_n,\theta}|\leq u_n,
\qquad
\Phi_s(Y_{M_n,\theta})\leq q(su_n).
\]
In particular, removing its reward costs at most \(u_n\):
\begin{equation*}
\sum_{j=1}^{M_n-1}Y_{j,\theta}
=\sum_{j=1}^{M_n}Y_{j,\theta}-Y_{M_n,\theta}
\leq\sum_{j=1}^{M_n}Y_{j,\theta}+u_n.
\end{equation*}
Subtracting this cycle's reward and bounding its correction therefore costs at most
\[
u_n+\frac{q(su_n)}s=-\frac{\log(1-su_n)}s.
\]
Combining these estimates with \eqref{eq:stopped-pac-bayes} yields, simultaneously for every \(\rho\ll\mu\),
\begin{align}
\mathbb E_{\theta\sim\rho}\sum_{j=1}^{M_n-1}Y_{j,\theta}
\leq\frac1s\Bigg[
&\KL(\rho\Vert\mu)+\log\frac1\delta
+M_n\log(1+s\beta(u_n))\notag\\
&+\sum_{j=1}^{M_n-1}\mathbf 1_{\{\Delta_j\leq u_n\}}q(s\Delta_j)
-\log(1-su_n)\Bigg].
\label{eq:completed-truncated-sum}
\end{align}

\medskip
Since \(|\xi_{j,\theta}|\leq\Delta_j\),
\[
\begin{split}
\sum_{j=1}^{M_n-1}\xi_{j,\theta}
&=\sum_{j=1}^{M_n-1}Y_{j,\theta}
+\sum_{j=1}^{M_n-1}\xi_{j,\theta}\mathbf 1_{\{\Delta_j>u_n\}}\\
&\leq\sum_{j=1}^{M_n-1}Y_{j,\theta}
+\sum_{j=1}^{M_n-1}\Delta_j\mathbf 1_{\{\Delta_j>u_n\}}.
\end{split}
\]
The completed discrepancies equal the empirical risk error multiplied by the observed length:
\[
\sum_{j=1}^{M_n-1}\xi_{j,\theta}
=D_n\{R(\theta)-\widehat R_n^{\mathrm{reg}}(\theta)\}.
\]
Average over \(\rho\), substitute \eqref{eq:completed-truncated-sum}, and divide by \(D_n\). Recalling \(s=\lambda/n\) gives \eqref{eq:main-bound} on the same event, simultaneously for all \(\rho\ll\mu\).
\end{proof}

\subsection{Proof of Theorem~\ref{thm:expectation-oracle}}

\begin{proof}[Proof of Theorem~\ref{thm:expectation-oracle}]
We first control the forward deviation simultaneously over the grid with a common integrable remainder. This allows us to select \(\lambda\) from the data before comparing with a fixed posterior and a fixed grid point.

For \(\lambda\in\Lambda_n\), set \(s=\lambda/n\), and write
\[
C_n(s):=M_n\log(1+s\beta(u_n))+Q_n(\lambda),
\qquad
Z_\lambda:=
\int_\Theta\mathcal M_{M_n,\theta}(s)\,\mu(d\theta).
\]
Lemma~\ref{lem:stopped-exponential} and Tonelli's theorem imply \(\mathbb E[Z_\lambda]\leq1\). As in the proof of Theorem~\ref{prop:initial-PACBayes}, removing the cycle ending after time \(n\) gives, pathwise for every \(\theta\),
\begin{equation}
s\sum_{j=1}^{M_n-1}Y_{j,\theta}-C_n(s)
\leq\log\mathcal M_{M_n,\theta}(s).
\label{eq:cross-block-bound2}
\end{equation}
Indeed, the completed-cycle corrections are bounded by the sum in \(Q_n(\lambda)\), while
\[
sY_{M_n,\theta}-\Phi_s(Y_{M_n,\theta})
\geq-su_n-q(su_n)=\log(1-su_n).
\]
The Donsker--Varadhan formula, followed by restoration of the truncated rewards, therefore gives, for every \(\rho\ll\mu\) with finite KL divergence,
\begin{align*}
D_n\mathbb E_{\theta\sim\rho}R(\theta)
\leq{}&
\mathbb E_{\theta\sim\rho}\sum_{j=1}^{M_n-1}L_{j,\theta}
+\frac n\lambda
\left\{\KL(\rho\Vert\mu)+Q_n(\lambda)+\log Z_\lambda\right\}\\
&+M_n\beta(u_n)+T_{n,u}.
\end{align*}
Here we used \(\log(1+x)\leq x\) to bound the population tail term independently of \(\lambda\).

Define \(x_+:=\max\{x,0\}\) and
\[
W_n:=\max_{\lambda\in\Lambda_n}
\frac n\lambda\{\log Z_\lambda-p_n(\lambda)\}_+.
\]
Markov's inequality gives, for \(t\geq0\),
\[
\mathbb P\{\log Z_\lambda>p_n(\lambda)+t\}
\leq e^{-p_n(\lambda)-t}.
\]
Integrating this inequality and bounding the maximum by a sum yields
\begin{equation}
\mathbb E[W_n]
\leq\sum_{\lambda\in\Lambda_n}
\frac n\lambda e^{-p_n(\lambda)}
=\frac n{\lambda_+}.
\label{eq:grid-expectation-remainder}
\end{equation}
Consequently, simultaneously for every grid point and posterior,
\begin{align}
D_n\mathbb E_{\theta\sim\rho}R(\theta)
\leq{}&
\mathbb E_{\theta\sim\rho}\sum_{j=1}^{M_n-1}L_{j,\theta}
+\frac n\lambda
\left\{\KL(\rho\Vert\mu)+p_n(\lambda)+Q_n(\lambda)\right\}
\notag\\
&+M_n\beta(u_n)+T_{n,u}+W_n.
\label{eq:expectation-forward}
\end{align}
The quantities used below are integrable for the Gibbs posteriors and for each fixed comparator with finite KL divergence: \(M_n\leq n+1\), \(\lambda u_n/n<1\), and comparison with \(\mu\) in the Gibbs optimization gives
\[
0\leq\KL(\widehat\rho_\lambda\Vert\mu)
\leq\frac{\lambda D_n}{n}\leq\lambda.
\]
When \(D_n=0\), the Gibbs posterior is \(\mu\).

Apply \eqref{eq:expectation-forward} to \(\lambda=\widehat\lambda\) and \(\rho=\widehat\rho_{\widehat\lambda}\). By \eqref{eq:lambda-selection},
\[
D_n\mathbb E_{\theta\sim\widehat\rho_{\widehat\lambda}}R(\theta)
\leq\widehat{\mathcal C}_n(\widehat\lambda)
+M_n\beta(u_n)+W_n
\leq\widehat{\mathcal C}_n(\lambda)
+M_n\beta(u_n)+W_n
\]
for every fixed \(\lambda\in\Lambda_n\). The Gibbs variational identity further gives, for every fixed \(\rho\ll\mu\) with finite KL divergence,
\begin{align}
D_n\mathbb E_{\theta\sim\widehat\rho_{\widehat\lambda}}R(\theta)
\leq{}&
\mathbb E_{\theta\sim\rho}\sum_{j=1}^{M_n-1}L_{j,\theta}
+\frac n\lambda
\left\{\KL(\rho\Vert\mu)+p_n(\lambda)+Q_n(\lambda)\right\}
\notag\\
&+M_n\beta(u_n)+T_{n,u}+W_n.
\label{eq:expectation-after-optimality}
\end{align}
This argument uses no division by \(D_n\), so it also covers \(D_n=0\).

For the reverse comparison, apply \eqref{eq:cross-block-bound2} and the Donsker--Varadhan formula to the complementary loss \(1-\ell_\theta\), keeping \(\lambda\) and \(\rho\) fixed. The corresponding exponential integral \(Z_\lambda^{-}\) satisfies \(\mathbb E[Z_\lambda^{-}]\leq1\), and hence \(\mathbb E[\log Z_\lambda^{-}]\leq0\) by Jensen's inequality. The corrections \(Q_n(\lambda)\), \(M_n\beta(u_n)\), and \(T_{n,u}\) are unchanged. It follows that
\begin{equation}
\begin{split}
\mathbb E\left[
\mathbb E_{\theta\sim\rho}\sum_{j=1}^{M_n-1}L_{j,\theta}\right]
\leq{}&\mathbb E[D_n]\mathbb E_{\theta\sim\rho}R(\theta)\\
&+\frac n\lambda\left\{\KL(\rho\Vert\mu)
+\mathbb E[Q_n(\lambda)]\right\}
+\mathbb E[M_n]\beta(u_n)+\mathbb E[T_{n,u}].
\end{split}
\label{eq:expectation-reverse}
\end{equation}
Taking expectations in \eqref{eq:expectation-after-optimality} and using \eqref{eq:grid-expectation-remainder} and \eqref{eq:expectation-reverse} gives
\begin{align}
&\mathbb E\left[
D_n\mathbb E_{\theta\sim\widehat\rho_{\widehat\lambda}}R(\theta)\right]
\notag\\
&\leq\mathbb E[D_n]\mathbb E_{\theta\sim\rho}R(\theta)
+\frac n\lambda
\left\{2\KL(\rho\Vert\mu)+p_n(\lambda)
+2\mathbb E[Q_n(\lambda)]\right\}
\notag\\
&\quad+2\mathbb E[M_n]\beta(u_n)
+2\mathbb E[T_{n,u}]+\frac n{\lambda_+}.
\label{eq:expectation-weighted}
\end{align}

Finally, put \(X_n=\mathbb E_{\theta\sim\widehat\rho_{\widehat\lambda}}R(\theta)\). Since \(0\leq X_n\leq1\) and \(0\leq D_n\leq n\),
\[
\mathbb E[X_n]
=\frac1n\mathbb E[D_nX_n]
+\mathbb E\left[\left(1-\frac{D_n}n\right)X_n\right]
\leq\frac{\mathbb E[D_nX_n]}{\mathbb E[D_n]}
+\frac{n-\mathbb E[D_n]}n.
\]
Substitute \eqref{eq:expectation-weighted} and take the infimum over fixed \(\rho\ll\mu\) and \(\lambda\in\Lambda_n\). This proves \eqref{eq:expectation-oracle}.
\end{proof}

\subsection{Proof of Corollary~\ref{cor:oracle:pacbayes}}

\begin{proof}[Proof of Corollary~\ref{cor:oracle:pacbayes}]
Every grid point satisfies \(\lambda u_n/n\leq1/2\), so
\[
Q_n(\lambda)
\leq\frac{\lambda^2}{n^2}H_{n,u}
+\frac{2\lambda u_n}{n},
\]
using \(q(x)\leq x^2\) and \(-\log(1-x)\leq2x\) on \([0,1/2]\). The stopped-sum bounds \eqref{eq:wald-square-exponential} and \eqref{eq:wald-tail-exponential} give
\[
\mathbb E[H_{n,u}]\leq\mathbb E[M_n]v(u_n),
\qquad
\mathbb E[T_{n,u}]\leq\mathbb E[M_n]\beta(u_n).
\]
Moreover, for the grid in \eqref{eq:oracle-grid},
\[
p_n(\lambda)\leq\log K_n+(K_n-1)\log2
\leq\log(2nK_n),
\qquad
\frac1{\lambda_+}=\frac{2u_n}{n}.
\]
Apply Theorem~\ref{thm:expectation-oracle} and \eqref{eq:prior-mass-rate}. With
\[
b_n:=\frac{2r_nv(u_n)}n,
\]
we obtain
\begin{align}
&\mathbb E\left[
\mathbb E_{\theta\sim\widehat\rho_{\widehat\lambda}}R(\theta)\right]-R_*
\notag\\
&\leq\frac1n+
\frac n{\mathbb E[D_n]}
\left[
\min_{\lambda\in\Lambda_n}
\left\{\frac{a_n}{\lambda}+b_n\lambda\right\}
+\frac{6u_n}{n}+4r_n\beta(u_n)
\right]
+\frac{n-\mathbb E[D_n]}n.
\label{eq:adaptive-oracle-before-grid}
\end{align}
The term \(6u_n/n\) consists of \(4u_n/n\) from the two crossing-cycle corrections and \(2u_n/n\) from the grid expectation remainder.

We next bound the minimum over the grid. If \(b_n=0\), choose \(\lambda=\lambda_+\). Otherwise, the unconstrained minimizer is \(\lambda_*=\sqrt{a_n/b_n}\). Since \(r_n\leq2\), \(v(u_n)\leq u_n^2\), and \(a_n>1\),
\[
\lambda_*
\geq\frac{\sqrt{a_n n}}{2u_n}
\geq\frac1{2u_n}
\geq\min\Lambda_n.
\]
If \(\lambda_*\leq\lambda_+\), choose the grid point nearest to \(\lambda_*\) on the logarithmic scale. Adjacent grid points differ by a factor of two, so
\[
2^{-1/2}\leq\frac{\lambda}{\lambda_*}\leq2^{1/2},
\qquad
\frac{a_n}{\lambda}+b_n\lambda
\leq\frac3{\sqrt2}\sqrt{a_n b_n}.
\]
If \(\lambda_*>\lambda_+\), choose \(\lambda=\lambda_+\) and use \(b_n\lambda_+\leq\sqrt{a_n b_n}\). In both cases, and also when \(b_n=0\),
\[
\min_{\lambda\in\Lambda_n}
\left\{\frac{a_n}{\lambda}+b_n\lambda\right\}
\leq\frac3{\sqrt2}\sqrt{a_n b_n}+\frac{a_n}{\lambda_+}
\leq3\sqrt{\frac{r_nv_2a_n}{n}}
+\frac{2a_nu_n}{n}.
\]
Only this comparison value uses the unknown quantities \(a_n,b_n\); the implemented choice remains the empirical minimizer \(\widehat\lambda\).

Corollary~\ref{cor:deviation-exponential} supplies
\[
\beta(u_n)\leq C_{\mathrm e}e^{-\kappa u_n}
\left(u_n+\frac1{1-e^{-\kappa}}\right),
\qquad
n-\mathbb E[D_n]\leq
\frac{C_{\mathrm e}e^{-\kappa}}{(1-e^{-\kappa})^2}.
\]
Insert these inequalities and the grid bound into \eqref{eq:adaptive-oracle-before-grid} to obtain \eqref{eq:oracle-exponential-adaptive}.

For fixed \(c,d\) and a fixed regenerative law, \(u_n=O((\log n)^2)\), \(K_n=O(\log n)\), and \(a_n=O(d\log n)\). Also \(n/\mathbb E[D_n]=1+O(n^{-1})\), \(r_n\to1/m\), and \(v_2<\infty\). Finally,
\[
e^{-\kappa u_n}\leq e^{-\kappa(\log n)^2}
=n^{-\kappa\log n},
\]
so \(\beta(u_n)=o(n^{-a})\) for every fixed \(a>0\). The square-root term has order \(\sqrt{d\log n/n}\), the truncation and grid corrections have order \(d(\log n)^3/n\), and the renewal-age term is \(O(n^{-1})\). This proves \eqref{eq:oracle-exponential-adaptive-rate} and the stated second-order property.
\end{proof}

\subsection{Proof of Proposition~\ref{prop:moment-confidence}}

\begin{proof}[Proof of Proposition~\ref{prop:moment-confidence}]
We first control the number and lengths of the regeneration cycles. These bounds give an event on which \(D_n\geq n/2\). On that event, we estimate each term of Theorem~\ref{thm:regenerative-bernstein}.

The moment assumption and \(C\geq1\) imply
\begin{equation*}
\mathbb E_A[\Delta_1^2]\leq Cm^2,
\qquad
\beta(u_n)\leq\frac{Cm^{2+\varepsilon}}{u_n^{1+\varepsilon}}.
\end{equation*}
Since \(M_n\) is a bounded stopping time, Wald's identity \cite[Chapter~10, Section~14.2]{gut_probability_2005} gives
\[
\begin{split}
\mathbb E\left[\sum_{j=1}^{M_n}\Delta_j\right]
&=m\mathbb E[M_n],\\
\mathbb E\left[\sum_{j=1}^{M_n}\Delta_j^2\right]
&=\mathbb E_A[\Delta_1^2]\mathbb E[M_n].
\end{split}
\]
Every cycle before \(M_n\) ends by time \(n\), whereas cycle \(M_n\) ends after time \(n\). Thus
\[
\sum_{j=1}^{M_n}\Delta_j=D_n+\Delta_{M_n}\leq n+\Delta_{M_n},
\qquad
\Delta_{M_n}^2\leq\sum_{j=1}^{M_n}\Delta_j^2.
\]
Cauchy--Schwarz consequently gives
\[
\begin{split}
m\mathbb E[M_n]
&\leq n+\sqrt{\mathbb E_A[\Delta_1^2]\mathbb E[M_n]}\\
&\leq n+\frac m2\mathbb E[M_n]
+\frac{\mathbb E_A[\Delta_1^2]}{2m}.
\end{split}
\]
Rearranging and using \(n\geq48Cm/\delta\geq Cm\), we obtain
\[
\mathbb E[M_n]
\leq\frac{2n}{m}+\frac{\mathbb E_A[\Delta_1^2]}{m^2}
\leq\frac{3n}{m}.
\]
The same stopping-time identity now yields
\[
\begin{split}
\mathbb E\left[\sum_{j=1}^{M_n}\Delta_j^2\right]
&\leq3Cmn,\\
\mathbb E\left[\sum_{j=1}^{M_n}\Delta_j
\mathbf 1_{\{\Delta_j>u_n\}}\right]
&\leq\frac{3n\beta(u_n)}m.
\end{split}
\]

Apply Markov's inequality separately to the three quantities. Each resulting bound holds with probability at least \(1-\delta/4\). A union bound gives an event of probability at least \(1-3\delta/4\) on which
\begin{equation}
\begin{split}
M_n&\leq\frac{12n}{m\delta},\\
\sum_{j=1}^{M_n}\Delta_j^2&\leq\frac{12Cmn}{\delta},\\
\sum_{j=1}^{M_n}\Delta_j\mathbf 1_{\{\Delta_j>u_n\}}
&\leq\frac{12n\beta(u_n)}{m\delta}.
\end{split}
\label{eq:moment-length-event}
\end{equation}
If \(\beta(u_n)=0\), the third sum is zero almost surely. On this event, the length of cycle \(M_n\) satisfies
\[
\Delta_{M_n}^2\leq\frac{12Cmn}{\delta}\leq\frac{n^2}{4}.
\]
Since cycle \(M_n\) ends after time \(n\),
\[
D_n+\Delta_{M_n}>n,\qquad D_n\geq\frac n2.
\]

We now apply Theorem~\ref{thm:regenerative-bernstein} with \(\delta\) replaced by \(\delta/4\). Its event and the bounds in \eqref{eq:moment-length-event} hold together with probability at least \(1-\delta\). We estimate the terms on this intersection. The first term is bounded by
\[
\frac{n}{\lambda D_n}\log\frac8\delta
\leq\frac2\lambda\log\frac8\delta.
\]
For the block count and quadratic terms, use \(\log(1+x)\leq x\) and \(q(x)\leq x^2\) to obtain
\[
\begin{split}
\frac{nM_n}{\lambda D_n}
\log\left(1+\frac{\lambda\beta(u_n)}n\right)
&\leq\frac{24\beta(u_n)}{m\delta},\\
\frac n{\lambda D_n}
\sum_{j=1}^{M_n-1}\mathbf 1_{\{\Delta_j\leq u_n\}}
q\left(\frac{\lambda\Delta_j}n\right)
&\leq\frac{24C\lambda m}{\delta n}.
\end{split}
\]
Finally, we bound the correction for this additional cycle and the losses discarded by truncation:
\[
\begin{split}
-\frac n{\lambda D_n}\log\left(1-\frac{\lambda u_n}n\right)
&\leq\frac{4u_n}{n},\\
\frac1{D_n}\sum_{j=1}^{M_n-1}
\Delta_j\mathbf 1_{\{\Delta_j>u_n\}}
&\leq\frac{24\beta(u_n)}{m\delta}.
\end{split}
\]
Here the logarithmic and quadratic estimates are valid because \(\lambda u_n/n\leq1/2\). The chosen truncation level gives
\[
\frac{u_n}{n}=\left(\frac mn\right)^{(1+\varepsilon)/(2+\varepsilon)},
\qquad
\frac{\beta(u_n)}m
\leq C\left(\frac mn\right)^{(1+\varepsilon)/(2+\varepsilon)}.
\]
Adding the five estimates proves
\begin{align*}
\left|R(\theta)-\widehat R_n^{\mathrm{reg}}(\theta)\right|
\leq{}&\frac2\lambda\log\frac8\delta
+\frac{24C\lambda m}{\delta n}
\notag\\
&+\left(\frac{48C}{\delta}+4\right)
\left(\frac mn\right)^{(1+\varepsilon)/(2+\varepsilon)}.
\end{align*}
This is \eqref{eq:moment-confidence}.
\end{proof}

\subsection{Proof of Theorem~\ref{thm:moment-minimax}}

\begin{proof}[Theorem~\ref{thm:moment-minimax}]
We obtain the upper bound by choosing \(\lambda\) in Proposition~\ref{prop:moment-confidence}. For the lower bound, we use deterministic cycle lengths and Bernoulli losses that are constant within each cycle. We then compare both bounds over \(\mathfrak P(m,C,\varepsilon)\).

\medskip
\noindent\textbf{Upper bound.} Fix \(\theta\in\Theta\) and choose the deterministic value
\[
\lambda:=\min\left\{
\sqrt{\frac{\delta n}{12Cm}\log\frac8\delta},\frac n{2u_n}
\right\}.
\]
When the first expression is at most \(n/(2u_n)\), the first two terms of \eqref{eq:moment-confidence} have sum
\[
8\sqrt{\frac{3Cm}{\delta n}\log\frac8\delta}.
\]
Otherwise, \(\lambda=n/(2u_n)\), and the reciprocal term equals
\[
\frac{4u_n}{n}\log\frac8\delta,
\]
while the linear term is no larger than at the unconstrained minimizer. Consequently, this choice satisfies
\[
\frac2\lambda\log\frac8\delta+\frac{24C\lambda m}{\delta n}
\leq8\sqrt{\frac{3Cm}{\delta n}\log\frac8\delta}
+\frac{4u_n}{n}\log\frac8\delta.
\]
Since \(n\geq m\) and \((1+\varepsilon)/(2+\varepsilon)>1/2\),
\[
\frac{u_n}{n}
=\left(\frac mn\right)^{(1+\varepsilon)/(2+\varepsilon)}
\leq\sqrt{\frac mn}.
\]
Adding the remaining term of \eqref{eq:moment-confidence} therefore gives
\begin{equation}
\begin{split}
\left|R(\theta)-\widehat R_n^{\mathrm{reg}}(\theta)\right|
&\leq\left[
8\sqrt{\frac{3C}{\delta}\log\frac8\delta}
+4\log\frac8\delta+\frac{48C}{\delta}+4\right]\sqrt{\frac mn}\\
&=C_{C,\delta}\sqrt{\frac mn}.
\end{split}
\label{eq:moment-confidence-optimized}
\end{equation}
This inequality holds on the event from Proposition~\ref{prop:moment-confidence}, which has probability at least \(1-\delta\) and satisfies \(D_n\geq n/2\).

\medskip
\noindent\textbf{Lower bound.} Suppose that \(m\in\mathbb N\) and \(n\geq m\). Fix \(p\in[0,1]\) and restrict to cycles of deterministic length \(\Delta_j=m\). In cycle \(j\), let all observations share an independent Bernoulli label:
\[
Y_t=U_j\quad\text{for }\tau_j<t\leq\tau_{j+1},
\qquad U_j\sim\operatorname{Bernoulli}(p).
\]
Denote this regenerative law by \(P_p\). Its risk and length moment are
\[
R_{P_p}(\theta_0)=p,
\qquad\mathbb E_{A,P_p}[\Delta_1^{2+\varepsilon}]
=m^{2+\varepsilon}\leq Cm^{2+\varepsilon},
\]
so it belongs to \(\mathfrak P(m,C,\varepsilon)\). The observed path, including its possibly incomplete last cycle, determines exactly
\[
k_n:=\left\lceil\frac nm\right\rceil
\]
independent cycle labels. Conversely, these labels determine the path; the regeneration boundaries are deterministic. Thus estimating the long-run risk in this submodel is equivalent to estimating \(p\) from \(k_n\) independent Bernoulli observations.

Choose two possible values of \(p\):
\[
h_n:=\frac1{16\sqrt{k_n}},\qquad
p_-:=\frac12-h_n,\qquad p_+:=\frac12+h_n.
\]
Let \(P_-^{(n)}\) and \(P_+^{(n)}\) denote their laws on the observed sample and regeneration boundaries. The reduction to independent Bernoulli observations gives
\[
\begin{split}
\KL(P_-^{(n)}\Vert P_+^{(n)})
&=k_n\KL\bigl(\operatorname{Bernoulli}(p_-)
\Vert\operatorname{Bernoulli}(p_+)\bigr)\\
&=2k_nh_n\log\left(\frac{1+2h_n}{1-2h_n}\right).
\end{split}
\]
Using
\[
\log\left(\frac{1+x}{1-x}\right)\leq\frac{2x}{1-x},
\qquad 0\leq x<1,
\]
and \(h_n\leq1/16\), we obtain
\[
\KL(P_-^{(n)}\Vert P_+^{(n)})
\leq\frac{8k_nh_n^2}{1-2h_n}
\leq32k_nh_n^2=\frac18.
\]
Pinsker's inequality consequently yields
\begin{equation*}
\operatorname{TV}(P_-^{(n)},P_+^{(n)})
\leq\sqrt{\frac12\KL(P_-^{(n)}\Vert P_+^{(n)})}
\leq\frac14.
\end{equation*}

To turn this testing bound into an estimation bound, let any estimator \(\widetilde R_n\) define the test
\[
\varphi_n:=\mathbf 1_{\{\widetilde R_n\geq1/2\}}.
\]
A testing error implies an estimation error of at least \(h_n\), since
\[
\begin{split}
\{\varphi_n=1\}&\subseteq\{|\widetilde R_n-p_-|\geq h_n\},\\
\{\varphi_n=0\}&\subseteq\{|\widetilde R_n-p_+|\geq h_n\}.
\end{split}
\]
It follows that
\[
\begin{split}
\max_{\sigma\in\{-,+\}}
P_\sigma^{(n)}(|\widetilde R_n-p_\sigma|\geq h_n)
&\geq\frac12\left\{P_-^{(n)}(\varphi_n=1)
+P_+^{(n)}(\varphi_n=0)\right\}\\
&\geq\frac{1-\operatorname{TV}(P_-^{(n)},P_+^{(n)})}{2}\\
&\geq\frac38.
\end{split}
\]
For \(n\geq m\),
\[
k_n\leq\frac{2n}{m},\qquad
h_n\geq\frac1{16\sqrt2}\sqrt{\frac mn}.
\]
The two laws lie in the model class, and the estimator was arbitrary. Hence
\begin{equation}
\inf_{\widetilde R_n}
\sup_{P\in\mathfrak P(m,C,\varepsilon)}
\mathbb P_{A,P}\left(
|\widetilde R_n-R_P(\theta_0)|
\geq\frac1{16\sqrt2}\sqrt{\frac mn}
\right)
\geq\frac38.
\label{eq:minimax-lower}
\end{equation}

\medskip
\noindent\textbf{Minimax bounds.} Apply \eqref{eq:moment-confidence-optimized} with \(\theta=\theta_0\) to each law in \(\mathfrak P(m,C,\varepsilon)\). Its constants are the same for every law in this class. Setting the empirical risk to \(1/2\) when \(D_n=0\) does not affect the event where \(D_n\geq n/2\), so
\[
\sup_{P\in\mathfrak P(m,C,\varepsilon)}
\mathbb P_{A,P}\left(
|\widehat R_n^{\mathrm{reg}}(\theta_0)-R_P(\theta_0)|
>C_{C,\delta}\sqrt{\frac mn}\right)\leq\delta.
\]
This proves the upper inequality in \eqref{eq:minimax-match}. For the lower inequality, every threshold satisfying
\[
0\leq r<\frac1{16\sqrt2}\sqrt{\frac mn}
\]
also satisfies, by \eqref{eq:minimax-lower},
\[
\inf_{\widetilde R_n}\sup_{P\in\mathfrak P(m,C,\varepsilon)}
\mathbb P_{A,P}(|\widetilde R_n-R_P(\theta_0)|>r)
\geq\frac38>\delta.
\]
Such a threshold cannot satisfy the probability requirement in the definition of \(r_{n,\delta}(m,C,\varepsilon)\). Hence the lower inequality in \eqref{eq:minimax-match} follows.
\end{proof}

\subsection{Proof of Theorem~\ref{thm:regenerative-gaussian}}

\begin{proof}[Proof of Theorem~\ref{thm:regenerative-gaussian}]
We first couple the centered sums with Brownian motion. We then use the Bernstein inequality to control empirical centering, prove the variance estimator's convergence rate, and show that replacing the true variance by its estimate preserves the approximation. All limits below are almost sure unless stated otherwise. Recall that
\[
p=2+\varepsilon,
\qquad N_n=M_n-1.
\]
We use the strong law, the Marcinkiewicz--Zygmund strong law, and the law of the iterated logarithm in the forms given in \cite[Sections~6.6, 6.7 and~8.1]{gut_probability_2005}.

Observe by Tonelli's theorem that for every \(b>0\), 
\[
\begin{split}
\sum_{j=1}^{\infty}\mathbb P_A(\Delta_j>b j^{1/p})
&=\mathbb E_A\left[\sum_{j=1}^{\infty}
\mathbf 1_{\{\Delta_1^p>b^p j\}}\right]\\
&\leq b^{-p}\mathbb E_A[\Delta_1^p]<\infty.
\end{split}
\]
Then, by Borel--Cantelli lemma, applied to positive rational values of \(b\), implies
\[
\Delta_j=o(j^{1/p}).
\]
Taking a maximum over a fixed initial segment and its complement, and using \(M_n\leq n+1\), therefore yields
\begin{equation}
\max_{j\leq n+1}\Delta_j=o(n^{1/p}),
\qquad
0\leq n-D_n<\Delta_{M_n}=o(n^{1/p}).
\label{eq:gaussian-renewal-remainder}
\end{equation}
Further recall the inequalities
\[
D_n\leq n<D_n+\Delta_{M_n};
\]
which, along with the strong law of large numbers
\[
\frac{N_n}{n}\longrightarrow\frac1m,
\qquad
\frac{D_n}{n}\longrightarrow1.
\]

We next check the hypotheses of  \citet[Theorem 1]{bashtova_shashkin_heavy_2020}. Extend $S_{n,\theta}^{\text{full}}$ to continuous time by
\[
A_\theta(t):=S_{\lfloor t\rfloor,\theta}^{\mathrm{full}},
\qquad t\geq0.
\]
This step process is right-continuous and separable. Its cycle paths and lengths are i.i.d., since each is a measurable function of one regenerative block. The cycle reward has mean zero, and the absolute increment at any time within a cycle is bounded by its length:
\[
\mathbb E_A[\xi_{1,\theta}]=0,
\qquad
\sup_{0\leq t\leq\Delta_j}
\left|A_\theta(\tau_j+t)-A_\theta(\tau_j)\right|
\leq\Delta_j.
\]
This verifies both moment hypotheses of \citet[Theorem~1, p.~2]{bashtova_shashkin_heavy_2020}. The variance per unit time in that theorem is
\[
\frac{\mathbb E_A[\xi_{1,\theta}^2]}m=\sigma_\theta^2.
\]
It follows that the full-path sum admits a Brownian coupling satisfying
\[
\max_{0\leq k\leq n}
\left|S_{k,\theta}^{\mathrm{full}}-\sigma_\theta W_\theta(k)\right|
=o(n^{1/p}).
\]
We retain the joint law of the loss paths and regeneration times in this realization. Observe that removing the incomplete cycle changes the sum by at most the longest cycle, because
\[
S_{k,\theta}^{\mathrm{reg}}=S_{D_k,\theta}^{\mathrm{full}}.
\]
Consequently,
\[
\max_{0\leq k\leq n}
\left|S_{k,\theta}^{\mathrm{full}}-S_{k,\theta}^{\mathrm{reg}}\right|
\leq\max_{j\leq n+1}\Delta_j=o(n^{1/p}).
\]
The triangle inequality proves \eqref{eq:regenerative-strong-approximation} for both sums with the same Brownian motion.

\medskip
The variance estimator centers each block at the empirical risk. We use Theorem~\ref{thm:regenerative-bernstein} to control the difference between this empirical risk and \(R(\theta)\), which will then bound the change in the quadratic sum. Keep the truncation level in the statement and choose
\[
u_n=m(n/m)^{1/p},
\qquad \delta_n=(n+2)^{-2},
\qquad
\lambda_n=\min\left\{\sqrt{\frac nm\log(n+2)},\frac n{2u_n}\right\}.
\]
These choices satisfy the admissibility condition for every \(n\). Moreover,
\[
\sqrt{\frac nm\log(n+2)}\,\frac{u_n}{n}
=\sqrt{\log(n+2)}\left(\frac mn\right)^{1/2-1/p}
\longrightarrow0,
\]
so the constraint on \(\lambda_n\) is eventually inactive.

For each \(n\), apply inequality~\eqref{eq:regenerative-bernstein} in Theorem~\ref{thm:regenerative-bernstein} with \(\lambda=\lambda_n\), \(\delta=\delta_n\), and the truncation level \(u_n\) chosen above. The resulting bound holds with probability at least \(1-\delta_n\). Since \(\sum_n\delta_n<\infty\), Borel--Cantelli implies that the bound holds for all sufficiently large \(n\), almost surely. Also, \eqref{eq:gaussian-renewal-remainder} implies
\[
\frac{\max_{j\leq M_n}\Delta_j}{u_n}\longrightarrow0.
\]
Hence all completed cycles eventually satisfy \(\Delta_j\leq u_n\). The truncation indicators in \eqref{eq:regenerative-bernstein} then equal one, and its sum over longer completed cycles is zero. The first inequality in the calculation below is exactly the application of Theorem~\ref{thm:regenerative-bernstein} with the stated parameters. The second inequality uses only
\[
q(x)\leq x^2,\qquad -\log(1-x)\leq2x
\quad(0\leq x\leq1/2),
\qquad
\log(1+y)\leq y\quad(y\geq0):
\]
\begin{equation}
\begin{split}
\left|\widehat R_n^{\mathrm{reg}}(\theta)-R(\theta)\right|
\overset{\text{Theorem~\ref{thm:regenerative-bernstein}}}{\leq}{}&
\frac{n}{\lambda_nD_n}\Bigg[
\log\frac{2}{\delta_n}
+M_n\log\left(1+\frac{\lambda_n\beta(u_n)}n\right)\\
&\quad+\sum_{j=1}^{N_n}q\left(\frac{\lambda_n\Delta_j}n\right)
-\log\left(1-\frac{\lambda_nu_n}n\right)\Bigg]\\
\leq{}&\frac{n\log(2/\delta_n)}{\lambda_nD_n}
+\frac{M_n}{D_n}\beta(u_n)\\
&+\frac{\lambda_n}{nD_n}\sum_{j=1}^{N_n}\Delta_j^2
+\frac{2u_n}{D_n}.
\end{split}
\label{eq:gaussian-bernstein-centering}
\end{equation}
The strong law and the moment assumption give:
\[
\frac{M_n}{D_n}\longrightarrow\frac1m,
\qquad
\frac1{D_n}\sum_{j=1}^{N_n}\Delta_j^2
\longrightarrow\frac{\mathbb E_A[\Delta_1^2]}m
\leq C^{2/p}m,
\]
and
\[
\log(2/\delta_n)\leq3\log(n+2),
\qquad
\frac{\beta(u_n)}m\leq C\left(\frac mn\right)^{1-1/p},
\qquad
\frac{u_n}{n}=\left(\frac mn\right)^{1-1/p}.
\]
In particular, the following bounds hold eventually:
\[
D_n\geq n/2,
\qquad M_n/D_n\leq2/m,
\qquad
\frac1{D_n}\sum_{j=1}^{N_n}\Delta_j^2\leq2C^{2/p}m.
\]
Substituting these bounds into the final inequality in \eqref{eq:gaussian-bernstein-centering} gives
\begin{align}
\left|\widehat R_n^{\mathrm{reg}}(\theta)-R(\theta)\right|
&\leq(6+2C^{2/p})\sqrt{\frac{m\log(n+2)}n}
+(2C+4)\left(\frac mn\right)^{1-1/p}
\notag\\
&=O\!\left(\sqrt{\frac{\log(n+2)}n}\right).
\label{eq:gaussian-centering-rate}
\end{align}
Here the law is fixed and \(1-1/p>1/2\). We use the resulting almost-sure estimate \eqref{eq:gaussian-centering-rate} below to control the change in the quadratic sum caused by empirical centering. %The sample-size condition in Proposition~\ref{prop:moment-confidence} would fail for our summable sequence \(\delta_n\).

\medskip
First consider the quadratic sum centered at the true risk. Define
\[
Q_{n,\theta}:=\sum_{j=1}^{N_n}\xi_{j,\theta}^2,
\qquad
e_{n,\theta}:=\widehat R_n^{\mathrm{reg}}(\theta)-R(\theta).
\]
The law of the iterated logarithm for the cycle lengths, evaluated at \(N_n\sim n/m\), gives
\[
D_n-mN_n=O\!\left(\sqrt{n\log\log(n+e^e)}\right),
\qquad
\frac{N_n}{D_n}-\frac1m
=O\!\left(\sqrt{\frac{\log\log(n+e^e)}n}\right).
\]
Both bounds also hold when the length variance is zero. For \(2<p<4\), the centered variables \(\xi_{j,\theta}^2-v_\theta^2\) have finite \(p/2\)-moment, so the Marcinkiewicz--Zygmund strong law applies. For \(p\geq4\), they have finite variance, so the law of the iterated logarithm applies instead. Together, these give
\[
Q_{n,\theta}-v_\theta^2N_n=
\begin{cases}
o(n^{2/p}),&2<p<4,\\
O\!\left(\sqrt{n\log\log(n+e^e)}\right),&p\geq4.
\end{cases}
\]
The exact decomposition
\[
\frac{Q_{n,\theta}}{D_n}-\sigma_\theta^2
=\frac{Q_{n,\theta}-v_\theta^2N_n}{D_n}
+v_\theta^2\left(\frac{N_n}{D_n}-\frac1m\right)
\]
therefore yields
\[
\frac{Q_{n,\theta}}{D_n}-\sigma_\theta^2=
\begin{cases}
o(n^{2/p-1}),&2<p<4,\\
O\!\left(\sqrt{\log\log(n+e^e)/n}\right),&p\geq4.
\end{cases}
\]
In the first case, the renewal-count error is smaller than the stated rate because
\[
n^{1/2-2/p}\sqrt{\log\log(n+e^e)}\longrightarrow0.
\]

It remains to replace the true-risk centering by empirical centering. For each block,
\[
\Delta_j\widehat R_n^{\mathrm{reg}}(\theta)-L_{j,\theta}
=\xi_{j,\theta}+\Delta_je_{n,\theta}.
\]
Squaring and summing gives
\begin{equation*}
\widehat Q_{n,\theta}-Q_{n,\theta}
=2e_{n,\theta}\sum_{j=1}^{N_n}\Delta_j\xi_{j,\theta}
+e_{n,\theta}^2\sum_{j=1}^{N_n}\Delta_j^2.
\end{equation*}
The block bound and the strong law imply
\[
|\xi_{j,\theta}|\leq\Delta_j,
\qquad
\sum_{j=1}^{N_n}\Delta_j^2=O(n).
\]
We now substitute the estimate obtained from Theorem~\ref{thm:regenerative-bernstein} into this variance calculation. Since \(e_{n,\theta}\) is the empirical risk error, \eqref{eq:gaussian-centering-rate} says that, almost surely,
\[
|e_{n,\theta}|=O\!\left(\sqrt{\frac{\log(n+2)}n}\right),
\qquad
e_{n,\theta}^2=O\!\left(\frac{\log(n+2)}n\right).
\]
Inserting these two bounds and \(\sum_{j=1}^{N_n}\Delta_j^2=O(n)\) into the preceding expansion gives
\[
\begin{split}
|\widehat Q_{n,\theta}-Q_{n,\theta}|
&\leq(2|e_{n,\theta}|+e_{n,\theta}^2)
\sum_{j=1}^{N_n}\Delta_j^2\\
&\overset{\eqref{eq:gaussian-centering-rate}}{=}
O\!\left(n\sqrt{\frac{\log(n+2)}n}+\log(n+2)\right)\\
&=O\!\left(\sqrt{n\log(n+2)}\right).
\end{split}
\]
Dividing by \(D_n\sim n\) proves \eqref{eq:regenerative-variance-rate}. For \(2<p<4\), this extra error is negligible because
\[
n^{1/2-2/p}\sqrt{\log(n+2)}\longrightarrow0.
\]
In particular,
\[
\widehat\sigma_{n,\theta}\longrightarrow\sigma_\theta>0.
\]

\medskip
The normalization error is the variance-estimation error multiplied by the maximum Brownian fluctuation. For \(T>0\) and \(x>0\), the exponential inequality for Brownian motion \cite[Chapter~II, Proposition~1.8, p.~55]{revuz1999continuous}, applied to \(W_\theta\) and \(-W_\theta\), gives
\[
\mathbb P\left(\sup_{0\leq t\leq T}|W_\theta(t)|>x\right)
\leq4e^{-x^2/(2T)}.
\]
Apply this at dyadic times with
\[
T=2^r,
\qquad x=3\sqrt{2^r\log r},
\qquad r\in\{2,3,\ldots\}.
\]
The resulting probabilities are summable. Borel--Cantelli and comparison with the nearest dyadic time imply
\[
\frac{\max_{0\leq k\leq n}|W_\theta(k)|}{\sqrt n}
=O\!\left(\sqrt{\log\log(n+e^e)}\right).
\]
For either centered sum, the triangle inequality separates the coupling and variance-estimation errors:
\begin{align*}
\max_{0\leq k\leq n}
\left|\frac{S_{k,\theta}^{H}}{\widehat\sigma_{n,\theta}\sqrt n}
-\frac{W_\theta(k)}{\sqrt n}\right| &\quad\leq
\frac{\max_{0\leq k\leq n}|S_{k,\theta}^{H}-\sigma_\theta W_\theta(k)|}
{\widehat\sigma_{n,\theta}\sqrt n}\\
&\qquad\qquad +\left|\frac{\sigma_\theta}{\widehat\sigma_{n,\theta}}-1\right|
\frac{\max_{0\leq k\leq n}|W_\theta(k)|}{\sqrt n}.
\end{align*}
The strong approximation \eqref{eq:regenerative-strong-approximation} bounds the first term by \(o(n^{1/p-1/2})\). Positivity of the limiting variance and \eqref{eq:regenerative-variance-rate} bound the second term by
\[
\begin{cases}
o(n^{2/p-1})\,O\!\left(\sqrt{\log\log(n+e^e)}\right),&2<p<4,\\
O\!\left(n^{-1/2}\sqrt{\log(n+2)\log\log(n+e^e)}\right),&p\geq4.
\end{cases}
\]
In both cases, division by \(n^{1/p-1/2}\) makes the bound tend to zero. This proves \eqref{eq:regenerative-studentized-approximation}.

Next replace the sample-size normalization by the observed quadratic sum. The relation
\[
\sqrt{\widehat Q_{n,\theta}}
=\widehat\sigma_{n,\theta}\sqrt{D_n}
\]
changes the normalized sum by the factor \(\sqrt{n/D_n}\). By \eqref{eq:gaussian-renewal-remainder},
\[
\left|\sqrt{\frac n{D_n}}-1\right|
=\frac{n-D_n}{\sqrt{D_n}(\sqrt n+\sqrt{D_n})}
=o(n^{1/p-1}).
\]
The additional error is therefore bounded by
\[
\begin{split}
&\sqrt{\frac n{D_n}}\,o(n^{1/p-1/2})
+\left|\sqrt{\frac n{D_n}}-1\right|
\frac{\max_{0\leq k\leq n}|W_\theta(k)|}{\sqrt n}\\
&\hspace{5em}=o(n^{1/p-1/2}).
\end{split}
\]

Finally, pass from integer times to all \(t\in[0,1]\). The same exponential inequality on each unit interval and a union bound yield
\[
\begin{split}
&\mathbb P\left(
\max_{0\leq k\leq n}\sup_{0\leq u\leq1}
|W_\theta(k+u)-W_\theta(k)|>3\sqrt{\log(n+2)}
\right)\\
&\qquad\leq4(n+1)(n+2)^{-9/2}.
\end{split}
\]
Using Borel--Cantelli lemma then gives the uniform interpolation bound
\[
\sup_{0\leq t\leq1}
\frac{|W_\theta(\lfloor nt\rfloor)-W_\theta(nt)|}{\sqrt n}
=O\!\left(\sqrt{\frac{\log(n+2)}n}\right)
=o(n^{1/p-1/2}).
\]
This proves \eqref{eq:regenerative-functional-approximation}. At \(t=1\), the identities
\[
S_{n,\theta}^{\mathrm{reg}}
=D_n\{R(\theta)-\widehat R_n^{\mathrm{reg}}(\theta)\},
\qquad
\frac{W_\theta(n)}{\sqrt n}\sim\mathcal N(0,1)
\]
give \eqref{eq:regenerative-studentized-clt} by Slutsky's theorem. Brownian scaling makes the comparison process
\[
t\longmapsto\frac{W_\theta(nt)}{\sqrt n}
\]
a standard Brownian motion in distribution for each \(n\). The vanishing uniform approximation error therefore also proves the stated convergence in \(D[0,1]\).
\end{proof}

\bibliographystyle{plainnat}
\bibliography{references}

\end{document}